\def\tsc#1{\csdef{#1}{\textsc{\lowercase{#1}}\xspace}}
\newtheorem{theorem}{Theorem}[section] 
\newtheorem{definition}[theorem]{Definition} 
\newtheorem{lemma}[theorem]{Lemma} 
\newtheorem{remark}[theorem]{Remark}
\newtheorem{assumption}[theorem]{Assumption}
\newenvironment{proof}{{\noindent\it Proof.} }{\hfill $\square$\par}
\newcommand{\tabincell}[2]{\begin{tabular}{@{}#1@{}}#2\end{tabular}}%
\DeclareMathOperator*{\argmin}{arg\,min}
\begin{document}
\let\WriteBookmarks\relax
\def\floatpagepagefraction{1}
\def\textpagefraction{.001}
\shorttitle{A Bi-variant Diffeomorphic Image Registration with Relaxed Constraints}    
\shortauthors{Yanyan Li et al.}  
\title [mode = title]{A Bi-variant Variational Model for Diffeomorphic Image Registration with Relaxed Jacobian Determinant Constraints}  


%

\author[1,5]{Yanyan Li}




\credit{Conceptualization, Software, Validation, Writing-Original Draft, Writing-Review and Editing}

\affiliation[1]{organization={School of Mathematics and Computational Science},
            addressline={Xiangtan University}, 
            city={Xiangtan},
            postcode={411105}, 
            state={Hunan},
            country={China}}

\author[2]{Ke Chen}
\credit{Conceptualization, Validation, Methodology, Writing-Review and Editing}


\affiliation[2]{organization={Department of Mathematics and Statistics},
            addressline={University of Strathclyde}, 
            city={Glasgow},
            postcode={G1 1XH}, 
            country={United Kingdom}}

\author[3]{Chong Chen}
\credit{Conceptualization, Validation, Methodology, Writing-Review and Editing}

\affiliation[3]{organization={LSEC, ICMSEC, Academy of Mathematics and Systems Science},
            addressline={Chinese Academy of Sciences}, 
            city={Beijing},
            postcode={100190}, 
            state={Beijing},
            country={China}}

\author[1,4]{Jianping Zhang}[orcid=0000-0001-7141-9464]
\cormark[1]
\ead{jpzhang@xtu.edu.cn}
\credit{Conceptualization, Supervision, Methodology, Writing-Review and Editing, Funding acquisition}
\affiliation[4]{organization={National Center for Applied Mathematics in Hunan},
            addressline={Xiangtan University}, 
            city={Xiangtan},
            postcode={411105}, 
            state={Hunan},
            country={China}}

\affiliation[5]{organization={Hunan Key Laboratory for Computation and Simulation in Science and Engieering},
            addressline={Xiangtan University}, 
            city={Xiangtan},
            postcode={411105}, 
            state={Hunan},
            country={China}}
\cortext[1]{Corresponding author}



\begin{abstract}
Diffeomorphic registration is a widely used technique for finding a smooth and invertible transformation between two coordinate systems, which are measured using template and reference images. The point-wise volume-preserving constraint $\det(\nabla\bm{\varphi}(\bm{x})) =1$ is effective in some cases, but may be too restrictive in others, especially when local deformations are relatively large. This can result in poor matching when enforcing large local deformations. In this paper, we propose a new bi-variant diffeomorphic image registration model that introduces a soft constraint on the Jacobian equation $\det(\nabla\bm{\varphi}(\bm{x})) = f(\bm{x}) > 0$. This allows local deformations to shrink and grow within a flexible range $0<\kappa_{m}<\det(\nabla\bm{\varphi}(\bm{x}))<\kappa_{M}$. The Jacobian determinant of transformation is explicitly controlled by optimizing the relaxation function $f(\bm{x})$. To prevent deformation folding and improve the smoothness of the transformation, a positive constraint is imposed on the optimization of the relaxation function $f(\bm{x})$, and a regularizer is used to ensure the smoothness of $f(\bm{x})$. Furthermore, the positivity constraint ensures that $f(\bm{x})$ is as close to one as possible, which helps to achieve a volume-preserving transformation on average. We also analyze the existence of the minimizer for the variational model and propose a penalty-splitting algorithm with a multilevel strategy to solve this model. Numerical experiments demonstrate the convergence of the proposed algorithm and show that the positivity constraint can effectively control the range of relative volume without compromising the accuracy of the registration. Moreover, the proposed model generates diffeomorphic maps for large local deformations and outperforms several existing registration models in terms of performance.
\end{abstract}


\begin{keywords}
Diffeomorphic image registration \sep Jacobian soft constraints \sep Bi-variant optimization \sep Multilevel strategy \sep Alternative iterative minimization
\end{keywords}

\maketitle

\section{Introduction}\label{sec:Section1}
Image registration is the process of finding the optimal transformation between two or more images to establish a geometric correspondence. During the last decades, many registration models have been developed to obtain reasonable transformations, such as the total variation (TV) model \cite{Frohn,Lars2007}, the modified total variation (MTV) model \cite{MTV2010}, the total fractional variation (TFV) model \cite{JPZhang2015,HHan2020a}, the diffusion model \cite{fischer2002fast,kostler2008}, the curvature model \cite{fischer2003curvature,FISCHER2004107,henn2006full}, the elastic model \cite{Fischler}, the viscous fluid model \cite{article1996,Emiliano2003} and the optical flow model \cite{horndetermining1981}. Although these models can generate smooth transformations for small deformations, they may not all be effective for large deformations. However, some models can be guaranteed to generate diffeomorphic mappings. 
 
The diffeomorphic demons algorithm works in the space of diffeomorphisms to enforce transformation invertibility. Vercauteren et al. \cite{Berlin2006,Vercauteren2008} demonstrated that the demons algorithm \cite{thirion1998image} can be extended to represent the entire spatial transformation in the log-domain. Yeo et al. \cite{Yeo2010} proposed the spherical demon method based on the mean curvature and average convexity to transform the surface of the image. In addition, large deformation diffeomorphic metric mapping (LDDMM) \cite{paul1998,Beg_2005} is a popular registration framework, which can handle large deformations and generate diffeomorphic transformations. Based on the LDDMM framework and shape analysis, Charon et al. proposed a generalization of registration algorithms to non-oriented shapes~\cite{Charon_2013} and an extension of diffeomorphic registration to allow a morphological analysis of data structures with inherent density variations and imbalances~\cite{Hsieh_2021}. C. Chen \cite{Chen_2021} employed optimal diffeomorphic transportation that combines the Wasserstein distance and the flow of diffeomorphisms, to build a joint image reconstruction and motion estimation model, which is suitable for spatio-temporal imaging involving mass-preserving large diffeomorphic deformations. Bauer et al.~\cite{Bauer_2015} proposed diffeomorphic density matching by optimal information transport and applied it to medical image registration, building on connections between the Fisher-Rao information metric on the space of probability densities and right-invariant metrics on the infinite-dimensional diffeomorphism manifold. Feydy et al.~\cite{Feydy_2017} introduced a non-local geometric similarity measure based on unbalanced optimal transport methods and fast entropic solvers to achieve robust and simple diffeomorphic registration. We also refer the readers to \cite{ashburner2011,Beg_2005,CChen2019,CChen2018,HHan2021a,Joshi2000,MangA,Risser2011} for more details. 

Several registration models have been developed to avoid mesh folding by restricting the Jacobian determinant quantity $\det(\nabla\bm{\varphi})$ of the transformation $\bm{\varphi}$. Haber and Modersitzki \cite{HaberNumerical} proposed an elastic registration model subject to point-wise volume-preserving by restricting $ \det(\nabla\bm{\varphi}) =1$, which can ensure that the mapping is diffeomorphic. Although such incompressibility has important applications in some fields, it is not necessary or reasonable in others. To generate a more reasonable and practical mapping, Haber and Modersitzki then proposed to relax the Jacobian determinant constraint into a certain interval $0<\kappa_{m}<\det(\nabla\bm{\varphi})<\kappa_{M}$ \cite{Haber2007}. Lam and Lui \cite{KCLam2014,lamquasiconformal2015} introduced a novel model that uses a Beltrami coefficient term to obtain diffeomorphic image registration using quasi-conformal maps. One of the most important features of this method is that the deformed Jacobian determinant can be represented by the Beltrami coefficient. As pointed out, if the infinite norm of the Beltrami coefficient is less than $1$, then the Jacobian determinant is greater than $0$, so that such a model can deal with large deformation.

It is possible to reformulate diffeomorphic image registration into a variational problem with an additional penalty term related to the deformed Jacobian determinant. Burger et al.~\cite{Hyperelastic2013} monitored $\det(\nabla\bm{\varphi}) $ using an unbiased function in the hyper-elastic model to guarantee diffeomorphic deformation. R{\"u}haak et al.\,\cite{Ruhaak2017} proposed a volume penalty function to control the change of $\det(\nabla\bm{\varphi}) $, which prevented the formation of foldings by keeping the $\det(\nabla\bm{\varphi}) $ positive. Yanovsky et al. \cite{Yanovskyarticle,Yanovsky2007Log} used the symmetric Kullback--Leibler (sKL) distance to quantify the deformation for obtaining an unbiased diffeomorphic mapping. Other works have also involved controlling $\det(\nabla\bm{\varphi})$ by penalty functions \cite{Alessa2021,Daoping2018,Zhang2022}.

Generally, there are three main strategies to obtain diffeomorphic maps. The first approach is to search for deformations on the diffeomorphism manifold, which is computationally efficient but does not guarantee the geometric properties of the deformation field and may lead to nearly non-diffeomorphic deformations. The second method is to impose additional constraints to prevent non-diffeomorphic mappings, such as $\det(\nabla\bm{\varphi})>0$. This approach can explicitly control the deformation and obtain diffeomorphic mappings without manual intervention. However, it may cause either over-preservation of volume or large volume changes, resulting in mismatches or inaccurate registration. The third approach is to indirectly control $\det(\nabla\bm{\varphi})$ through the Beltrami coefficient (or Beltrami-like in 3D). This method is capable of dealing with large deformations well and producing smooth deformations. However, it works with the Beltrami coefficient and the deformation simultaneously, making the energy functional too complex and resulting in high computational cost, and the convergence of the method cannot be guaranteed.

Recently, deep learning-based approaches have attracted considerable attention in the field of image registration. Y. Guo et al. \cite{GUO2023} developed a framework that uses quasi-conformal geometry and convolutional neural networks to automatically detect and register the cortical surfaces of the brain. D. Wei et al. \cite{Wei_2022} proposed a recurrent deep neural network for the registration of infant brain MR images. T. Mok et al. \cite{Mok_2020} proposed a novel and efficient unsupervised symmetric image registration method that maximizes the similarity between images in the space of diffeomorphic maps and estimates forward and inverse transformations simultaneously.

Deep learning-based methods exploit the inductive capabilities of neural networks on datasets, thereby shifting the iterative optimization of traditional methods to the deep learning training process. Consequently, deep learning-based image registration methods have outperformed traditional methods in terms of registration efficiency. However, these methods lack rigorous mathematical theoretical support, which limits their interpretability. In addition, they have limited capacity to register large deformations and do not guarantee the generation of diffeomorphic transformations \cite{Dalca_2018}. Conversely, traditional image registration methods offer rigorous mathematical theoretical support and employ precise mathematical expressions to guide the registration process. They have also demonstrated success in registering large deformations and preserving the diffeomorphism. Despite numerous research achievements in traditional registration methods, their registration accuracy and efficiency (especially for 3D problems) still fail to meet the needs of users. The development of a novel diffeomorphic registration model and the design of a highly accurate and fast registration algorithm still remain difficult and challenging.

The purpose of this work is to reformulate a new bi-variant diffeomorphic registration approach that utilizes a variational minimization model and to demonstrate the existence of a minimizer for the variational model. This approach produces a smooth and reversible transformation and can be effectively applied to the registration of large local deformations. Our contributions are summarized as follows.
\begin{itemize} 
\item[1.] We propose a novel bi-variant diffeomorphic image registration model. It incorporates a penalty term $\int_{\Omega}\phi(f(\bm{x}))d\bm{x}$ and a soft constraint related to the Jacobian equation $\det(\nabla{\bm{\varphi}}(\bm{x})) = f(\bm{x})>0$. The penalty term can optimize the unknown $ f(\bm{x})$ in a positive relaxation function space. The constraint guarantees the diffeomorphism of the deformation by the positive relaxation function \( f(\bm{x}) \) and allows local deformations to expand and contract in a flexible range without compromising the accuracy of the registration for large local deformations.

\item[2.] To automatically optimize $f(\bm{x})$, we propose incorporating a positive penalty term $\int_{\Omega}\phi(f(\bm{x}))d\bm{x}$ into the registration model. The control function $\phi(\cdot)$ is utilized to enforce $f(\bm{x})>0$ as closely as possible to one, thereby facilitating the achievement of a volume-preserving transformation on average. Additionally, we employ a regularization term $\int_{\Omega} \|\nabla f(\bm{x})\|^2 d\bm{x}$ to enhance the smoothness of $f(\bm{x})$, which in turn indirectly improves the smoothness of the transformation ${\bm{\varphi}}(\bm{x})$. 

\item [3.] We conduct a theoretical analysis to demonstrate the existence of an optimal solution for the bi-variant diffeomorphic image registration model that we have proposed.

\item[4.] We propose a penalty-splitting algorithm that utilizes a multilevel strategy to solve the bi-variant variational model. The main aim of this approach is to prevent the algorithm from getting trapped in a local minimum, reduce the computational cost, and improve the model's ability to capture large local deformations. The results of numerical experiments demonstrate the convergence of the proposed algorithm. By incorporating the penalty term $\int_{\Omega}\phi(f(\bm{x}))d\bm{x}$, we can effectively reduce or eliminate the grid folding ratio and maintain the volume of deformation on average. Moreover, our proposed method outperforms state-of-the-art diffeomorphic registration models in accurately handling large deformations.
\end{itemize}

The rest of the paper is organized as follows. In \Cref{sec:Section2}, we review several popular models related to diffeomorphic image registration. In \Cref{sec:Section3}, we give the mathematical analysis and the solving algorithm of the newly proposed registration model. The numerical implementation of the proposed model is presented in \Cref{sec:Section4}. We present 2D and 3D examples to evaluate the performance of our approach in \Cref{sec:Section5}. Finally, we summarize this work in \Cref{sec:Section6}.

\section{Reviews}\label{sec:Section2} 
Given a pair of images $T(\cdot),R(\cdot): \Omega\subset\mathbb{R}^d\rightarrow\mathbb{R}$ ($ d=2 $ or $ 3 $), the aim of image registration is to seek a suitable transformation $\bar{\bm{\varphi}}(\cdot): \mathbb{R}^d\rightarrow\mathbb{R}^d$ that satisfies $T(\bar{\bm{\varphi}}(\bm{x}))\approx  R(\bm{x})$. The updated transformation $\bar{\bm{\varphi}}(\bm{x})$ will be described by adding the displacement field $\bm{u}(\bm{x})$ to the current transformation $\bm{\varphi}(\bm{x})$, that is, $\bar{\bm{\varphi}}(\bm{x})=\bm{\varphi}(\bm{x})+\bm{u}(\bm{x})$. 
To make the deformed image $T\big(\bar{\bm{\varphi}}(\bm{x})\big)$ closer to the reference image $R(\bm{x})$, the similarity measure $\mathcal{D}(\bm{u})$ of the image registration can be minimized as
\begin{equation}\label{eq:eqSSD1}
\min_{\bm{u}}\left\{\mathcal{D}(\bm{u}):=\frac{1}{2} \int_{\Omega}\left[T\big(\bm{\varphi}(\bm{x})+\bm{u}(\bm{x})\big)-R(\bm{x})\right]^{2} d \bm{x}\right\}.
\end{equation}

A simple optimization of \eqref{eq:eqSSD1} leads to an ill-posed problem with unstable and non-smooth solutions \cite{HaberNumerical}. Hence a regularizer $\mathcal{F}(\cdot)$ is added to construct the well-posed problem
\begin{equation}\label{eq:eqmodel1}
\min_{\bm{u}} \big\{\mathcal{J}(\bm{u}):=\mathcal{D}(\bm{u})+\tau \mathcal{F}(\bm{u})\big\},
\end{equation}
where $\tau>0$ is a constant to balance $\mathcal{D}(\bm{u})$ and $\mathcal{F}(\bm{u})$. We refer the readers to \cite{collignon1995,hermosillo2002variational,MooreQuality} for many classical similarity measures and to \cite{Chumchob2011,fischer2002fast,fischer2003curvature,Fischler,JPZhang2015,ZHANG2016,zhang_VectorialMinimized_2022} for the popular regularizers in image registration.

The diffeomorphism of the mapping $\bar{\bm{\varphi}}=\bm{\varphi}+\bm{u}$ is an important requirement in image registration, and its necessary condition is $\mathcal{K}(\bm{u}(\bm{x}))=\det(\nabla\bar{\bm{\varphi}}) > 0$. To ensure this, a natural approach is to impose a positivity constraint on $\mathcal{K}(\bm{u}(\bm{x}))$ when minimizing functional $\mathcal{J}(\bm{u})$, thus preventing non-diffeomorphic mapping $\bar{\bm{\varphi}}$. Therefore, the constrained model can be written as 
\begin{equation}\label{eq:eqCop}
\begin{split}
&\min_{\bm{u}} \big\{\mathcal{J}(\bm{u})=\mathcal{D}(\bm{u})+\tau \mathcal{F}(\bm{u})\big\},\\
&\text{s.t.}\quad \mathcal{K}(\bm{u}(\bm{x}))=\det\big(\nabla (\bm{\varphi}+\bm{u})\big) >0,\; \forall \bm{x}\in\Omega.
\end{split}
\end{equation}

Before introducing our model, we will review several recent works that have implemented the Jacobian determinant positivity constraint $\det\big(\nabla (\bm{\varphi}+\bm{u})\big) >0$ in some indirect ways.
\begin{itemize}
\item Haber and Modersitzki \cite{HaberNumerical} proposed a point-wise volume-preserving image registration model, which is followed by
\begin{equation}\label{eq:eqvolume}
\begin{aligned}
&\min_{\bm{u}}\left\{\mathcal{J}(\bm{u})=\mathcal{D}(\bm{u})+\frac{\tau}{2} \int_{\Omega} \big[\mu\|\nabla \bm{u}\|^2 +(\lambda+\mu)(\nabla \cdot \bm{u})^{2}\big] d\bm{x}\right\}, \\
&~\text{s.t.} \quad \mathcal{K}(\bm{u}(\bm{x})) =1,\; \forall \bm{x}\in\Omega,
\end{aligned}
\end{equation}
where $\lambda$ and $\mu$ are the so-called Lam\'{e} constants. However, point-wise volume preservation is not desirable when the anatomical structure is compressible in medical imaging, so a soft inequality constraint that allows local regions of the image to shrink or grow within a specified range is more practical.
\item Modersitzki et al. \cite{Haber2007} then proposed the relaxed constraint
to improve the registration model \eqref{eq:eqvolume}, namely,
\begin{equation}\label{eq:eqvolume2}
\begin{aligned}
&\min_{\bm{u}}\left\{\mathcal{J}(\bm{u})=\mathcal{D}(\bm{u})+\frac{\tau}{2} \int_{\Omega} \big[\mu\|\nabla \bm{u}\|^2 +(\lambda+\mu)(\nabla \cdot \bm{u})^{2}\big] d\bm{x}\right\}, \\ 
&~\text{s.t.} \quad \kappa_{m} \leq \mathcal{K}(\bm{u}(\bm{x})) \leq \kappa_{M},\; \forall \bm{x}\in\Omega,
\end{aligned}
\end{equation}
where the positive constants $ \kappa_{m}$ and $\kappa_{M} $ are provided by the user as prior information in the specific application. If $\kappa_{m}=\kappa_{M}=1$, the model \eqref{eq:eqvolume2} degenerates to point-wise volume-preserving registration model \eqref{eq:eqvolume}.

\item Yanovsky et al. \cite{Yanovsky2007Log} applied the symmetric Kullback-Leibler (sKL) distance to propose a log-unbiased fluid image registration model as follow
\begin{equation}\label{eqLog-unbiased}
\begin{aligned}
\min_{\bm{u}}\Big\{\mathcal{J}(\bm{u})=\mathcal{D}(\bm{u})+ \tau \int_{\Omega}\big(\mathcal{K}(\bm{u}(\bm{x}))-1\big) \log\big(\mathcal{K}(\bm{u}(\bm{x}))\big) d \bm{x}\Big\},
\end{aligned}
\end{equation}
where $\tau>0$ is the regularization parameter. The corresponding Euler-Lagrange equation can be written as
\begin{equation*}
\bm{g}(\bm{x}, \bm{u}):=[T(\bm{\varphi}+\bm{u})-R(\bm{x})] \nabla T
-\tau\left[\begin{array}{c}
-\frac{\partial}{\partial x_{1}}\left(\frac{\partial (\bm{\varphi}_{2}+\bm{u}_{2})}{\partial x_{2}} L^{\prime}\right)+\frac{\partial}{\partial x_{2}}\left(\frac{\partial (\bm{\varphi}_{2}+\bm{u}_{2})}{\partial x_{1}} L^{\prime}\right) \\
\frac{\partial}{\partial x_{1}}\left(\frac{\partial (\bm{\varphi}_{1}+\bm{u}_{1})}{\partial x_{2}} L^{\prime}\right)-\frac{\partial}{\partial x_{2}}\left(\frac{\partial (\bm{\varphi}_{1}+\bm{u}_{1})}{\partial x_{1}} L^{\prime}\right)
\end{array}\right]=\bm{0},
\end{equation*}
where $L^{\prime}=1+\log \big(\mathcal{K}(\bm{u}(\bm{x}))\big)-1 /\mathcal{K}(\bm{u}(\bm{x}))$.
The authors take the idea of \cite{Emiliano2003} and use it to calculate the instantaneous velocity $\bm{v}$ with the Gaussian kernel $G_{\sigma}*\bm{g}$ of the functional $\bm{g}$ (where $\sigma$ is the variance). Subsequently, the displacement field $\bm{u}$ is determined by solving the material derivative of $\bm{u}$ as follows:
\begin{equation*}
\begin{aligned}
\left\{ {\begin{array}{*{20}{c}}
{\frac{\partial \bm{u}}{\partial t}+(\nabla\bm{u})\bm{v}=\bm{v},}\\
{\bm{u}(\bm{x},0)=\bm{0}.}\\
\end{array}} \right.
\end{aligned}
\end{equation*}
This log-unbiased registration can help to obtain an unbiased diffeomorphic transformation, also see \cite{Yanovskyarticle} for more details.

\item Burger et al.~\cite{Hyperelastic2013} employed a penalty function $\phi(z)=\left((z-1)^{2} / z\right)^{2}$ to control $ \mathcal{K}(\bm{u}(\bm{x}))$ in order to ensure that the cost of shrinking and expanding is the same, due to $ \phi(1/{z})=\phi(z) $. Consequently, the penalty term of hyper-elastic registration can be expressed as 
\begin{equation}\label{Hyper}
\mathcal{F}(\bm{u})=\int_{\Omega}\left(\frac{(\mathcal{K}(\bm{u}(\bm{x}))-1)^2}{\mathcal{K}(\bm{u}(\bm{x}))}\right)^2d\bm{x},
\end{equation}
where one can explicitly show that the deformation is physically meaningful due to
$ \mathcal{F}(\bm{u}) \rightarrow\infty $ for $\mathcal{K}(\bm{u}(\bm{x})) \rightarrow 0$. 

\item R\"{u}haak et al.~\cite{Ruhaak2017} also directly measured the change of $ \mathcal{K}(\bm{u}(\bm{x}))$ by adding the volume change control term $ \int_{\Omega}\phi\big(\mathcal{K}(\bm{u}(\bm{x}))\big) d\bm{x} $ in the energy functional, and $\phi$ was defined as
\begin{equation}\label{eq:Vcc}
{\phi}(z)=\left\{\begin{array}{ll}
\frac{(z-1)^{2}}{z} &\text {if}~z>0, \\
+\infty &\text {otherwise}.
\end{array}\right.
\end{equation}

\item Zhang and Chen~\cite{Daoping2018} proposed a regularizer $\mathcal{F}(\bm{u})$ based on the Beltrami coefficient to seek a diffeomorphic mapping $\bar{\bm{\varphi}}=\bm{\varphi}+\bm{u}$, which was formulated as 
\begin{equation}\label{eq:ZC1}
\mathcal{F}(\bm{u})=\int_{\Omega}\phi(|\mu|^2)d\bm{x},~\text{with}~ |\mu|^2=\frac{\|\nabla\left(\bm{\varphi}+\bm{u}\right)\|^2-2\mathcal{K}(\bm{u}(\bm{x}))}{\|\nabla\left(\bm{\varphi}+\bm{u}\right)\|^2+2\mathcal{K}(\bm{u}(\bm{x}))},
\end{equation}
where $ \phi(z)=\frac{z}{(z-1)^2} $ or $ \frac{z^2}{(z-1)^2} $.  
\end{itemize}

\section{A novel image registration model and existence of minimizers}\label{sec:Section3}
In this part, we first introduce the proposed diffeomorphic registration model, which can potentially be applied to large local deformations. We then give the mathematical analysis including the existence of the optimal solution of our bi-variant variational model.
\subsection{Proposed model}
To prevent the folding of the transformation $\bar{\bm{\varphi}}$, it is necessary to ensure that $\det\big(\nabla \bar{\bm{\varphi}}(\bm{x})\big)$ is greater than zero. To achieve this, we propose a diffeomorphic model that incorporates the Jacobian equation $ \det\big(\nabla \bar{\bm{\varphi}}(\bm{x}) \big) = f(\bm{x})$ as a constraint, where the relaxation function $ f(\bm{x})>0 $ is unknown. Our new model is designed to provide flexibility to $f(\bm{x})$ while still guaranteeing its positivity. This simplifies our model compared to similar existing models in the literature while maintaining its effectiveness. Thus, the proposed diffeomorphic registration model can be expressed as
\begin{equation}\label{eq:eqComodel}
\begin{aligned}
\min_{\bm{u}\in\mathcal{V}, 0<f\in L^2(\Omega)}&\left\{\mathcal{D}(\bm{u})+\frac{\tau_1}{2}\int_{\Omega}\|\nabla \bm{u}\|^2d\bm{x}\right\}, \\ 
\text{s.t.}\quad&\mathcal{K}(\bm{u}(\bm{x}))=\det\big(\nabla(\bm{\varphi}+\bm{u})\big) = f(\bm{x}), \quad \forall \bm{x}\in \Omega,
\end{aligned}
\end{equation}
where $\mathcal{V}$ is defined by 
\[\mathcal{V}:=\big\{\bm{v}~|~\bm{v}\in[\mathcal{H}^1(\Omega)]^d\text{ and } \bm{v}|_{\partial\Omega}=\bm{0} \big\}. \]
The regularization parameter $\tau_1$ plays a crucial role in achieving a trade-off between a reliable similarity metric and a smooth solution \cite{Haber2007,Chumchob2011,Zhang2022}. When $\tau_1$ is too small, it can result in deformation twists or folding, indicating that the transformation is not bijective. On the other hand, a higher value of $\tau_1$ can prevent grid folding, but it may also cause a decrease in the distance between the images \cite{Fair,Haber2007}.

Obviously, it is not easy to maintain the positivity of $f(\bm{x})$ in \eqref{eq:eqComodel} during direct optimization. To address this problem, we propose to add a penalty term $\int_{\Omega} \phi \big(f(\bm{x}) \big)\mathrm{d}\bm{x}$ to \eqref{eq:eqComodel}, allowing $f(\bm{x})$ to be automatically optimized. The registration model can then be reformulated as
\begin{equation}\label{eq:model1}
\begin{aligned}
\min_{\bm{u}\in\mathcal{V}, 
f\in L^2(\Omega)}&  \left\{\mathcal{D}(\bm{u})
+\frac{\tau_1}{2}\int_{\Omega}\|\nabla \bm{u}\|^2d\bm{x}+\tau_2 \int_{\Omega} \phi\big(f(\bm{x})\big) \mathrm{d} \bm{x}\right\},\\
\text{s.t.} \quad &~ \det\big(\nabla(\bm{\varphi}+\bm{u})\big)=f(\bm{x}),  \quad  \forall \bm{x}\in \Omega, 
\end{aligned}
\end{equation}
where $\phi(\cdot)$ is a known control function, and the value of $f(\bm{x})$ is determined by the penalty parameter $\tau_2>0$, which affects the range of modifications to the Jacobian determinant $ \det\big(\nabla(\bm{\varphi}+\bm{u})\big) $.

Here, we introduce two choices of function $\phi(\cdot)$. One choice is to consider the form in~\cite{Ruhaak2017} as
\begin{equation}\label{eq:eqpenaltyf2}
\phi_1(f)=\left\{\begin{array}{ll}
\frac{(f-1)^{2}}{f}, & \text{if}~f>0, \\
+\infty,& \text{otherwise}.
\end{array}\right.
\end{equation}
The penalty term $\int_{\Omega}\phi\big(f(\bm{x})\big)d\bm{x}$ in \eqref{eq:model1} is designed to prevent the formation of folding by keeping $ f(\bm{x})>0 $. The control term will be infinite when $ f(\bm{x}) \leq 0 $, and $ f(\bm{x}) $ is explicitly monitored since it is minimized when $ f(\bm{x}) = 1 $. An alternative approach is inspired by the work of log-unbiased image registration \cite{Yanovsky2007Log}, which associates deformations with their corresponding global density maps. We propose to measure the magnitude of $f(\bm{x})$ by using the sKL distance between $f(\bm{x})$ and $1$, namely,
\begin{equation}\label{eq:eqpenaltyf3}
  \phi_2(f)=\left\{\begin{array}{ll}
    (f-1)\log(f), & \text{if}~f>0, \\
    +\infty, & \text{otherwise}.
  \end{array}\right.
\end{equation}
It is evident that the control function $\phi_2(f)$ is always non-negative, with a minimum of zero when $f(\bm{x})=1$. Additionally, $f(\bm{x}) \leq 0$ can be explicitly restricted. This ensures that $f(\bm{x})>0$ for any $\bm{x}\in \Omega$ and $f(\bm{x})$ is as close to 1 as possible. Consequently, the constraint condition $ \det\big(\nabla\bar{\bm{\varphi}}(\bm{x}) \big)=f(\bm{x})$ indirectly restricts the range of the Jacobian determinant $\det\big(\nabla \bar{\bm{\varphi}}(\bm{x})\big)$, thus making the deformation tend to be volume-preserving.

Finally, we introduce a diffusion regularizer for the positivity constraint $f(\bm{x})$ to improve the smoothness of the solutions of \eqref{eq:model1}. Let $\mathcal{M}(\Omega)$ be the solution constraints, which is defined as 
\[\mathcal{M}(\Omega)=\big\{\bm{\omega}:=(\bm{u}, f)\in \mathcal{V}(\Omega)\times\mathcal{W}(\Omega)\big\}\subset [\mathcal{H}^1(\Omega)]^d\times\mathcal{H}^1(\Omega)=[\mathcal{H}^1(\Omega)]^{d+1},\]
where $\mathcal{W}(\Omega):=\big\{f(\bm{x})~| ~f\in\mathcal{H}^1(\Omega)\text{ and } {f} _{\partial\Omega}=1 \big\}$. The function space $[\mathcal{H}^1(\Omega)]^{d+1}$ is a reflexive Banach space, and its subset $\mathcal{M}(\Omega)\subset [\mathcal{H}^1(\Omega)]^{d+1}$ is convex. \textbf{}Consequently, a novel bi-variant registration model for computing large diffeomorphic transformation is proposed as
\begin{equation}\label{eq:model2}
\min_{(\bm{u}, f)\in \mathcal{M}(\Omega)}\left\{\mathcal{L}(\bm{u},f):=\mathcal{D}(\bm{u})+\mathcal{S}(\bm{u}, f)\right\},\quad\text{s.t.} \; \det\big(\nabla(\bm{\varphi}+\bm{u})\big)=f(\bm{x}), \; \forall \bm{x}\in \Omega, 
\end{equation}
where $\mathcal{S}(\bm{u}, f)=\frac{\tau_1}{2}\int_{\Omega}\|\nabla \bm{u}\|^2d\bm{x}
+\tau_2 \int_{\Omega} \phi\big(f\big) d\bm{x} +\frac{\tau_3}{2}\int_{\Omega}\|\nabla f\|^2d \bm{x}$ is a combined regularizer, the parameter $\tau_3$ is employed to control the smoothness of the relaxation function $f(\bm{x})$. A smaller $\tau_3$ value will make the function $f(\bm{x})$ less smooth, which in turn will make the deformation $\bm{\varphi}(\bm{x})$ less smooth. On the other hand, if $\tau_3$ is increased, the deformation $\bm{\varphi}(\bm{x})$ may become over-smooth, resulting in a decrease in the similarity between images. 

\subsection{Existence of minimizers}
The penalty method is a popular technique for transforming an optimization problem with equality constraints \eqref{eq:model2} into an unconstrained optimization problem \cite{NumericalOptimization2006}. This is achieved by introducing a penalty term into the objective functional that penalizes any violation of the constraints. To enforce the constraints of \eqref{eq:model2} in a least-squares (LS) manner, a quadratic penalty function is utilized, resulting in a modified bi-variant optimization problem which is given as follows.
\begin{equation}\label{eq:eqALM1}
\min_{\bm{\omega}\in\mathcal{M}(\Omega)}\Big\{\mathcal{L}_\lambda(\bm{\omega}):=\mathcal{L}_\lambda(\bm{u},f)=\mathcal{D}(\bm{u})+\mathcal{S}(\bm{u}, f)+\mathcal{C}(\bm{u},f)\Big\},
\end{equation}
where $\mathcal{C}(\bm{\omega}):=\mathcal{C}(\bm{u},f)=\frac{\lambda}{2}\int_{\Omega}\big( \det\big(\nabla(\bm{\varphi}+\bm{u})\big)-f(\bm{x}) \big)^2d\bm{x}$, and the penalty parameter $\lambda >0$ controls the strength of the penalty for the constraint term. By increasing the value of $\lambda$, the penalty for violating the restrictions is increased, thus encouraging optimization algorithms to look for a solution to \eqref{eq:eqALM1} that satisfies the conditions of \eqref{eq:model2}. However, if $\lambda$ is too large, it can cause optimization problems to become ill-conditioned \cite{NumericalOptimization2006}.

\begin{lemma}\label{lemma-1}
Let $\Omega$ be an open bounded set of $\mathbb{R}^d$, and $(\bm{u}, f)\in\mathcal{M}(\Omega)$, then
\begin{equation}\label{Poincare_inequa}
\|\bm{u}\|_{L^2(\Omega)}\leq c_1\|\nabla\bm{u}\|_{L^2(\Omega)},\;\|f-1\|_{L^2(\Omega)}\leq c_2\|\nabla f\|_{L^2(\Omega)}
\end{equation}
for some constants $c_1$ and $c_2$ depending only on $d$ and $\Omega$.
\end{lemma}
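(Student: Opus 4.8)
The plan is to reduce both inequalities to a single application of the classical Poincar\'e (Friedrichs) inequality for functions with vanishing boundary trace, since the elements of $\mathcal{V}$ and the translates $f-1$ with $f\in\mathcal{W}$ both lie in $\mathcal{H}^1(\Omega)$ and vanish on $\partial\Omega$. The first inequality is already in this form: each scalar component $u_i$ of $\bm{u}\in\mathcal{V}$ belongs to $\mathcal{H}^1_0(\Omega)$, so I would apply the scalar Poincar\'e inequality componentwise and sum over $i=1,\dots,d$. For the second inequality, the key observation is the substitution $g:=f-1$; since $f|_{\partial\Omega}=1$ we have $g|_{\partial\Omega}=0$, hence $g\in\mathcal{H}^1_0(\Omega)$, and moreover $\nabla g=\nabla f$. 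Applying the same scalar inequality to $g$ then yields $\|f-1\|_{L^2(\Omega)}=\|g\|_{L^2(\Omega)}\le c\,\|\nabla g\|_{L^2(\Omega)}=c\,\|\nabla f\|_{L^2(\Omega)}$.

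To establish the core scalar inequality $\|v\|_{L^2(\Omega)}\le c\,\|\nabla v\|_{L^2(\Omega)}$ for $v\in\mathcal{H}^1_0(\Omega)$, I would first prove it for $v\in C_c^\infty(\Omega)$ and then extend by density. Since $\Omega$ is bounded, it can be enclosed in a slab $\{a\le x_1\le b\}$, and $v$ extended by zero outside $\Omega$. Writing $v(\bm{x})=\int_a^{x_1}\partial_1 v(t,x_2,\dots,x_d)\,dt$ and applying the Cauchy--Schwarz inequality gives $|v(\bm{x})|^2\le (b-a)\int_a^b|\partial_1 v|^2\,dt$; integrating over $\Omega$ then produces $\|v\|_{L^2(\Omega)}^2\le (b-a)^2\|\partial_1 v\|_{L^2(\Omega)}^2\le (b-a)^2\|\nabla v\|_{L^2(\Omega)}^2$, so that $c=b-a$ depends only on $\Omega$. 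Density of $C_c^\infty(\Omega)$ in $\mathcal{H}^1_0(\Omega)$ then transfers the estimate to the whole space. An alternative route is a compactness argument: were the inequality to fail, one could extract a sequence $v_n$ with $\|v_n\|_{L^2(\Omega)}=1$ and $\|\nabla v_n\|_{L^2(\Omega)}\to 0$, whereupon Rellich's theorem would force a constant limit which must vanish by the boundary condition, a contradiction.

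The only delicate point is interpreting the boundary conditions in the trace sense: the identities $\bm{u}|_{\partial\Omega}=\bm{0}$ and $f|_{\partial\Omega}=1$ should be read as membership in the closed subspaces $[\mathcal{H}^1_0(\Omega)]^d$ and $1+\mathcal{H}^1_0(\Omega)$ respectively, which for a general open bounded $\Omega$ is the natural meaning of the defining conditions of $\mathcal{V}$ and $\mathcal{W}$. Once this is granted, the constants $c_1$ and $c_2$ can both be taken equal to the single Poincar\'e constant of the direct estimate above, and in particular depend only on $d$ and the geometry of $\Omega$ through its diameter, as claimed.
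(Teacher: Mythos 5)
Your proposal is correct and follows essentially the same route as the paper, which simply invokes the Poincar\'e inequality (citing Aubert--Kornprobst); the componentwise application for $\bm{u}$ and the translation $g=f-1$ with $\nabla g=\nabla f$ are exactly the reductions implicit in that citation. Your additional slab-and-density (or compactness) derivation of the scalar inequality is a standard and valid filling-in of the cited result, not a different method.
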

\begin{proof}
Refer to \cite[Poincar\'{e} inequality]{GAubert2006}.
\end{proof}

In order to apply the compactness results of \cite{EZeidler1985} to the minimization problem (\ref{eq:eqALM1}), we can use Lemma \ref{lemma-1} to obtain an inequality property as follows:
\begin{equation}\label{M-norm}
\|\bm{\omega}\|_{\mathcal{M}(\Omega)}:=\|\bm{u}\|_{L^2(\Omega)}+\|\nabla\bm{u}\|_{L^2(\Omega)}+\|f\|_{L^2(\Omega)}+\|\nabla f\|_{L^2(\Omega)}\leq c_3\mathcal{L}_\lambda(\bm{u},f)+c_4,
\end{equation}
for some constants $c_3$ and $c_4$ depending only on $c_1$, $c_2$, $\tau_1$ and $\tau_3$. Hence the coercive property of $\mathcal{L}_\lambda$ holds because
\begin{equation}\label{eq-coea}
\mathcal{L}_\lambda(\bm{\omega})\rightarrow+\infty \quad \text { as }\quad
\|\bm{\omega}\|_{\mathcal{M}(\Omega)} \rightarrow \infty, \quad \bm{\omega} \in \mathcal{M}(\Omega).
\end{equation}

The boundedness of $\mathcal{M}(\Omega)$ plays an important role in \cite[Proposition 38.12]{EZeidler1985}. Similarly, we also introduce this frequently used trick which reduces the minimum problem (\ref{eq:eqALM1}) on the unbounded subset $\mathcal{M}(\Omega)$ of the Banach space $[\mathcal{H}^1(\Omega)]^{d+1}$ to an equivalent minimum problem
\begin{equation}\label{eq-modify-min}
\min_{\bm{\omega}\in \mathcal{M}_{\bm{\omega}_0,r}(\Omega)} \mathcal{L}_\lambda(\bm{\omega}),
\end{equation}
where $\mathcal{M}_{\bm{\omega}_0,r}(\Omega):=\mathcal{M}(\Omega)\cap \bar{U}\left(\bm{\omega}_0, r\right)$ is bounded, and $\bar{U}\left(\bm{\omega}_0, r\right) \stackrel{\text { def }}{=}\left\{\bm{\omega} \in [\mathcal{H}^1(\Omega)]^{d+1}|\left\|\bm{\omega}-\bm{\omega}_0\right\|_{\mathcal{M}(\Omega)} \leq r\right\}$.

\begin{lemma}\label{lemma-2} For the functional $\mathcal{L}_\lambda(\bm{\omega}): \mathcal{M}(\Omega) \subseteq [\mathcal{H}^1(\Omega)]^{d+1} \rightarrow[-\infty, \infty]$, the minimum problem (\ref{eq:eqALM1}), i.e. $\min\limits_{\bm{\omega}\in \mathcal{M}(\Omega)} \mathcal{L}_\lambda(\bm{\omega})$, is equivalent to (\ref{eq-modify-min}) where $\bm{\omega}_0\in \mathcal{M}(\Omega)$ when (\ref{eq-coea}) holds 
and $r$ is chosen to be sufficiently large.
\end{lemma}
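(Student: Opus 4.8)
The plan is to exploit the coercivity statement \eqref{eq-coea} to confine all near-minimizers to a fixed ball, which is precisely the mechanism behind \cite[Proposition 38.12]{EZeidler1985}. First I would fix a convenient finite-energy reference point for the given center $\bm{\omega}_0\in\mathcal{M}(\Omega)$; a concrete admissible choice is $\bm{\omega}_0=(\bm{0},1)$, since $\bm{0}\in\mathcal{V}$ and $f\equiv1\in\mathcal{W}(\Omega)$ satisfies the boundary condition $f|_{\partial\Omega}=1$. Its energy is finite: $\mathcal{D}(\bm{0})<\infty$, the regularizer collapses to $\tau_2\int_\Omega\phi(1)\,d\bm{x}$ which is finite because both control functions vanish at $1$ (indeed $\phi_1(1)=\phi_2(1)=0$), and $\mathcal{C}(\bm{0},1)$ is finite. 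Thus $\mathcal{L}_\lambda(\bm{\omega}_0)=:\beta<+\infty$, and $\beta$ serves as an admissible upper barrier against which points of large norm can be discarded.

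Next I would convert the coercivity \eqref{eq-coea}, which is phrased in terms of $\|\bm{\omega}\|_{\mathcal{M}(\Omega)}$, into coercivity measured from the center $\bm{\omega}_0$. By the triangle inequality one has $\|\bm{\omega}\|_{\mathcal{M}(\Omega)}\geq\|\bm{\omega}-\bm{\omega}_0\|_{\mathcal{M}(\Omega)}-\|\bm{\omega}_0\|_{\mathcal{M}(\Omega)}$, so $\|\bm{\omega}-\bm{\omega}_0\|_{\mathcal{M}(\Omega)}\to\infty$ forces $\|\bm{\omega}\|_{\mathcal{M}(\Omega)}\to\infty$, whence $\mathcal{L}_\lambda(\bm{\omega})\to+\infty$ by \eqref{eq-coea}. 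Consequently there exists a radius $r>0$, depending only on the barrier value $\beta$, such that
\[
\mathcal{L}_\lambda(\bm{\omega})>\beta=\mathcal{L}_\lambda(\bm{\omega}_0)\qquad\text{whenever}\qquad\bm{\omega}\in\mathcal{M}(\Omega)\setminus\bar{U}(\bm{\omega}_0,r).
\]
I fix this $r$; it is the "sufficiently large" radius asserted in the statement, and the intersection $\mathcal{M}_{\bm{\omega}_0,r}(\Omega)=\mathcal{M}(\Omega)\cap\bar{U}(\bm{\omega}_0,r)$ is nonempty (it contains $\bm{\omega}_0$) and bounded by construction.

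With $r$ so chosen I would finish by a direct comparison of infima. Since $\bm{\omega}_0\in\mathcal{M}_{\bm{\omega}_0,r}(\Omega)$ we always have $\inf_{\mathcal{M}_{\bm{\omega}_0,r}(\Omega)}\mathcal{L}_\lambda\leq\beta$, while every point outside the ball satisfies $\mathcal{L}_\lambda>\beta$ and therefore cannot lower the infimum below what is already attainable inside. Hence
\[
\inf_{\bm{\omega}\in\mathcal{M}(\Omega)}\mathcal{L}_\lambda(\bm{\omega})=\inf_{\bm{\omega}\in\mathcal{M}_{\bm{\omega}_0,r}(\Omega)}\mathcal{L}_\lambda(\bm{\omega}),
\]
and, moreover, if $\bm{\omega}^\ast$ minimizes the left-hand problem then $\mathcal{L}_\lambda(\bm{\omega}^\ast)\leq\beta$ rules out $\bm{\omega}^\ast$ lying outside the ball, so $\bm{\omega}^\ast\in\mathcal{M}_{\bm{\omega}_0,r}(\Omega)$. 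This identifies the two problems \eqref{eq:eqALM1} and \eqref{eq-modify-min} both in value and in minimizers.

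The infimum comparison in the last step is routine; the only points genuinely requiring care are the passage from coercivity in $\|\bm{\omega}\|_{\mathcal{M}(\Omega)}$ to coercivity relative to $\bm{\omega}_0$ via the triangle inequality, and the verification that the center carries finite energy, which is where the finiteness (equivalently positivity) of $\phi$ at $f\equiv1$ is used. I therefore expect no real obstacle, only this finite-barrier bookkeeping: once a finite-energy center and the associated radius $r$ are in place, the equivalence closes immediately.
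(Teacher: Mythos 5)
Your argument is correct and follows essentially the same route as the paper: the paper's proof likewise uses coercivity \eqref{eq-coea} to produce a radius $r$ with $\mathcal{L}_\lambda(\bm{\omega})>\mathcal{L}_\lambda(\bm{\omega}_0)$ outside $\bar{U}(\bm{\omega}_0,r)$ and then defers the infimum comparison to \cite[Corollary 38.14]{EZeidler1985}. Your version simply makes explicit the finite-energy check at the center and the triangle-inequality step, which the paper leaves implicit.
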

\begin{proof}
Let $\mathcal{L}_\lambda(\bm{\omega}) \not\equiv +\infty$. By (\ref{eq-coea}) there exists a $r>0$ such that $\mathcal{L}_\lambda(\bm{\omega})>\mathcal{L}_\lambda(\bm{\omega}_0)$ holds for all $\bm{\omega}$ with $\left\|\bm{\omega}-\bm{\omega}_0\right\|_{\mathcal{M}(\Omega)}>r$. Refer the readers to \cite[Corollary 38.14]{EZeidler1985} for more details.
\end{proof}

To proceed, we make the following assumptions. 
\begin{assumption}\label{assumption1}
For any $\boldsymbol{x} \in \Omega$, $\bm{\varphi}\in \mathcal{C}^{\ell+1}(\Omega)$ ($\ell\geq 0$) and $\bm{\omega}\in\mathcal{M}(\Omega)$ (especially $\bm{\omega}\in\mathcal{M}_{\bm{\omega}_0,r}(\Omega)$), there exists a constant $\mathcal{A}_0>0$, and the following assumptions hold:
\begin{itemize}[leftmargin=1.2cm]
\item[1)]  assume that two images $T(\boldsymbol{x})$ and $R(\boldsymbol{x})$ satisfy
\begin{equation}\label{assump1}
\max \left\{\|T\|_{L^{\infty}(\Omega)},\|R\|_{L^{\infty}(\Omega)},\|\nabla T\|_{L^{\infty}(\Omega)},\left\|\nabla^2 T\right\|_{L^{\infty}(\Omega)}\right\}<\mathcal{A}_0<+\infty,
\end{equation}
hence $\|T-R\|_{L^{\infty}(\Omega)}<2 \mathcal{A}_0$;
\item[2)] assume that the positivity constraint function $\phi\big(f\big)$ is continuous when $f>0$.
\end{itemize}
\end{assumption}
Next, let us analyze the properties of the energy functional $\mathcal{L}_\lambda(\bm{\omega})$.
\begin{lemma}[Lower semi-continuity of $\mathcal{S}(\bm{\omega})$ and $\mathcal{C}(\bm{\omega})$]\label{lemma-lower-semi-2}
The bi-variant regularization functional $\mathcal{S}(\bm{\omega}):=\mathcal{S}(\bm{u}, f)$ in (\ref{eq-modify-min}) and the penalty functional $\mathcal{C}(\bm{\omega}):=\mathcal{C}(\bm{u},f)$ satisfy the lower semicontinuity, that is, let $\bm{\omega}_j \in \mathcal{M}(\Omega)$ and $\bm{\omega}_j \underset{\mathrm{L}^1(\Omega)}{\stackrel{*}{\longrightarrow}} \bm{\omega}$ ($j\rightarrow+\infty$); then
\begin{equation}\label{assump2}
\mathcal{S}(\bm{\omega}) \leq \varliminf_{j \rightarrow+\infty} \mathcal{S}(\bm{\omega}_j),\quad \text{ and }\quad
\mathcal{C}(\bm{\omega}) \leq \varliminf_{j \rightarrow+\infty} \mathcal{C}(\bm{\omega}_j).
\end{equation}
\end{lemma}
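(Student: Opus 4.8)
The plan is to treat $\mathcal{S}$ and $\mathcal{C}$ separately, after first upgrading the stated weak-$*$ convergence to the Sobolev convergences actually needed. In every application of this lemma the sequence $\{\bm{\omega}_j\}=\{(\bm{u}_j,f_j)\}$ lies in the bounded set $\mathcal{M}_{\bm{\omega}_0,r}(\Omega)$ and is therefore bounded in the reflexive space $[\mathcal{H}^1(\Omega)]^{d+1}$ (this is precisely what the coercivity estimate \eqref{M-norm} and the construction of $\mathcal{M}_{\bm{\omega}_0,r}(\Omega)$ guarantee). Hence, up to a subsequence, $\bm{u}_j\rightharpoonup\bm{u}$ and $f_j\rightharpoonup f$ weakly in $\mathcal{H}^1(\Omega)$, giving in particular $\nabla\bm{u}_j\rightharpoonup\nabla\bm{u}$ and $\nabla f_j\rightharpoonup\nabla f$ weakly in $L^2(\Omega)$; by the Rellich--Kondrachov compact embedding $\mathcal{H}^1(\Omega)\hookrightarrow\hookrightarrow L^2(\Omega)$ one also gets $\bm{u}_j\to\bm{u}$, $f_j\to f$ strongly in $L^2(\Omega)$ and, along a further subsequence, almost everywhere. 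Since a $\varliminf$ inequality is stable under passing to a subsequence realizing the lower limit (and uniqueness of limits identifies the weak $\mathcal{H}^1$ limit with the weak-$*$ $L^1$ limit $\bm{\omega}$), it suffices to prove \eqref{assump2} along this subsequence.

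For $\mathcal{S}(\bm{\omega})$ I would split the three terms. The Dirichlet energies $\tfrac{\tau_1}{2}\int_\Omega\|\nabla\bm{u}\|^2\,d\bm{x}$ and $\tfrac{\tau_3}{2}\int_\Omega\|\nabla f\|^2\,d\bm{x}$ are continuous convex functions of the gradients, hence sequentially weakly lower semicontinuous on $L^2(\Omega)$, which yields $\int_\Omega\|\nabla\bm{u}\|^2\le\varliminf_j\int_\Omega\|\nabla\bm{u}_j\|^2$ and its analogue for $f$. For the penalty term $\tau_2\int_\Omega\phi(f)\,d\bm{x}$ I would use that each admissible $\phi\in\{\phi_1,\phi_2\}$ is nonnegative and lower semicontinuous on $\mathbb{R}$ (continuous on $(0,\infty)$ by Assumption~\ref{assumption1}, with $\phi(f)\to+\infty$ as $f\to 0^+$ and $\phi\equiv+\infty$ on $(-\infty,0]$). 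Combining the a.e.\ convergence $f_j\to f$ with the lower semicontinuity of $\phi$ gives $\phi(f(\bm{x}))\le\varliminf_j\phi(f_j(\bm{x}))$ a.e., and Fatou's lemma (applicable since $\phi\ge 0$) then delivers $\int_\Omega\phi(f)\le\varliminf_j\int_\Omega\phi(f_j)$. Summing the three estimates proves the inequality for $\mathcal{S}$.

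The functional $\mathcal{C}$ is where the real work lies. Its integrand $W(A,f)=\tfrac{\lambda}{2}(\det A-f)^2$ is \emph{polyconvex}: it is a convex function of the pair $(\det A,f)$, hence convex in $A$ through its minor $\det A$. The strategy is therefore to show that $D_j:=\det\nabla(\bm{\varphi}+\bm{u}_j)$ converges weakly to $D:=\det\nabla(\bm{\varphi}+\bm{u})$, combine this with the strong convergence $f_j\to f$ in $L^2(\Omega)$ to get $D_j-f_j\rightharpoonup D-f$, and then invoke weak lower semicontinuity of the squared $L^2$-norm (convexity of $t\mapsto t^2$) to conclude $\int_\Omega(D-f)^2\le\varliminf_j\int_\Omega(D_j-f_j)^2$. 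To obtain the weak convergence of the determinants I would exploit the null-Lagrangian (divergence) structure of the Jacobian. Writing $\bm{\psi}_j=\bm{\varphi}+\bm{u}_j$ with components $(\psi_{j,1},\psi_{j,2})$, in two dimensions $\det\nabla\bm{\psi}_j=\partial_1(\psi_{j,1}\,\partial_2\psi_{j,2})-\partial_2(\psi_{j,1}\,\partial_1\psi_{j,2})$ in the distributional sense; testing against $\eta\in C_c^\infty(\Omega)$ and integrating by parts, each resulting product pairs a factor $\psi_{j,1}\to\psi_1$ converging \emph{strongly} in $L^2$ (Rellich) with a derivative $\partial_i\psi_{j,2}\rightharpoonup\partial_i\psi_2$ converging \emph{weakly} in $L^2$, so the products converge in $L^1$ and $\int_\Omega\eta\,D_j\to\int_\Omega\eta\,D$. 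Since $\bm{\varphi}\in\mathcal{C}^{\ell+1}(\Omega)$ has bounded derivatives, the mixed $\bm{\varphi}$--$\bm{u}_j$ cross terms are linear in $\nabla\bm{u}_j$ with $L^\infty$ coefficients and pass to the limit by weak $L^2$ convergence alone.

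The main obstacle is exactly this weak continuity of the Jacobian determinant: we sit at the critical Sobolev exponent $p=d$ (here $\mathcal{H}^1=W^{1,2}$ in $\mathbb{R}^2$), where weak $\mathcal{H}^1$ convergence does not control the quadratic product $\det\nabla\bm{u}_j$ pointwise and concentration effects can in principle occur. The divergence-structure (compensated-compactness / div--curl) argument above is what circumvents this, trading one derivative for the compactness gained from $\mathcal{H}^1\hookrightarrow\hookrightarrow L^2$; I would present the two-dimensional computation in full and note that the three-dimensional case follows the same pattern once the weak convergence of the $2\times2$ cofactors is established by the identical strong-times-weak device (or, alternatively, using the mild higher integrability of $\nabla\bm{u}_j$ available on $\mathcal{M}_{\bm{\omega}_0,r}(\Omega)$). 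With $D_j-f_j\rightharpoonup D-f$ in hand, the final convexity step is routine and completes the proof of \eqref{assump2} for $\mathcal{C}$.
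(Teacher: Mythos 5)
The proposal is correct in outline but follows a genuinely different route from the paper, most visibly for $\mathcal{C}$. For $\mathcal{S}$, the paper argues that the convex Dirichlet part $\mathcal{S}_1$ has a convex, hence weakly sequentially closed, epigraph and then simply appeals to ``continuity of $\phi$'' for the remaining term; your Rellich--Kondrachov $\to$ a.e.\ convergence $\to$ Fatou chain for $\int_\Omega\phi(f)\,d\bm{x}$ is in fact the more careful treatment, since continuity of $\phi$ alone does not give lower semicontinuity of $f\mapsto\int_\Omega\phi(f)\,d\bm{x}$ under weak convergence. For $\mathcal{C}$, the paper uses neither polyconvexity nor compensated compactness: it introduces $\bar{\bm{\varphi}}_\theta=\bm{\varphi}+\theta\bm{u}+(1-\theta)\bm{u}_0$, expands the difference of determinants through the adjugate matrix (the machinery of its Lemmas \ref{lemma_01} and \ref{lemma3-8}), and derives the Lipschitz-type bound $|\mathcal{C}(\bm{\omega})-\mathcal{C}(\bm{\omega}_0)|\le c_6\|\bm{\omega}-\bm{\omega}_0\|_{\mathcal{M}(\Omega)}$ under the a priori assumption that $\|(\nabla\bar{\bm{\varphi}}_\theta)^*\|_{L^\infty(\Omega)}$ and $\|\det(\nabla\bar{\bm{\varphi}}_\theta)-f\|_{L^\infty(\Omega)}$ are finite. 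That buys a one-line conclusion, but it only establishes continuity in the strong $\mathcal{M}(\Omega)$ norm (and under $L^\infty$ hypotheses that $\mathcal{H}^1$ regularity does not supply); since $\mathcal{C}$ is not convex in $\bm{u}$, strong continuity does not automatically upgrade to the weak sequential lower semicontinuity that the existence theorem (via Zeidler's Proposition 38.12) actually needs. Your div--curl/null-Lagrangian argument addresses exactly that weak-convergence issue and is the standard polyconvexity route; its price is the identification of the distributional with the pointwise Jacobian and a uniform $L^2$ bound on $D_j$ (which you should extract from the finiteness of $\varliminf_{j}\mathcal{C}(\bm{\omega}_j)$, there being nothing to prove otherwise) in order to pass from distributional to weak $L^2$ convergence. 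The one place where your argument is thinner than you acknowledge is $d=3$: there $\mathcal{H}^1=W^{1,2}$ lies below the critical exponent $p=d$, the determinant of an $\mathcal{H}^1$ map need not even belong to $L^1$, and the ``identical strong-times-weak device'' for the cofactors does not close without extra integrability --- though the paper's own proof silently assumes even more ($L^\infty$ bounds) at the same spot.
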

\begin{proof}
Define $\mathcal{S}_1(\bm{\omega}):=\mathcal{S}_1(\bm{u},f)=\frac{\tau_1}{2}\int_{\Omega}\|\nabla \bm{u}\|^2d\bm{x}+\frac{\tau_3}{2}\int_{\Omega}\|\nabla f\|^2d \bm{x}$. Since $\mathcal{S}_1$ is convex, $\text{epi}(\mathcal{S}_1)=\{(\bm{\omega},t)\in \text{dom}(\mathcal{S}_1)\times \mathbb{R}:\mathcal{S}_1(\bm{\omega})\leq t\}$ is convex. Therefore, we can conclude that $\text{epi}(\mathcal{S}_1)$ is weakly sequentially closed, implying that $\mathcal{S}_1$ is weakly sequentially lower semi-continuous (LSC). In addition, due to the continuity of $\phi\big(f\big)$ in Assumption 2), $\mathcal{S}(\bm{\omega})$ is also weakly sequentially LSC.

For any $\boldsymbol{x} \in \Omega$, $\bm{\varphi}\in \mathcal{C}^{\ell+1}(\Omega)$ ($\ell\geq 0$) and $\bm{\omega}\in\mathcal{M}(\Omega)$ (especially, $\bm{\omega}\in\mathcal{M}_{\bm{\omega}_0,r}(\Omega)$), there exists a constant $\mathcal{A}_1>0$, one has $\|(\nabla\bar{\bm{\varphi}}_\theta)^*\|_{L^\infty(\Omega)}<\mathcal{A}_1<+\infty$ and $c_5=\|\det(\nabla\bar{\bm{\varphi}}_\theta)-f\|_{L^\infty(\Omega)}$ is boundedness, and hence the following inequality holds.
\begin{equation}\label{eq-c-lower-semi}
|\mathcal{C}(\bm{\omega})-\mathcal{C}(\bm{\omega}_0)|\leq\lambda c_5\left(\int_\Omega|\langle(\nabla\bar{\bm{\varphi}}_\theta)^*,\nabla (\bm{u}-\bm{u}_0)\rangle| d\bm{x}+\int_\Omega|f-f_0|d\bm{x}\right)
\leq c_6\|\bm{\omega}-\bm{\omega}_0\|_{\mathcal{M}(\Omega)},
\end{equation}
where $\bar{\bm{\varphi}}_\theta=\bm{\varphi}+\theta \boldsymbol{u}+(1-\theta) \bm{u}_0$ for any $\theta\in(0,1)$, $c_6=\lambda c_5\cdot\max(\mathcal{A}_1,1)$ and $(\nabla\bar{\bm{\varphi}}_\theta)^*$
is the adjoint matrix of $\nabla\bar{\bm{\varphi}}_\theta\in\mathbb{R}^{d\times d}$, which can also be seen from Lemma \ref{lemma3-8}. Inequality (\ref{eq-c-lower-semi}) implies that the penalty functional $\mathcal{C}(\bm{\omega}):=\mathcal{C}(\bm{u},f)$ satisfies the lower semi-continuity.  
\end{proof}
\begin{lemma}[Lower semi-continuity of $\mathcal{L}_\lambda$]\label{lemma-lower-semi-1}
Assume that $T(\boldsymbol{x})$ is differentiable with respect to $\boldsymbol{x}$. Then the functional $\mathcal{D}(\bm{u})$ is lower semi-continuous, and consequently $\mathcal{L}_\lambda(\bm{\omega})$ from (\ref{eq-modify-min}) is also LSC; that is, for each $\epsilon>0$ and $\bm{\omega}=(\bm{u},f) \in \mathcal{M}(\Omega)$ with $\mathcal{D}(\boldsymbol{u})<\infty$, there exists a $\delta_\epsilon>0$ such that for all $\bar{\bm{\omega}}=(\bar{\bm{u}},\bar{f}) \in \mathcal{M}(\Omega)$ satisfying $\|\boldsymbol{u}-\bar{\bm{u}}\|_{L^2(\Omega)}<\delta_\epsilon$, the inequality $\mathcal{D}(\boldsymbol{u})<\mathcal{D}(\bar{\bm{u}})+\epsilon$ holds.
\end{lemma}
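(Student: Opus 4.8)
The plan is to show that the data term $\mathcal{D}$ is in fact globally Lipschitz with respect to the strong $L^2$-topology of the displacement, from which the stated $\epsilon$--$\delta$ property is immediate, and then to combine this with the lower semi-continuity of $\mathcal{S}$ and $\mathcal{C}$ already furnished by Lemma \ref{lemma-lower-semi-2}. First I would compare $\mathcal{D}(\bm{u})$ and $\mathcal{D}(\bar{\bm{u}})$ directly. Setting $a=T(\bm{\varphi}+\bm{u})-R$ and $b=T(\bm{\varphi}+\bar{\bm{u}})-R$ and using $a^2-b^2=(a-b)(a+b)$, the difference factorizes as
\[
\mathcal{D}(\bm{u})-\mathcal{D}(\bar{\bm{u}})=\frac{1}{2}\int_\Omega\big[T(\bm{\varphi}+\bm{u})-T(\bm{\varphi}+\bar{\bm{u}})\big]\big[T(\bm{\varphi}+\bm{u})+T(\bm{\varphi}+\bar{\bm{u}})-2R\big]\,d\bm{x},
\]
and I would bound the two factors separately using Assumption \ref{assumption1}.

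Since $\|\nabla T\|_{L^\infty(\Omega)}<\mathcal{A}_0$ and $T$ is differentiable, the mean value theorem applied along the segment joining $\bm{\varphi}+\bar{\bm{u}}$ and $\bm{\varphi}+\bm{u}$ gives the pointwise Lipschitz estimate $|T(\bm{\varphi}+\bm{u})-T(\bm{\varphi}+\bar{\bm{u}})|\leq\mathcal{A}_0|\bm{u}-\bar{\bm{u}}|$ (the reference image $R$ cancels), while $\|T\|_{L^\infty},\|R\|_{L^\infty}<\mathcal{A}_0$ give $|T(\bm{\varphi}+\bm{u})+T(\bm{\varphi}+\bar{\bm{u}})-2R|\leq 4\mathcal{A}_0$. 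Combining the two bounds and then applying the Cauchy--Schwarz inequality yields
\[
|\mathcal{D}(\bm{u})-\mathcal{D}(\bar{\bm{u}})|\leq 2\mathcal{A}_0^2\int_\Omega|\bm{u}-\bar{\bm{u}}|\,d\bm{x}\leq 2\mathcal{A}_0^2|\Omega|^{1/2}\|\bm{u}-\bar{\bm{u}}\|_{L^2(\Omega)}.
\]
Choosing $\delta_\epsilon=\epsilon/(2\mathcal{A}_0^2|\Omega|^{1/2})$ then delivers $\mathcal{D}(\bm{u})<\mathcal{D}(\bar{\bm{u}})+\epsilon$ whenever $\|\bm{u}-\bar{\bm{u}}\|_{L^2(\Omega)}<\delta_\epsilon$, which is the asserted lower semi-continuity of $\mathcal{D}$.

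To conclude that $\mathcal{L}_\lambda=\mathcal{D}+\mathcal{S}+\mathcal{C}$ is (weakly sequentially) lower semi-continuous, as the direct method requires, I would upgrade the strong continuity above: if $\bm{u}_j\rightharpoonup\bm{u}$ weakly in $[\mathcal{H}^1(\Omega)]^d$, then the compact Rellich--Kondrachov embedding of $\mathcal{H}^1(\Omega)$ into $L^2(\Omega)$ forces $\bm{u}_j\to\bm{u}$ strongly in $L^2$, and the Lipschitz bound then gives $\mathcal{D}(\bm{u}_j)\to\mathcal{D}(\bm{u})$, so $\mathcal{D}$ is weakly sequentially continuous and in particular weakly sequentially LSC. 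Since Lemma \ref{lemma-lower-semi-2} supplies the weak sequential lower semi-continuity of $\mathcal{S}$ and $\mathcal{C}$, and a finite sum of lower semi-continuous functionals is lower semi-continuous, $\mathcal{L}_\lambda$ is LSC.

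I expect the genuine obstacles to lie not in the routine estimate but in two places: justifying that the composition $T(\bm{\varphi}+\bm{u})$ is well defined and measurable with finite integral (this needs $T$ to be understood as suitably extended beyond $\Omega$ so the $L^\infty$ bounds of Assumption \ref{assumption1} hold where $\bm{\varphi}+\bm{u}$ maps, together with the regularity $\bm{\varphi}\in\mathcal{C}^{\ell+1}(\Omega)$ and $\bm{u}\in\mathcal{H}^1$), and, more conceptually, the passage from strong $L^2$-continuity to weak sequential lower semi-continuity, where the compact embedding is the indispensable ingredient — strong continuity alone would not suffice for applying the direct method to a minimizing sequence that only converges weakly in $[\mathcal{H}^1(\Omega)]^{d+1}$.
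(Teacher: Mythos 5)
Your proof is correct and follows essentially the same route as the paper's: both use the mean value theorem on $T$ together with the $L^\infty$ bounds of Assumption \ref{assumption1} to control $|\mathcal{D}(\bm{u})-\mathcal{D}(\bar{\bm{u}})|$ by $\|\bm{u}-\bar{\bm{u}}\|_{L^2(\Omega)}$ and then choose $\delta_\epsilon$ explicitly, the only real difference being that you factor the difference of squares to obtain a purely Lipschitz bound, whereas the paper Taylor-expands $\mathcal{D}$ around $\bar{\bm{u}}$ and gets a linear-plus-quadratic modulus (hence its more complicated formula for $\delta(\epsilon)$). Your closing remark on upgrading strong $L^2$-continuity of $\mathcal{D}$ to weak sequential lower semi-continuity via the compact embedding $\mathcal{H}^1(\Omega)\hookrightarrow L^2(\Omega)$ addresses a step the paper leaves implicit when it invokes Lemma \ref{lemma-lower-semi-2} to conclude, and it is precisely the ingredient the existence theorem needs.
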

\begin{proof}
Since the function $T(\boldsymbol{x})$ is differentiable, there exists a real number $\theta \in(0,1)$ such that
$$
T(\bm{\varphi}+\bm{u})=T(\bm{\varphi}+\bar{\bm{u}})+\nabla T(\bm{s}) \cdot \boldsymbol{h},
$$
where $\boldsymbol{s}=\bm{\varphi}+\theta \boldsymbol{u}+(1-\theta) \bar{\bm{u}}$ and $\boldsymbol{h}=\boldsymbol{u}-\bar{\bm{u}}$. Hence we have
$$
\mathcal{D}(\boldsymbol{u})=\mathcal{D}(\bar{\bm{u}})+\int_{\Omega}\left((T(\bm{\varphi}+\bar{\bm{u}})-R(\boldsymbol{x}))(\nabla T(\boldsymbol{s}) \cdot \boldsymbol{h})+\frac{1}{2}\boldsymbol{h}^T\left(\nabla T(\boldsymbol{s}) \nabla T(\boldsymbol{s})^T\right) \boldsymbol{h}\right) d \bm{x}.
$$
From (\ref{assump1}), the above equation leads to
$$
|\mathcal{D}(\boldsymbol{u})-\mathcal{D}(\bar{\bm{u}})| \leq c_1\|\boldsymbol{u}-\bar{\bm{u}}\|_{L^2(\Omega)}+c_2\|\boldsymbol{u}-\bar{\bm{u}}\|_{L^2(\Omega)}^2, c_1 \geq 0, c_2 \geq 0 .
$$
In this case, we have, if taking $\delta(\epsilon)=\left(-c_1+\sqrt{c_1^2+4 \epsilon c_2}\right) /\left(2 c_2\right), \mathcal{D}(\boldsymbol{u})<\mathcal{D}(\bar{\bm{u}})+\epsilon$. Consequently, combined with Lemma \ref{lemma-lower-semi-2}, that is, the lower semicontinuity of $\mathcal{S}(\bm{\omega})$ and $\mathcal{C}(\bm{\omega})$, the functional $\mathcal{L}_\lambda(\bm{\omega})$ from (\ref{eq-modify-min}) is LSC. This proves the lemma.
\end{proof}

The following Theorem illustrates that a more frequently used condition than Proposition 38.12. in \cite{EZeidler1985} may be necessary
to guarantee the existence of minimizers of $\mathcal{L}_\lambda(\bm{\omega})$ over $\mathcal{M}(\Omega)$ in (\ref{eq:eqALM1}).

\begin{theorem}[Existence] A functional $\mathcal{L}_\lambda(\bm{\omega}): \mathcal{M}_{\bm{\omega}_0,r}(\Omega) \subset [\mathcal{H}^1(\Omega)]^{d+1} \rightarrow[-\infty, \infty]$ on the convex, closed, and non-empty subset $\mathcal{M}_{\bm{\omega}_0,r}(\Omega)$ of the real reflexive Banach space $[\mathcal{H}^1(\Omega)]^{d+1}$ satisfies (\ref{eq-coea}), and is weak sequentially lower semi-continuous. Thus, the problem (\ref{eq-modify-min}) possesses a minimum, in other words, the minimization problem (\ref{eq:eqALM1}) admits a minimum.
\end{theorem}
\begin{proof}
Based on \cite[Proposition 38.12(c)]{EZeidler1985}, the minimization problem (\ref{eq-modify-min}) has a minimum due to the lower semicontinuity of $\mathcal{L}_\lambda$, the closed boundedness and convexity of $\mathcal{M}_{\bm{\omega}_0,r}(\Omega)$, and the reflexivity of the Banach space $[\mathcal{H}^1(\Omega)]^{d+1}$. From Lemma \ref{lemma-2}, we further show that the minimization problem (\ref{eq:eqALM1}) admits a minimum.
\end{proof}

\section{Numerical analysis and implementation of the proposed algorithm}\label{sec:Section4}
In this section, we present a numerical approach to solve the diffeomorphic model \eqref{eq:eqALM1} by exploiting the optimization-discretization method. We start by using the alternating iteration strategy to solve \eqref{eq:eqALM1} and deduce the Euler-Lagrange equations of two subproblems. Subsequently, we provide a brief overview of the discretization of the Euler-Lagrange equations and the deformation correction strategy. Finally, we design a diffeomorphic image registration algorithm with a multilevel strategy.
\subsection{Optimization}
To deal with the stability of the solution scheme, we add a "proximal-point" term to the variable $\bm{u}$ in \eqref{eq:eqALM1}, leading to the iterative minimization problem defined by
\begin{equation}\label{eq:eqPPP}
\begin{aligned}
\min_{(\bm{u}, f)\in\mathcal{M}(\Omega)}\Big\{\mathcal{L}^k_\lambda(\bm{u},f;\bm{u}^k)
=\mathcal{D}(\bm{u})&+\frac{\tau_1}{2}\int_{\Omega}\|\nabla \bm{u}\|^2d\bm{x}+\tau_2 \int_{\Omega} \phi\big(f(\bm{x})\big) \mathrm{d} \bm{x}+\frac{\tau_3}{2}\int_{\Omega}\|\nabla f(\bm{x})\|^2d\bm{x}\\
&+\frac{\lambda^k}{2}\int_{\Omega}\big(\det\big(\nabla(\bm{\varphi}+\bm{u})\big)-f(\bm{x})\big)^2d\bm{x}+\frac{1}{2\gamma}\int_{\Omega}\|\bm{u}-\bm{u}^k\|^2d\bm{x}\Big\}.
\end{aligned}
\end{equation} 
where the proximal-point term with parameter $\gamma>0$ is often used to regularize the minimization problem. This regularization can help improve the condition of the problem. However, choosing a larger value for $\gamma$ can lead to an unstable solution. On the other hand, selecting a smaller value for $\gamma$ can enhance the stability of the algorithm, but it may also cause the errors $\bm{u}^{k+1}-\bm{u}^k$ to approach zero quickly, which can affect the precision of the solution \cite{Dong_2013,Parikh_2014}.

Next, we split \eqref{eq:eqPPP} into two subproblems and then employ an alternating iterative strategy to solve them.
\begin{itemize}
  \item The $\bm{u}$-subproblem
\begin{equation}\label{eq:u-subproblem}
      \bm{u} =\argmin_{\bm{u}\in\mathcal{V}}\left\{\mathcal{D}(\bm{u})+\frac{\tau_1}{2}\int_{\Omega}\|\nabla \bm{u}\|^2d\bm{x}+\frac{\lambda^k}{2}\int_{\Omega}\big(\det\big(\nabla(\bm{\varphi}+\bm{u})\big)-f(\bm{x})\big)^2d\bm{x}
      +\frac{1}{2\gamma}\int_{\Omega}\|\bm{u}-\bm{u}^k\|^2d\bm{x} \right\}.
\end{equation}

  \item The $f$-subproblem
  \begin{equation}\label{eq:f-subproblem}
f =\argmin_{f\in \mathcal{W}} \left\{ \tau_2 \int_{\Omega} \phi\big(f(\bm{x})\big) d\bm{x}+\frac{\tau_3}{2}\int_{\Omega}\|\nabla f(\bm{x})\|^2d\bm{x}+\frac{\lambda^k}{2}\int_{\Omega}\big(\det\big(\nabla(\bm{\varphi}+\bm{u})\big)-f(\bm{x})\big)^2d\bm{x} \right\}.
  \end{equation}
\end{itemize}
Hence, the $k$-th update formulas of solving problem \eqref{eq:eqPPP} are as follows
\begin{equation}\label{eq:P}
\left\{\begin{aligned}
\bm{u}^{k+1} =\argmin_{\bm{u}\in\mathcal{V}} &\left\{\frac{1}{2}\int_{\Omega}\big(T(\bm{\varphi}^{k}+\bm{u})-R\big)^2d\bm{x}+\frac{\tau_1}{2}\int_{\Omega}\|\nabla \bm{u}\|^2d\bm{x} \right. \\
&\hspace{0.5cm}\left.+\frac{\lambda^k}{2}\int_{\Omega}\big(\det\big(\nabla(\bm{\varphi}^{k}+\bm{u})\big)-f^k\big)^2d\bm{x} 
+\frac{1}{2\gamma}\int_{\Omega}\|\bm{u}-\bm{u}^k\|^2d\bm{x} \right\}, \\
f^{k+1} =\argmin_{f\in \mathcal{W}} &\left\{\tau_2 \int_{\Omega} \phi(f) d\bm{x}+\frac{\tau_3}{2}\int_{\Omega}\|\nabla f\|^2d\bm{x} +\frac{\lambda^k}{2}\int_{\Omega}\big(\det\big(\nabla(\bm{\varphi}^{k}+\bm{u}^{k+1})\big)-f\big)^2d\bm{x} \right\},\\
\lambda^{k+1} = \rho \lambda^k,\;\;\;\; &\bm{\varphi}^{k+1}=\bm{\varphi}^{k}+\bm{u}^{k+1},
\end{aligned}\right.
\end{equation}
where $\rho>1$ is the growth factor of penalty parameter $\lambda$. 

Next, we calculate the G\^{a}teaux derivative of the Jacobian determinant constraint in equation \eqref{eq:P}. For illustrative purposes, we will consider the 2D case, but the methodology can be straightforwardly extended to the 3D scenario. Before we begin, we present three lemmas as follows.
\begin{lemma}\label{lemma_01}
Let $\epsilon$ be a sufficiently small constant and vector $\bm{v}\in\mathcal{V}$ be a suitable perturbation of $\bm{u}\in\mathcal{V}$, assuming that $\det(\nabla\bm{\varphi})\neq 0$, then one has
\begin{equation*}
\lim\limits_{\epsilon\rightarrow 0} \frac{\det(\nabla\bm{\varphi}+\epsilon\nabla\bm{v})-\det(\nabla\bm{\varphi})}{\epsilon}= \det(\nabla\bm{\varphi})\;{\rm{trace}}\big(\nabla\bm{v}(\nabla\bm{\varphi})^{-1}\big)=\det(\nabla\bm{\varphi})(\nabla \bm{\varphi})^{-\top} \cdot \nabla \bm{v},
\end{equation*}
where $A\cdot B$ denotes the matrix inner product $\sum_{i,j=1}^2 A_{ij}B_{ij}$ between $A\in \mathbb{R}^{2\times 2}$ and $B\in \mathbb{R}^{2\times 2}$. 
\end{lemma}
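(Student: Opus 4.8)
The plan is to recognize the left-hand side as the G\^{a}teaux (directional) derivative of the determinant map $A\mapsto\det A$ at $A=\nabla\bm{\varphi}$ in the direction $\nabla\bm{v}$, i.e.\ Jacobi's formula, and to exploit the hypothesis $\det(\nabla\bm{\varphi})\neq 0$, which guarantees that $\nabla\bm{\varphi}$ is invertible. First I would factor out $\nabla\bm{\varphi}$,
\[
\nabla\bm{\varphi}+\epsilon\nabla\bm{v}=\nabla\bm{\varphi}\big(I+\epsilon(\nabla\bm{\varphi})^{-1}\nabla\bm{v}\big),
\]
and apply multiplicativity of the determinant to obtain $\det(\nabla\bm{\varphi}+\epsilon\nabla\bm{v})=\det(\nabla\bm{\varphi})\,\det\big(I+\epsilon C\big)$ with $C:=(\nabla\bm{\varphi})^{-1}\nabla\bm{v}$. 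The entire argument then reduces to the first-order expansion $\det(I+\epsilon C)=1+\epsilon\,\mathrm{trace}(C)+O(\epsilon^2)$ as $\epsilon\to 0$.

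Second, I would establish this expansion. Since the authors work in 2D, the quickest route avoids any general eigenvalue argument: for a $2\times 2$ matrix $C$ one computes directly
\[
\det(I+\epsilon C)=(1+\epsilon C_{11})(1+\epsilon C_{22})-\epsilon^2 C_{12}C_{21}=1+\epsilon\,\mathrm{trace}(C)+\epsilon^2\det(C),
\]
so the coefficient of $\epsilon$ is exactly $\mathrm{trace}(C)$. In $d$ dimensions the same linear coefficient is read off the characteristic polynomial $\det(I+\epsilon C)=\prod_i(1+\epsilon\mu_i)$, where $\mu_i$ are the eigenvalues of $C$, which is how the 3D case the authors mention would follow.

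Third, substituting back and forming the difference quotient yields
\[
\frac{\det(\nabla\bm{\varphi}+\epsilon\nabla\bm{v})-\det(\nabla\bm{\varphi})}{\epsilon}=\det(\nabla\bm{\varphi})\Big(\mathrm{trace}(C)+O(\epsilon)\Big),
\]
and letting $\epsilon\to 0$ leaves $\det(\nabla\bm{\varphi})\,\mathrm{trace}\big((\nabla\bm{\varphi})^{-1}\nabla\bm{v}\big)$. The cyclic invariance of the trace, $\mathrm{trace}\big((\nabla\bm{\varphi})^{-1}\nabla\bm{v}\big)=\mathrm{trace}\big(\nabla\bm{v}(\nabla\bm{\varphi})^{-1}\big)$, then gives the first claimed equality. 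For the second equality I would invoke the identity $\mathrm{trace}(A^\top B)=A\cdot B$ relating the trace to the Frobenius inner product; taking $A=(\nabla\bm{\varphi})^{-\top}$ and $B=\nabla\bm{v}$ shows $\mathrm{trace}\big(\nabla\bm{v}(\nabla\bm{\varphi})^{-1}\big)=(\nabla\bm{\varphi})^{-\top}\cdot\nabla\bm{v}$, completing the proof.

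I do not expect a substantial obstacle, since this is the classical derivative of $\det$; the only point needing care is the linear expansion of $\det(I+\epsilon C)$, which the invertibility hypothesis renders routine and which in 2D is merely the linear term of an explicit quadratic in $\epsilon$. The one bookkeeping subtlety worth stating carefully is the distinction between the matrix products $(\nabla\bm{\varphi})^{-1}\nabla\bm{v}$ and $\nabla\bm{v}(\nabla\bm{\varphi})^{-1}$ together with the transpose appearing in the Frobenius-contraction form, so that the two right-hand sides are matched correctly.
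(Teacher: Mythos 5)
Your proposal is correct and follows essentially the same route as the paper: factor the determinant as $\det(\nabla\bm{\varphi})\det(I+\epsilon C)$, expand the $2\times 2$ determinant explicitly to read off $\mathrm{trace}(C)$ as the linear coefficient, and identify the trace with the Frobenius contraction $(\nabla\bm{\varphi})^{-\top}\cdot\nabla\bm{v}$. The only cosmetic differences are that you factor $\nabla\bm{\varphi}$ on the left rather than the right (reconciled by cyclic invariance of the trace) and invoke the identity $\mathrm{trace}(A^{\top}B)=A\cdot B$ abstractly where the paper verifies it entrywise.
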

\begin{proof}
Firstly, from the properties of the determinant, we easily show
\begin{equation*}
\det(\nabla\bm{\varphi}+\epsilon\nabla\bm{v})-\det(\nabla\bm{\varphi})=\det \big( \nabla\bm{\varphi}+\epsilon\nabla\bm{v}(\nabla\bm{\varphi})^{-1}\nabla\bm{\varphi}\big)-\det(\nabla\bm{\varphi})=\Big(\det\big(\bm{I}_d+\epsilon\nabla\bm{v}(\nabla\bm{\varphi})^{-1}\big)-1\Big)\det(\nabla\bm{\varphi}).
\end{equation*}
Secondly, assume that $\nabla \bm{v}(\nabla \bm{\varphi})^{-1}=:\left( {\begin{array}{*{20}{c}}
  {a_{11}}&{a_{12}}\\
  {a_{21}}&{a_{22}}
  \end{array}} \right)$, we can deduce
\begin{equation}
\det\big(\bm{I}_d+\epsilon \nabla \bm{v}(\nabla \bm{\varphi})^{-1} \big)-1=
\left|\begin{array}{*{20}{c}}
  {1+\epsilon a_{11}}&{\epsilon a_{12}}\\
  {\epsilon a_{21}}&{1+\epsilon a_{22}}\\
  \end{array} \right|-1=\epsilon a_{11}+\epsilon a_{22}+\epsilon^2 a_{11}a_{22}-\epsilon^2 a_{12}a_{21},
\end{equation}
then it is easy to check that
$$ \lim _{\epsilon \rightarrow 0}\frac{{\det}\big(\bm{I}_{d}+\epsilon \nabla \bm{v}\left(\nabla \bm{\varphi}\right)^{-1}\big)-1}{\epsilon} = a_{11}+ a_{22} = \text{trace}( \nabla \bm{v}(\nabla \bm{\varphi})^{-1}),$$
hence we obtain
\begin{equation}\label{eq:appdiex-equal-2}
\lim\limits_{\epsilon\rightarrow 0} \frac{\det(\nabla\bm{\varphi}+\epsilon\nabla\bm{v})-\det(\nabla\bm{\varphi})}{\epsilon}= \det(\nabla\bm{\varphi})\text{trace}\big(\nabla\bm{v}(\nabla\bm{\varphi})^{-1}\big).
{}\end{equation}
Finally, assuming that
$\nabla\bm{v}:=\left( {\begin{array}{*{20}{c}}
  {t_{11}}&{t_{12}}\\
  {t_{21}}&{t_{22}}
  \end{array}} \right)$ and
$(\nabla \bm{\varphi})^{-1}:=\left( {\begin{array}{*{20}{c}}
  {b_{11}}&{b_{12}}\\
  {b_{21}}&{b_{22}}
  \end{array}} \right)$, 
 we have
\[ \text{trace}\big(\nabla \bm{v}(\nabla \bm{\varphi})^{-1}\big)=t_{11}b_{11}+t_{12}b_{21}+t_{21}b_{12}+t_{22}b_{22}\]
and
\[(\nabla \bm{\varphi})^{-\top} \cdot \nabla \bm{v}=\left( {\begin{array}{*{20}{c}}
  {b_{11}}&{b_{21}}\\
  {b_{12}}&{b_{22}}
  \end{array}} \right) \cdot
\left( {\begin{array}{*{20}{c}}
  {t_{11}}&{t_{12}}\\
  {t_{21}}&{t_{22}}
  \end{array}} \right)
=t_{11}b_{11}+t_{12}b_{21}+t_{21}b_{12}+t_{22}b_{22}.\] 
Therefore, we can deduce that 
\begin{equation}\label{eq:appdiex-equal-2b}
\text{trace}\big(\nabla \bm{v}(\nabla \bm{\varphi})^{-1}\big)=(\nabla \bm{\varphi})^{-\top} \cdot \nabla \bm{v},
\end{equation} 
which shows the assertion.
\end{proof}

\begin{lemma}\label{lemma3-8}
Let $ \bm{\nu}\in\mathcal{V} $ be an arbitrary perturbation of $ \bm{u} $. Thus, the G\^{a}teaux derivative of $\mathcal{C}(\bm{u}):=\int_{\Omega}(\det(\nabla(\bm{\varphi}+\bm{u}))-f)^2 d\bm{x}$ is given by
\begin{equation*}
\begin{aligned}
\frac{\partial\mathcal{C}(\bm{u})}{\partial\bm{u}}\bm{\nu}&=\int_\Omega\left\langle -2\nabla\cdot\mathcal{U}(\bm{u})(\bm{x}),\bm{\nu}\right\rangle d\bm{x},
\end{aligned}
\end{equation*}
where $\mathcal{U}(\bm{u})(\bm{x}):=\det(\nabla(\bm{\varphi}+\bm{u}))(\nabla(\bm{\varphi}+\bm{u}))^{-\top}(\det(\nabla(\bm{\varphi}+\bm{u}))-f)$.
\end{lemma}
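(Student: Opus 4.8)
The plan is to compute the Gâteaux derivative straight from its definition
\[
\frac{\partial\mathcal{C}(\bm{u})}{\partial\bm{u}}\bm{\nu}=\lim_{\epsilon\to 0}\frac{\mathcal{C}(\bm{u}+\epsilon\bm{\nu})-\mathcal{C}(\bm{u})}{\epsilon},
\]
and then to convert the resulting integral into the divergence form asserted in the statement. Writing $\bar{\bm{\varphi}}=\bm{\varphi}+\bm{u}$, the only $\epsilon$-dependence sits inside $\det(\nabla\bar{\bm{\varphi}}+\epsilon\nabla\bm{\nu})$, so the whole computation reduces to differentiating the determinant along the perturbation $\bm{\nu}$.

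First I would apply Lemma \ref{lemma_01} with the roles $\bm{\varphi}\mapsto\bar{\bm{\varphi}}$ and $\bm{v}\mapsto\bm{\nu}$, which gives the pointwise derivative $\det(\nabla\bar{\bm{\varphi}})(\nabla\bar{\bm{\varphi}})^{-\top}\cdot\nabla\bm{\nu}$ of the determinant. Expanding the square $(\det(\nabla\bar{\bm{\varphi}}+\epsilon\nabla\bm{\nu})-f)^2$ and applying the chain rule pointwise then produces the first variation
\[
\frac{\partial\mathcal{C}(\bm{u})}{\partial\bm{u}}\bm{\nu}=\int_\Omega 2\big(\det(\nabla\bar{\bm{\varphi}})-f\big)\det(\nabla\bar{\bm{\varphi}})(\nabla\bar{\bm{\varphi}})^{-\top}\cdot\nabla\bm{\nu}\,d\bm{x}=\int_\Omega 2\,\mathcal{U}(\bm{u})(\bm{x})\cdot\nabla\bm{\nu}\,d\bm{x},
\]
where in the last step I simply recognise the matrix field $\mathcal{U}(\bm{u})(\bm{x})$ from the statement. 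Here I would invoke the boundedness facts already recorded in the proof of Lemma \ref{lemma-lower-semi-2}, namely that $\det(\nabla\bar{\bm{\varphi}})$, the adjoint/cofactor matrix $(\nabla\bar{\bm{\varphi}})^{*}$, and $\det(\nabla\bar{\bm{\varphi}})-f$ are all $L^\infty$-bounded on $\mathcal{M}_{\bm{\omega}_0,r}(\Omega)$, in order to dominate the difference quotient and justify passing the limit under the integral sign.

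The final step is an integration by parts for the matrix inner product. Since $\mathcal{U}\cdot\nabla\bm{\nu}=\sum_{i,j}\mathcal{U}_{ij}\,\partial_j\nu_i$, the divergence theorem yields
\[
\int_\Omega \mathcal{U}(\bm{u})\cdot\nabla\bm{\nu}\,d\bm{x}=-\int_\Omega\langle\nabla\cdot\mathcal{U}(\bm{u}),\bm{\nu}\rangle\,d\bm{x}+\int_{\partial\Omega}\langle\mathcal{U}(\bm{u})\bm{n},\bm{\nu}\rangle\,dS,
\]
with the row-wise divergence $(\nabla\cdot\mathcal{U})_i=\sum_j\partial_j\mathcal{U}_{ij}$ and $\bm{n}$ the outward unit normal, and the boundary integral vanishes because $\bm{\nu}\in\mathcal{V}$ satisfies $\bm{\nu}|_{\partial\Omega}=\bm{0}$. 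Substituting back gives exactly $\int_\Omega\langle-2\nabla\cdot\mathcal{U}(\bm{u})(\bm{x}),\bm{\nu}\rangle\,d\bm{x}$, which is the claimed formula.

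I expect the main obstacle to be rigour rather than algebra, since the clean statement hides two regularity issues. The first is the justification of differentiating under the integral, i.e.\ controlling the $o(\epsilon)$ remainder from Lemma \ref{lemma_01} uniformly in $\bm{x}$; this is precisely what the $L^\infty$ bounds and the nondegeneracy $\det(\nabla\bm{\varphi})\neq 0$ are designed to handle. The second is that the integration by parts requires $\mathcal{U}(\bm{u})$ to be weakly differentiable, which needs $\bm{\varphi}\in\mathcal{C}^{\ell+1}(\Omega)$ together with enough regularity of $\bm{u}$ and $f$ so that the cofactor entries and their derivatives are well defined; otherwise the identity must be read in the weak sense, with $-2\nabla\cdot\mathcal{U}(\bm{u})$ interpreted as the distributional divergence.
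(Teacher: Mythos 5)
Your proposal is correct and follows essentially the same route as the paper's proof: both apply Lemma \ref{lemma_01} to differentiate the determinant along the perturbation, arrive at the intermediate expression $\int_\Omega 2\,\mathcal{U}(\bm{u})\cdot\nabla\bm{\nu}\,d\bm{x}$, and integrate by parts with the boundary term vanishing since $\bm{\nu}|_{\partial\Omega}=\bm{0}$ (the paper factors the difference of squares where you expand and use the chain rule, which is an immaterial variation). Your version is in fact slightly more careful, both in writing the boundary term in the standard form $\langle\mathcal{U}\bm{n},\bm{\nu}\rangle$ and in flagging the dominated-convergence and weak-differentiability issues that the paper passes over.
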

\begin{proof}
For any $\bm{u}, \bm{\nu}\in\mathcal{V}$, let us define outer normal vector $\bm{n}$ on boundary $\partial\Omega$ and any small enough constant $\epsilon$, from \eqref{eq:appdiex-equal-2} and \eqref{eq:appdiex-equal-2b} in \Cref{lemma_01} one has
\[\int_{\partial\Omega}\left\langle \frac{\partial\mathcal{U}(\bm{u})(\bm{x})}{\partial\bm{n}},\bm{\nu}\right\rangle d\bm{s}=0,\]
and
\begin{align*}\label{eq:eqvarphi}
\frac{\partial\mathcal{C}(\bm{u})}{\partial\bm{u}}\bm{\nu}=&\lim\limits_{\epsilon\rightarrow 0} 
\frac{\mathcal{C}[\bm{u}+\epsilon \bm{\nu}]- \mathcal{C}(\bm{u})}{\epsilon}\\
=&\lim\limits_{\epsilon\rightarrow 0}\int_\Omega\frac{[\det(\nabla(\bm{\varphi}+\bm{u})+\epsilon\nabla\bm{\nu})-\det(\nabla(\bm{\varphi}+\bm{u}))]}{\epsilon}\\
&\hspace{4cm}\cdot[\det(\nabla(\bm{\varphi}+\bm{u})+\epsilon\nabla\bm{\nu})+\det(\nabla(\bm{\varphi}+\bm{u}))-2f]d\bm{x}\\
\xlongequal[]{\eqref{eq:appdiex-equal-2}}
&\int_\Omega\det(\nabla(\bm{\varphi}+\bm{u}))\cdot trace(\nabla\bm{\nu}(\nabla(\bm{\varphi}+\bm{u}))^{-1})[2\det(\nabla(\bm{\varphi}+\bm{u}))-2f]d\bm{x}\\
\xlongequal[]{\eqref{eq:appdiex-equal-2b}}
&\int_\Omega2\left\langle\det(\nabla(\bm{\varphi}+\bm{u}))(\nabla(\bm{\varphi}+\bm{u}))^{-\top}(\det(\nabla(\bm{\varphi}+\bm{u}))-f),\nabla\bm{\nu}\right\rangle d\bm{x}\\
=&\int_\Omega2\left\langle \mathcal{U}(\bm{u})(\bm{x}),\nabla\bm{\nu}\right\rangle d\bm{x}-\int_{\partial\Omega}2\left\langle \frac{\partial\mathcal{U}(\bm{u})(\bm{x})}{\partial\bm{n}},\bm{\nu}\right\rangle d\bm{s}\\
=&\int_\Omega\left\langle -2\nabla\cdot \mathcal{U}(\bm{u})(\bm{x}),\bm{\nu}\right\rangle d\bm{x},
\end{align*}
which proves the assertion.
\end{proof}

\begin{lemma}\label{lemma_02}
Let $\bm{u}\in\mathcal{V}$ be a suitable perturbation of $\bm{\varphi}$, then one has
\begin{equation*}
\begin{aligned}
\nabla\cdot\big[\det(\nabla(\bm{\varphi}+\bm{u}))(\nabla(\bm{\varphi}+\bm{u}))^{-\top}&\big(\det(\nabla(\bm{\varphi}+\bm{u}))-f\big)\big]\\ =&\det(\nabla(\bm{\varphi}+\bm{u}))(\nabla(\bm{\varphi}+\bm{u}))^{-\top}  \nabla\big(\det(\nabla(\bm{\varphi}+\bm{u}))-f \big).
\end{aligned}
\end{equation*}
\end{lemma}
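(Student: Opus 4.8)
The plan is to abbreviate $\bar{\bm{\varphi}}:=\bm{\varphi}+\bm{u}$, write $J:=\det(\nabla\bar{\bm{\varphi}})$, introduce the cofactor matrix $A:=J(\nabla\bar{\bm{\varphi}})^{-\top}$ (so that $A$ is exactly the matrix sitting inside the divergence on the left-hand side), and set the scalar $g:=J-f$. With this notation the claimed identity reads $\nabla\cdot(gA)=A\,\nabla g$, where $\nabla\cdot$ acts row-wise on the matrix $gA$. First I would apply the product rule componentwise: denoting by $A_{ij}$ the entries of $A$, the $i$-th component of the left-hand side is $\sum_j\partial_j(gA_{ij})=\sum_j A_{ij}\,\partial_j g+g\sum_j\partial_j A_{ij}$. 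The first sum is precisely the $i$-th component of $A\,\nabla g$, so the whole lemma collapses to showing that the row-wise divergence of the cofactor matrix vanishes, i.e. $\sum_j\partial_j A_{ij}=0$ for every $i$.

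The key step, and the only one that carries real content, is this \emph{Piola identity} $\sum_j\partial_j A_{ij}=0$. Since the present section works out the 2D case, I would verify it by direct computation there: writing $A$ explicitly in terms of the first-order partials of $\bar{\bm{\varphi}}=(\bar{\varphi}_1,\bar{\varphi}_2)$ (namely $A_{11}=\partial_2\bar{\varphi}_2$, $A_{12}=-\partial_1\bar{\varphi}_2$, $A_{21}=-\partial_2\bar{\varphi}_1$, $A_{22}=\partial_1\bar{\varphi}_1$), each row divergence reduces to a difference of mixed second-order partials of a single component, e.g. $\partial_1\partial_2\bar{\varphi}_2-\partial_2\partial_1\bar{\varphi}_2$, which cancels by equality of mixed partials. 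The same cancellation propagates to 3D and to general $d$, where the row divergence of $\operatorname{cof}(\nabla\bar{\bm{\varphi}})$ is again a signed sum of commuting mixed partials.

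The main obstacle is therefore purely a regularity issue: the cancellation of mixed partials needs $\bar{\bm{\varphi}}\in\mathcal{C}^2$, so I would note that under \Cref{assumption1}, with $\bm{\varphi}\in\mathcal{C}^{\ell+1}(\Omega)$ (taking $\ell\geq 1$) and $\bm{u}$ assumed smooth enough, $\bar{\bm{\varphi}}$ is twice continuously differentiable and Schwarz's theorem applies pointwise; in the weak setting one instead invokes the integration-by-parts form of the Piola identity already used implicitly in \Cref{lemma3-8} when the boundary term was discarded. Combining the product-rule expansion with $\sum_j\partial_j A_{ij}=0$ yields $\nabla\cdot(gA)=A\,\nabla g$, which is exactly the asserted identity and also closes the computation of $\nabla\cdot\mathcal{U}(\bm{u})$ appearing in \Cref{lemma3-8}.
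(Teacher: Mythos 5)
Your proposal is correct and follows essentially the same route as the paper: both identify the matrix inside the divergence as the cofactor matrix $\det(\nabla\bar{\bm{\varphi}})(\nabla\bar{\bm{\varphi}})^{-\top}$, invoke the Piola identity (vanishing row-wise divergence of the cofactor matrix, verified in 2D from the explicit entries via equality of mixed partials), and conclude by the product rule. Your added remarks on the $\mathcal{C}^2$ regularity needed for Schwarz's theorem are a welcome clarification but do not change the argument.
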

\begin{proof}
Since we have the inclusion from the property of the determinant
\begin{equation}\label{eq:det}
\det(\nabla(\bm{\varphi}+\bm{u}))(\nabla(\bm{\varphi}+\bm{u}))^{-\top} =
(\nabla(\bm{\varphi}+\bm{u}))^{*\top} :=  
\left( {\begin{array}{*{20}{c}}
  {{(\varphi^2+u^2)_{y}}}&{ - {(\varphi^2+u^2) _{x}}}\\
  { - {(\varphi^1+u^1)_{y}}}&{{(\varphi^1+u^1)_{x}}}
  \end{array}} \right),
\end{equation}
so it is easy to obtain
\begin{equation}\label{eq:free-div}
\nabla\cdot \big[\det(\nabla{(\bm{\varphi}+\bm{u})})(\nabla(\bm{\varphi}+\bm{u}))^{-\top} \big] =\bm{0}.
\end{equation}
In particular, the formula \eqref{eq:free-div} also holds for the 3D case. 
Finally, we simplify 
\begin{equation*}
\begin{split}
\nabla\cdot\big[\det(\nabla(\bm{\varphi}+\bm{u}))&(\nabla(\bm{\varphi}+\bm{u}))^{-\top}\big(\det(\nabla(\bm{\varphi}+\bm{u}))-f\big)\big]\\ =&\big(\det(\nabla(\bm{\varphi}+\bm{u}))-f\big)\nabla\cdot\big[\det(\nabla(\bm{\varphi}+\bm{u}))(\nabla(\bm{\varphi}+\bm{u}))^{-\top}\big]  \\
&+\det(\nabla(\bm{\varphi}+\bm{u}))(\nabla(\bm{\varphi}+\bm{u}))^{-\top} \nabla \big(\det(\nabla(\bm{\varphi}+\bm{u}))-f \big) \\
=&\det(\nabla(\bm{\varphi}+\bm{u}))(\nabla(\bm{\varphi}+\bm{u}))^{-\top}  \nabla\big(\det(\nabla(\bm{\varphi}+\bm{u}))-f \big),
\end{split}
\end{equation*}
which proves the assertion.
\end{proof}

To obtain the optimal solution of the $\bm{u}$-subproblem, we consider its Euler-Lagrange equation, which is equivalent to solving
\begin{equation}\label{eq:eqEUsub1}
\begin{aligned}
\big(T(\bm{\varphi}^{k}&+\bm{u})-R \big)\nabla T-\tau_1\Delta\bm{u}+\frac{1}{\gamma}(\bm{u}-\bm{u}^k)\\
&-\lambda^k\det\big(\nabla(\bm{\varphi}^{k}+\bm{u})\big)\big(\nabla(\bm{\varphi}^{k}+\bm{u})\big)^{-\top}\nabla\big[\det\big(\nabla(\bm{\varphi}^{k}+\bm{u})\big)-f^k\big]=\bm{0}.\\
\end{aligned}
\end{equation} 
The above nonlinear systems \eqref{eq:eqEUsub1} with the boundary condition $\bm{u}=\bm{0}$ can be rewritten as 
\begin{equation}\label{eq:UEuler}
\left\{ \begin{array}{rll}
-\tau_1\Delta\bm{u}+\frac{1}{\gamma}\bm{u}&=\bm{r}(\bm{u}), &\text{in}~\Omega, \\
   \bm{u} &=\bm{0},         &\text{on}~\partial\Omega ,
\end{array}\right.
\end{equation}
where $ \bm{r}(\bm{u})=-\big(T(\bm{\varphi}^{k}+\bm{u})-R\big)\nabla T+\lambda^k\det\big(\nabla(\bm{\varphi}^{k}+\bm{u})\big)\big(\nabla(\bm{\varphi}^{k}+\bm{u})\big)^{-\top}\nabla\big[\det\big(\nabla(\bm{\varphi}^{k}+\bm{u})\big)-f^k\big]+\frac{1}{\gamma}\bm{u}^k $.
The nonlinear partial differential equations \eqref{eq:UEuler} are difficult to be directly solved, here a simple linearization technique is used, leading to the systems as follows
\begin{equation}\label{eq:UEuler-k}
  \left\{ \begin{array}{rll}
    -\tau_1\Delta\bm{u}+\frac{1}{\gamma}\bm{u}&=\bm{r}(\bm{u}^{k}), &\text{in}~\Omega, \\
     \bm{u}&=\bm{0},       &\text{on}~\partial\Omega.
  \end{array}\right.
\end{equation}

Once the solution $ \bm{u}^{k+1}$ of \eqref{eq:UEuler-k} is obtained, we minimize the $f$-subproblem by fixing $ \bm{u}^{k+1} $. In other words, we
look for $ f^{k+1}(\bm{x}) $ by solving the Euler-Lagrange equation
\begin{equation}\label{eq:eqGf}
\tau_2 d\phi(f)-\tau_3\Delta f-\lambda\big(\det\big(\nabla(\bm{\varphi}^k+\bm{u}^{k+1})\big)-f\big)=0,
\end{equation}
where $ d\phi(f):=d\phi_1(f) = 1 - 1/f^2 $ and $ d\phi(f):=d\phi_2(f)= \log(f)+1-1/f $ for the functions $\phi_1(f)$ and $\phi_2(f)$, respectively.
Combined with the boundary condition $f(\bm{x})=1$ on $\partial\Omega$, the optimal solution of the $f$-subproblem can be seen as the solution of the system defined by
\begin{equation}\label{eq:fEuler}
\left\{\begin{array}{rll}
-\tau_3\Delta f+{\lambda}f&={\lambda}\det\big(\nabla(\bm{\varphi}^k+\bm{u}^{k+1}))\big)-\tau_2 d\phi(f^k),  &\text{in}~\Omega, \\
 f &= {1}, &\text{on}~\partial\Omega.\\
\end{array}\right.
\end{equation}

\begin{remark}\label{remk1}
As described above, our model can theoretically guarantee that $f(\bm{x})$ is always greater than zero. However, due to the selection of multiple modeling parameters and the lack of proper handling of the nonlinear part of \eqref{eq:fEuler}, $f(\bm{x})$ may not always be greater than zero everywhere. To prevent grid folding, we use the deformation correction technique in our numerical implementation (see \Cref{DJd} for more details).
\end{remark}

\begin{remark}\label{remk2}
We observe that it is not necessary to add a "proximal point" term in the $f(\bm{x})$ subproblem in \eqref{eq:f-subproblem}, as the Euler-Lagrange equation~\eqref{eq:fEuler} already contains the zero-order derivative term of $f(\bm{x})$, and $\lambda$ gradually increases.
\end{remark}

\subsection{Discretization}
In this subsection, we will provide a brief overview of the discretization of systems \eqref{eq:UEuler-k} and \eqref{eq:fEuler}, as well as the deformation correction strategy. Without loss of generality, we will focus on the 2D case, although this technique can be extended to 3D. A uniform Cartesian grid of $m \times n$ is created in the image domain $\Omega:=[0, 1]^2$. Each pixel is a box with lengths $h_x = 1/m$ and $h_y = 1/n$, which are associated with cell-centered grid points. The discrete image domain is denoted by $\Omega_{h}=\{(x_i,y_j) \;|\;x_i=(i-1/2)h_x,y_j=(j-1/2)h_y,i=1,\ldots, m,\;j=1,\ldots, n\}$, and $(i, j)$ is the coordinate index $(x_i,y_j)$.

Typically, the finite difference scheme is used to approximate the first- and second-order derivatives in image processing. We denote the discrete schemes applied to  $v\in \mathbb{R}$ and $\bm{u}=(u^1,u^2)^{\top}\in \mathbb{R}^2$ at the grid point $(i, j)$ by
\begin{equation*}\label{eq:discretization-operater}
  \begin{array}{cl} 
    \begin{aligned}
      (\nabla v)_{i,j}&=(\delta_x v_{i,j},\delta_y v_{i,j})^{\top},\quad~ (\nabla \cdot \bm{u})_{i,j}=\delta_x u^1_{i,j}+\delta_y u^2_{i,j}, \\
      (\Delta u^\ell)_{i,j}&=\delta_{xx} u^\ell_{i,j}+\delta_{yy} u^\ell_{i,j},\quad (\Delta \bm{u})_{i,j}=\left((\Delta u^1)_{i,j},(\Delta u^2)_{i,j}\right)^{\top},
    \end{aligned}
  \end{array}
\end{equation*}
where $ \ell=1 \text{ or } 2 $, and
\begin{equation*}\label{eq:discretization-dev}
  \begin{array}{cl} 
    \begin{aligned}
      \delta_x v_{i,j}&=(v_{i+1,j}-v_{i-1,j})/(2h_x),\qquad\quad~\, \delta_y v_{i,j}=(v_{i,j+1}-v_{i,j-1})/(2h_y), \\
      \delta_{xx} u^\ell_{i,j} &= (u^\ell_{i-1,j}-2u^\ell_{i,j}+u^\ell_{i+1,j})/h_x^2,\quad \delta_{yy} u^\ell_{i,j} = (u^\ell_{i,j-1}-2u^\ell_{i,j}+u^\ell_{i,j+1})/h_y^2.
    \end{aligned}
  \end{array}
\end{equation*}

\subsubsection{The discretizations of two subproblems}
To proceed, let us define $\mathcal{C}=\det\big(\nabla(\bm{\varphi}+\bm{u})\big)-f$. Based on the above definitions,  the finite difference approximation $ \mathcal{C}^k_{i,j} $ of $\mathcal{C}$ at the grid point $(i, j)$ can be denoted as
\begin{equation*}\label{eq:discretization-u-subproblem}
  \begin{aligned}
    \mathcal{C}_{i,j}^k
    :=&\big[ \operatorname{det}\big( \nabla(\bm{\varphi}^k+\bm{u}^{k})\big) - f^k\big] \Big|_{i,j} 
    = \left| {\begin{array}{*{20}{c}}
        \delta_{x}(\varphi^{1,k}_{i,j} + u^{1,k}_{i,j})&\delta_{y}(\varphi^{1,k}_{i,j} + u^{1,k}_{i,j})\\
        \delta_{x}(\varphi^{2,k}_{i,j} + u^{2,k}_{i,j})&\delta_{y}(\varphi^{2,k}_{i,j} + u^{2,k}_{i,j})
    \end{array}} \right| - f^k_{i,j} \\
   =&(\delta_{x}\varphi^{1,k}_{i,j}+\delta_{x}u^{1,k}_{i,j})(\delta_{y}\varphi^{2,k}_{i,j}+\delta_{y}u^{2,k}_{i,j})-(\delta_{y}\varphi^{1,k}_{i,j}+\delta_{y}u^{1,k}_{i,j})(\delta_{x}\varphi^{2,k}_{i,j}+\delta_{x}u^{2,k}_{i,j})-f^k_{i,j}.
  \end{aligned}
\end{equation*}
See also \Cref{DJd} for more details. From \Cref{eq:det}, it is easy to obtain
\begin{equation*}
\begin{split}
  \det\big(\nabla(\bm{\varphi}&+\bm{u})\big)\big(\nabla(\bm{\varphi}+\bm{u})\big)^{-\top}\nabla \mathcal{C}\\
  &=  
  \left( {\begin{array}{*{20}{c}}
      {\delta_{y}(\varphi^2+u^2)}&{ - \delta_{x}(\varphi^2+u^2)}\\
      {- \delta_{y}(\varphi^1+u^1)}&{\delta_{x}(\varphi^1+u^1)}
  \end{array}} \right) (\delta_{x}\mathcal{C}, \delta_{y}\mathcal{C})^{\top}\\
  &= \Big(\delta_{x}\mathcal{C}\cdot\delta_{y}(\varphi^2+u^2)-\delta_{y}\mathcal{C}\cdot\delta_{x}(\varphi^2+u^2), -\delta_{x}\mathcal{C}\cdot\delta_{y}(\varphi^1+u^1)+\delta_{y}\mathcal{C}\cdot\delta_{x}(\varphi^1+u^1) \Big)^{\top}, 
\end{split} 
\end{equation*}
hence the finite difference approximation of the $\bm{u}$-subproblem \eqref{eq:UEuler-k} is given by
\begin{equation}\label{eq:discretization-u}
  \left\{\begin{array}{ll}  
    \begin{aligned}
      -\tau_1(\delta_{xx}+\delta_{yy}+\frac{1}{\tau_1\gamma})u^{1,k+1}_{i,j}
      =&-(T^k-R)_{i,j}\delta_x T^k_{i,j}+\frac{1}{\gamma}u^{1,k}_{i,j}\\
            +&\lambda^k \big[(\delta_{y}\varphi^{2,k}_{i,j}+\delta_{y}u^{2,k}_{i,j}) \delta_{x} \mathcal{C}_{i,j}^k - (\delta_{x} \varphi^{2,k}_{i,j} +\delta_{x} u^{2,k}_{i,j}) \delta_{y} \mathcal{C}_{i,j}^k \big],\\
      -\tau_1(\delta_{xx}+\delta_{yy}+\frac{1}{\tau_1\gamma})u^{2,k+1}_{i,j}
      =&-(T^k-R)_{i,j}\delta_y T^k_{i,j}+\frac{1}{\gamma}u^{2,k}_{i,j}\\ 
            -&\lambda^k \big[(\delta_{y}\varphi^{1,k}_{i,j}+\delta_{y}u^{1,k}_{i,j})  \delta_{x} \mathcal{C}_{i,j}^k - (\delta_{x} \varphi^{1,k}_{i,j}+ \delta_{x} u^{1,k}_{i,j}) \delta_{y} \mathcal{C}_{i,j}^k \big],\\
    \end{aligned}
  \end{array}\right.
\end{equation}
where $T^k:=T(\bm{\varphi}^{k}+\bm{u}^k)$ and the finite difference approximation of the $f$-subproblem \eqref{eq:fEuler} is given by
\begin{equation}\label{eq:discretization-f2}
  \begin{aligned}
    -\tau_3 (\delta_{xx}+\delta_{yy} )f^{k+1}_{i,j} + \lambda^k f^{k+1}_{i,j} =&~ \lambda^k \big(\delta_{x}\varphi^{1,k+1}_{i,j}+\delta_{x}u^{1,k+1}_{i,j})(\delta_{y}\varphi^{2,k+1}_{i,j}+\delta_{y}u^{2,k+1}_{i,j})\\
    &-\lambda^k(\delta_{y}\varphi^{1,k+1}_{i,j}+\delta_{y}u^{1,k+1}_{i,j})(\delta_{x}\varphi^{2,k+1}_{i,j}+\delta_{x}u^{2,k+1}_{i,j}\big)  - \tau_2 d\phi(f^k_{i,j}),
  \end{aligned}
\end{equation}
where $d\phi(f^k_{i,j}):=d\phi_1(f^k_{i,j}) = 1 - \frac{1}{(f^k_{i,j})^2} $ or $d\phi(f^k_{i,j}):=d\phi_2(f^k_{i,j}) =\log(f^k_{i,j})+1-\frac{1}{f^k_{i,j}} $.

\subsubsection{Discretizing the Jacobian determinant}\label{DJd}
Similar to our previous work \cite{zhang2021diffeomorphic}, 
the Jacobian determinant $\det(\nabla\bm{\varphi})|_{o} $ of the deformation $ o:=\varphi_{i,j} $ at the cell center $ (i, j) $ (\emph{see} \Cref{fig:mesh-2D}) is given by
 \begin{align*}\label{Jac-Dis}
  \left.\operatorname{det}(\nabla \bm{\varphi})\right|_{o}&=
  \left|\begin{array}{cc}
    \delta_{x} \varphi_{i, j}^{1} & \delta_{y} \varphi_{i, j}^{1} \\
    \delta_{x} \varphi_{i, j}^{2} & \delta_{y} \varphi_{i, j}^{2}
  \end{array}\right|
  =\cfrac{1}{4 h_{x} h_{y}}\left|\begin{array}{cc}
    \varphi_{i+1, j}^{1}-\varphi_{i-1, j}^{1} & \varphi_{i, j+1}^{1}-\varphi_{i, j-1}^{1} \\
    \varphi_{i+1, j}^{2}-\varphi_{i-1, j}^{2} & \varphi_{i, j+1}^{2}-\varphi_{i, j-1}^{2}
  \end{array}\right|\\
  &=\cfrac{1}{4 h_{x} h_{y}}\left(\left|\begin{array}{cc}
    \varphi_{i+1, j}^{1}-\varphi_{i, j}^{1} & \varphi_{i, j+1}^{1}-\varphi_{i, j}^{1} \\
    \varphi_{i+1, j}^{2}-\varphi_{i, j}^{2} & \varphi_{i, j+1}^{2}-\varphi_{i, j}^{2}
  \end{array}\right|+\left|\begin{array}{cc}
    \varphi_{i, j+1}^{1}-\varphi_{i, j}^{1} & \varphi_{i-1, j}^{1}-\varphi_{i, j}^{1} \\
    \varphi_{i, j+1}^{2}-\varphi_{i, j}^{2} & \varphi_{i-1, j}^{2}-\varphi_{i, j}^{2}
  \end{array}\right|\right.\\
  &\left.\qquad \qquad +\left|\begin{array}{cc}
    \varphi_{i-1, j}^{1}-\varphi_{i, j}^{1} & \varphi_{i, j-1}^{1}-\varphi_{i, j}^{1} \\
    \varphi_{i-1, j}^{2}-\varphi_{i, j}^{2} & \varphi_{i, j-1}^{2}-\varphi_{i, j}^{2}
  \end{array}\right|+\left|\begin{array}{cc}
    \varphi_{i, j-1}^{1}-\varphi_{i, j}^{1} & \varphi_{i+1, j}^{1}-\varphi_{i, j}^{1} \\
    \varphi_{i, j-1}^{2}-\varphi_{i, j}^{2} & \varphi_{i+1, j}^{2}-\varphi_{i, j}^{2}
  \end{array}\right|\right)\\
  &=\cfrac{1}{2}\left(R_{\Delta obd}^{i, j}+R_{\Delta oda}^{i, j}+R_{\Delta oac}^{i, j}+R_{\Delta ocb}^{i, j}\right) \text {, }
\end{align*}
where $ R_{\Delta obd}^{i, j} $ is an area ratio of the triangle signed area
\begin{equation*}
\frac{1}{2}\left|\begin{array}{cc}
\varphi_{i+1, j}^{1}-\varphi_{i, j}^{1} & \varphi_{i, j+1}^{1}-\varphi_{i, j}^{1} \\
\varphi_{i+1, j}^{2}-\varphi_{i, j}^{2} & \varphi_{i, j+1}^{2}-\varphi_{i, j}^{2}
\end{array}\right|
\end{equation*}
to the area element $ h_xh_y $. 

\begin{figure}[h]
  \centering
  \includegraphics[width=0.4\linewidth]{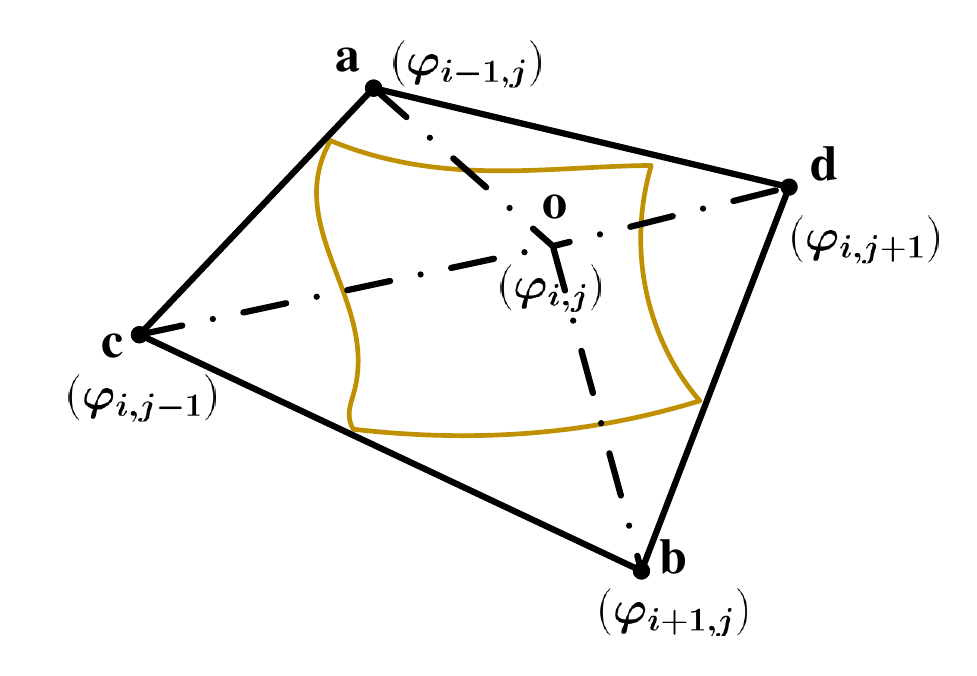}
  \caption{ Finite difference computation involved the Jacobian determinant $\det(\nabla\bm{\varphi})|_{o} $ of the deformation $ o:=\varphi_{i,j} $ at the cell center $ (i, j)$ for $ d=2 $.}
  \label{fig:mesh-2D}
\end{figure}

\begin{definition}
The folding degree \text{Deg}(p) is the number of edges that contain the point $p$ in a triangle with a negative area ratio, e.g., the point $b$ is a folding key point due to $\text{Deg}(b)=5,\text{Deg}(g)=\text{Deg}(o)=3$ in \Cref{fig:correction2}(a).\textbf{}
\end{definition}

If one of the ratios $R_{\Delta\bm{\cdot}}^{i, j}$ is negative while the others are positive, the Jacobian determinant $\det(\nabla\bm{\varphi})_{o}$ may still be positive, even though a twist has occurred. To detect such twists, we introduce a signed area minimum ratio indicator $R_{i, j}$ that is defined by \begin{equation}\label{grid_unfolding_indicator} 
R_{i, j}=\min \left\{R_{\Delta obd}^{i, j}, R_{\Delta oda}^{i, j}, R_{\Delta oac}^{i, j}, R_{\Delta ocb}^{i,j}\right\}. 
\end{equation} 
If $R_{i,j}$ is negative, we will use a deformation correction strategy to correct triangles with negative area ratios. This involves collecting the grid folding points, such as points $g$, $o$ and $b$ in \Cref{fig:correction2}(a), selecting a folding key point with a large folding degree and then moving it to a new location to ensure that the grid unfolding indicators of the five adjacent center points ($e, f, g, o$ and $b^\prime$, respectively), are all positive, \emph{see} \Cref{fig:correction2}(b) and \Cref{fig:correction2}(c). The corresponding grid correction strategy is described in \Cref{alg:DC}. This approach can be extended to 3D, as discussed in \cite{zhang2021diffeomorphic}.
\begin{figure}
    \centering
    \subfigure[Seeking the folding key point]{
        \begin{minipage}[t]{0.32\linewidth}
            \centering
            \includegraphics[width=4.2cm]{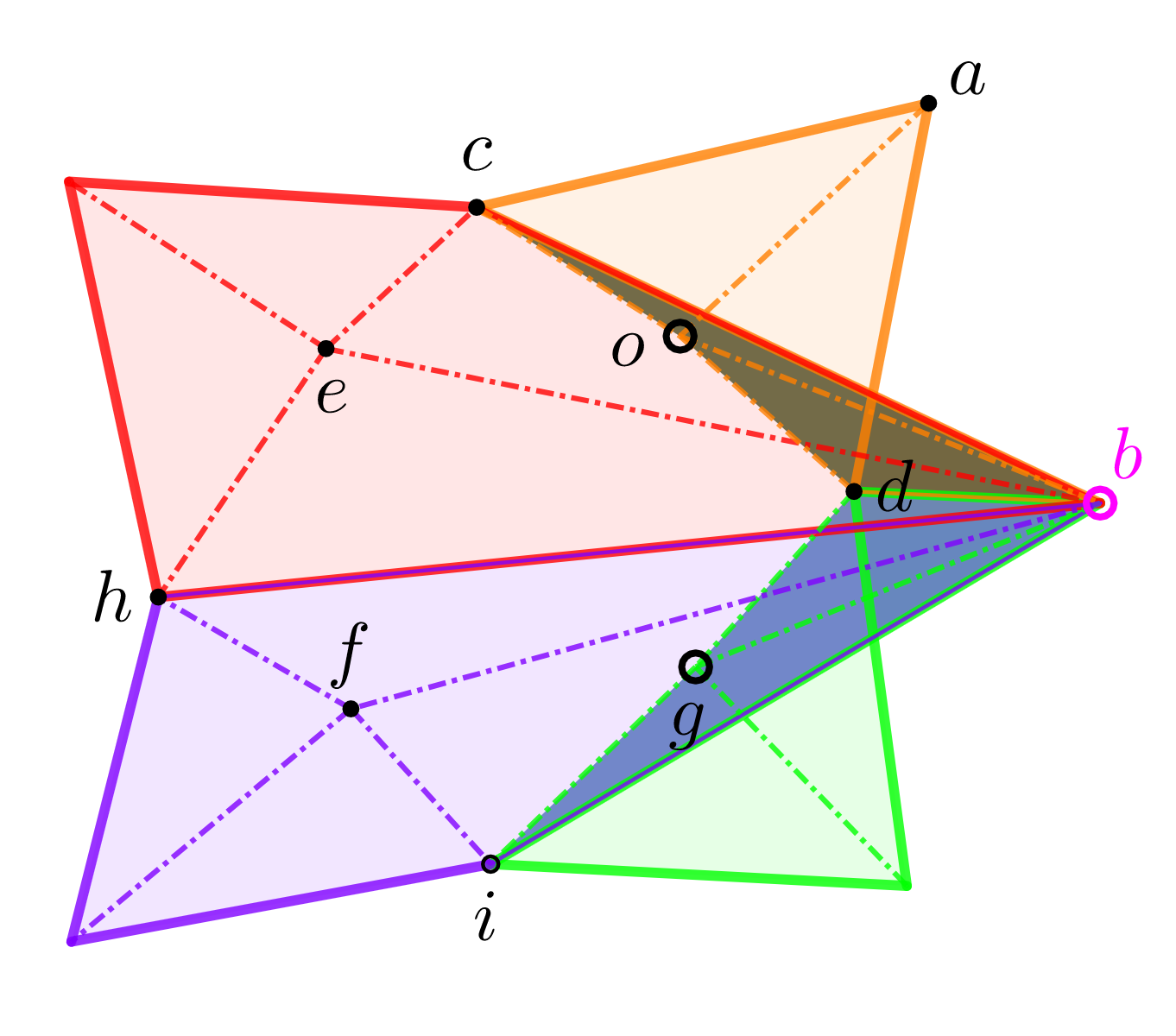}
        \end{minipage}
    }%
    \subfigure[Moving the folding key point]{
        \begin{minipage}[t]{0.32\linewidth}
            \centering
            \includegraphics[width=4cm]{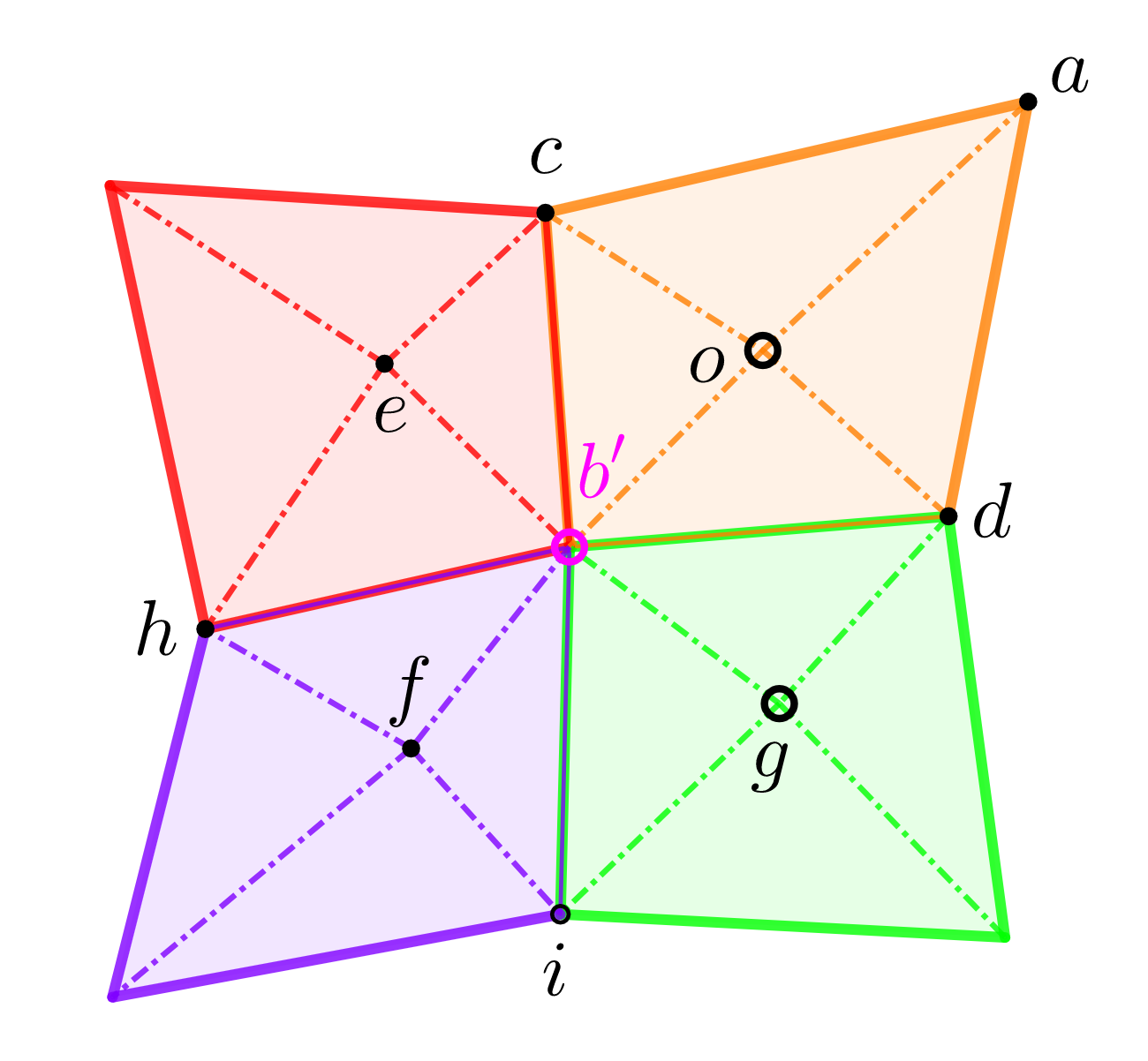}
        \end{minipage}
    }%
    \subfigure[Checking five adjacent cell centers]{
        \begin{minipage}[t]{0.32\linewidth}
            \centering
            \includegraphics[width=4cm]{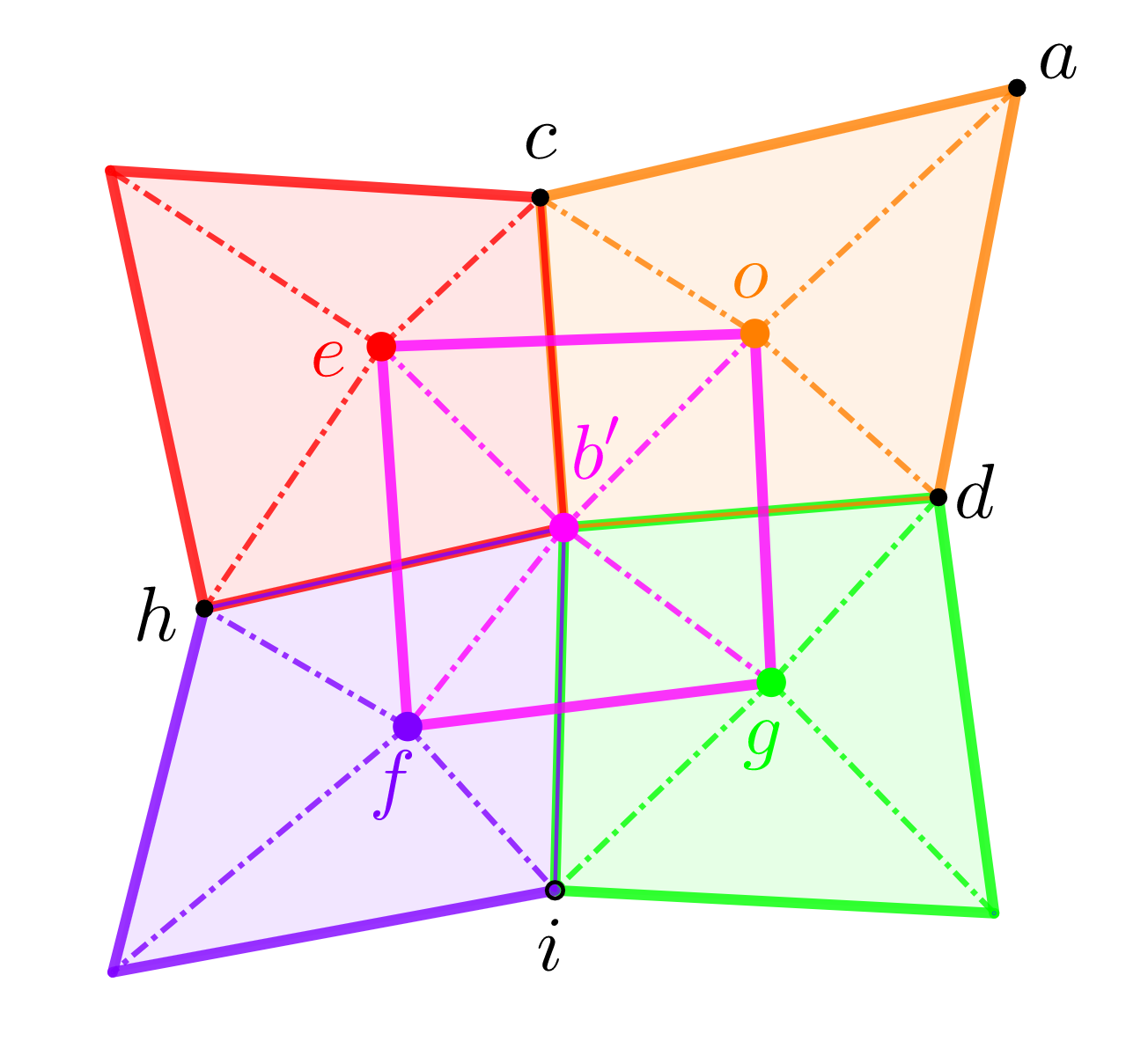}
        \end{minipage}
    }%
  \caption{Grid folding correction.}
  \label{fig:correction2}
\end{figure}

\begin{algorithm}[h]
\caption{Deformation correction}\label{alg:DC}
\setcounter{AlgoLine}{0}
\LinesNumbered
  \KwIn{ $\bm{\varphi}$;}
  
  Initialization: $ \epsilon = 10^{-2}$;
  
  Compute $R_{i, j}$ by using (\ref{grid_unfolding_indicator});
  
  Find $S := \{(i,j)~|~R_{i,j} < \epsilon \}$ and the folding points set $P$ that cause overlaps.\\
  $P := \big\{ P_{i,j} \in \{(i,j),(i-1,j),(i,j-1),(i+1,j),(i,j+1)~|~(i,j)\in S \} \big\}$;
  
  \uIf{$P = \varnothing$}{
    Exit this algorithm;
  }
  \Else{ 
    \ForEach{$(i,j)\in P$}{
      Optimize $\bm{\varphi}_{i,j}$ such that $\min \{ R_{i,j},R_{i-1,j},R_{i,j-1},R_{i+1,j},R_{i,j+1} \} \geq \epsilon$;
    }
  }
  $ R_{min} = \min\{ R_{i,j} \} $;

  \KwOut{ $ \bm{\varphi}, R_{min} $. }  
\end{algorithm}

\subsection{Numerical algorithms}
Finding the global minimum of the displacement field $\bm{u}$-subproblem in the optimization problem \eqref{eq:model2} is generally challenging and computationally difficult due to the nonconvex nature of the problem. To overcome this, we propose an algorithm that addresses the diffeomorphic registration model by incorporating the Jacobian constraint of deformation. In this algorithm, we optimize variants $\bm{u}$ and $f(\bm{x})$ by combining the penalty method with the deformation correction technique for diffeomorphic image registration. The steps to solve the diffeomorphic registration model with the Jacobian determinant constraint are summarized in \Cref{alg:DIRPMnew}. 
\begin{algorithm}[h]
  \caption{Diffeomorphic image registration based on penalty method (DIRPM)}\label{alg:DIRPMnew}
  \setcounter{AlgoLine}{0}
  \LinesNumbered
  
  \KwIn{$R, T, \bm{\varphi}^1, f^1, \tau_1, \tau_2, \tau_3, \lambda^{1},\gamma, \rho, MaxIter $;}
  Initialization: $ \bm{u}^1 = \bm{0}$, $ \epsilon_{\mathcal{L}} = 10^{-4} $, $ \epsilon_\varphi = 10^{-3} $;
  
  \For{$k=1,\ldots, MaxIter$}
  {Determine displacement field $\bm{u}^{k+1}$ by solving the equation \eqref{eq:UEuler-k}; 

    Update deformation field $\bm{\varphi}^{k+1}=\bm{\varphi}^{k}+ \bm{u}^{k+1} $;
    
    Correct $ \bm{\varphi}^{k+1} $ and obtain grid unfolding indicator $ R_{min}^{k+1} $ by \Cref{alg:DC};
  
Determine $ f^{k+1} $ by solving equation \eqref{eq:fEuler}; 

Update the penalty factor $\lambda^{k+1}=\rho\lambda^{k}$;

    \lIf{$\frac{ \|\mathcal{L}_{\lambda}^k(\bm{u}^{k+1},f^{k+1};\bm{u}^{k})-\mathcal{L}_{\lambda}^{k-1}(\bm{u}^{k},f^{k};\bm{u}^{k-1})\|}{\|\mathcal{L}_{\lambda}^1(\bm{u}^{1},f^{1};\bm{u}^{1})\|} \leq \epsilon_{\mathcal{L}} $~\rm{or}~ $\frac{\|\bm{\varphi}^{k+1}-\bm{\varphi}^{k}\|}{\|\bm{\varphi}^{1}\|} \leq \epsilon_\varphi $}
    { 
      break}
  } 

  $T^{new} = T(\bm{\varphi}^{k+1})$;
  
  $\bm{\varphi}^{new} = \bm{\varphi}^{k+1}$;
  
  $f^{new} =  f^{k+1}$;
  
  \KwOut{ $T^{new}, \bm{\varphi}^{new}, f^{new}$.}
\end{algorithm}

\begin{algorithm}
  \caption{Multilevel Registration}\label{alg:MultiLevelRegistration}
  \setcounter{AlgoLine}{0}
  \KwIn{$R, T, L, \tau_1, \tau_2, \tau_3, \lambda,\gamma,\rho, MaxIter$;}
  Initialization: $ R_1=R, T_1=T, \bm{\varphi}_L = \bm{x}, f_{L}=1 $;

  \For{$\ell=2,\ldots, L$}{
    Compute the \emph{reference} and \emph{template} images for the $ \ell$-level coarse grid:\\ $R_\ell$ = $I_h^H R_{\ell-1}, T_\ell$ = $I_h^H T_{\ell-1}$;
  }

  \For{$\ell=L,\ldots, 1$}{
    Call the diffeomorphic image registration \Cref{alg:DIRPMnew} based on penalty method: 
    
    $[T^{new}_\ell, \bm{\varphi}^{new}_\ell, f^{new}_\ell]={\rm{DIRPM}}(R_\ell, T_\ell, \bm{\varphi}_\ell, f_\ell,\tau_1, \tau_2, \tau_3, \lambda, \gamma,\rho, MaxIter$);
    
    \lIf{$ \ell > 1 $}
    {Interpolation (or prolongation): 
      $ \bm{\varphi}_{\ell-1} = I_H^h \bm{\varphi}^{new}_{\ell} $,
      $ f_{\ell-1} = I_H^h f^{new}_{\ell}$
    }
  }
  \KwOut{ $ T^{new}_1, \bm{\varphi}^{new}_1 $. }
\end{algorithm}

Furthermore, we have extended our method to incorporate multilevel registration for liver CT scans, which involves aligning two volumes. This approach, known as multilevel image-to-image registration, aims to reduce computational cost and prevent the algorithm from being trapped in a suboptimal solution. The original registration problem is divided into smaller and faster-to-solve subproblems at different levels. This strategy involves utilizing a coarser level of discretization to obtain an initial approximation for the subsequent finer level. The steps of the multilevel scheme are summarized in Algorithm \ref{alg:MultiLevelRegistration}.

\section{Numerical experiments}\label{sec:Section5}
WWe will assess the effectiveness of the proposed approach and other commonly used methods for 2D and 3D images through qualitative and quantitative evaluations. Qualitative evaluations will involve visualizations of registration results, such as registered images, errors between the registered image $T(\bar{\bm{\varphi}}({\bm{x}}))$ and $R(\bm{x})$, deformation grids, displacement fields, and hotmaps of the Jacobian determinant of the deformation fields. Quantitative evaluations will be conducted using standard metrics, which will include the use of
\begin{itemize} 
    \item[\textbf{i}.)]
    the grid unfolding indicator $R_{\min }$ and the grid folding ratio (GFR) which are defined by
     \begin{equation*}
    R_{\min }=\min\limits _{i, j} R_{i, j},\quad \text{ and }\quad
    {\rm{GFR}}=\frac{\sharp\left(\mathcal{G}\right)}{{\sharp (\Omega_h)}},
    \end{equation*}
    where $\mathcal{G}:=\{(x_i, y_j)\in\Omega_{h}|R_{i,j}<0\}$, $\sharp\left(\mathcal{G}\right)$ and $\sharp(\Omega_h)$ denote the numbers of nodes in $\mathcal{G}$ and $\Omega_h$, respectively.
    \item[\textbf{ii}.)] the Jacobian determinant measures, which are defined by
    \begin{equation*}
    \begin{aligned}
    {\det}(\nabla \bar{\bm{\varphi}})|_{i, j}, \qquad & \overline{{\det}}(J(\bar{\bm{\varphi}}))=\frac{1}{m n} \sum_{i, j} {\det}(\nabla \bar{\bm{\varphi}})|_{i, j}, \\
    {\det}_{\min }(J(\bar{\bm{\varphi}}))=\min _{i, j} {\det}(\nabla \bar{\bm{\varphi}})|_{i, j}, \qquad & {\det}_{\max }(J(\bar{\bm{\varphi}}))=\max _{i, j} {\det}(\nabla \bar{\bm{\varphi}})|_{i, j}.
    \end{aligned}
    \end{equation*}
    \item[\textbf{iii}.)] the relative sum of squared differences ($\rm{Re_{-}SSD}$), which is defined by
    \begin{equation*}
    \text{Re\_SSD}(T,R,T(\bar{\bm{\varphi}}))= \cfrac{\sum_{i,j}(T(\bar{\bm{\varphi}}_{i,j})-R_{i,j})^2}{\sum_{i,j}(T_{i,j}-R_{i,j})^2}.
    \end{equation*}
    \item[\textbf{iv}.)] the \emph{ssim} and \emph{psnr} metrics, which are widely used to describe the degree of structure distortion and pixel blurring, 
    respectively.
\end{itemize} 

Our aim is to compare our method to some of the most commonly used approaches. The code for these approaches is as follows: (a) Diffusion and curvature models\footnote{\url{https://github.com/C4IR/FAIR.m}} \cite{fischer2002fast,fischer2003curvature,Fair}; (b) Hyper-elastic model\footnote{\url{http://www.siam. org/books/fa06/}} \cite{Hyperelastic2013}; (c) LDDMM\footnote{\url{https://github.com/C4IR/FAIR.m/tree/master/add-ons/LagLDDMM}} \cite{MangA}; (d) Diffeomorphic Log Demons image registration\footnote{\url{https://www.mathworks.com/matlabcentral/fileexchange/39194-diffeomorphic-log-demons-image-registration}} \cite{DiffLogDemons}. We intend to demonstrate the effectiveness of our method by comparing it with these popular approaches.

\begin{figure}
  \begin{center}
    \includegraphics[width=15.6cm]{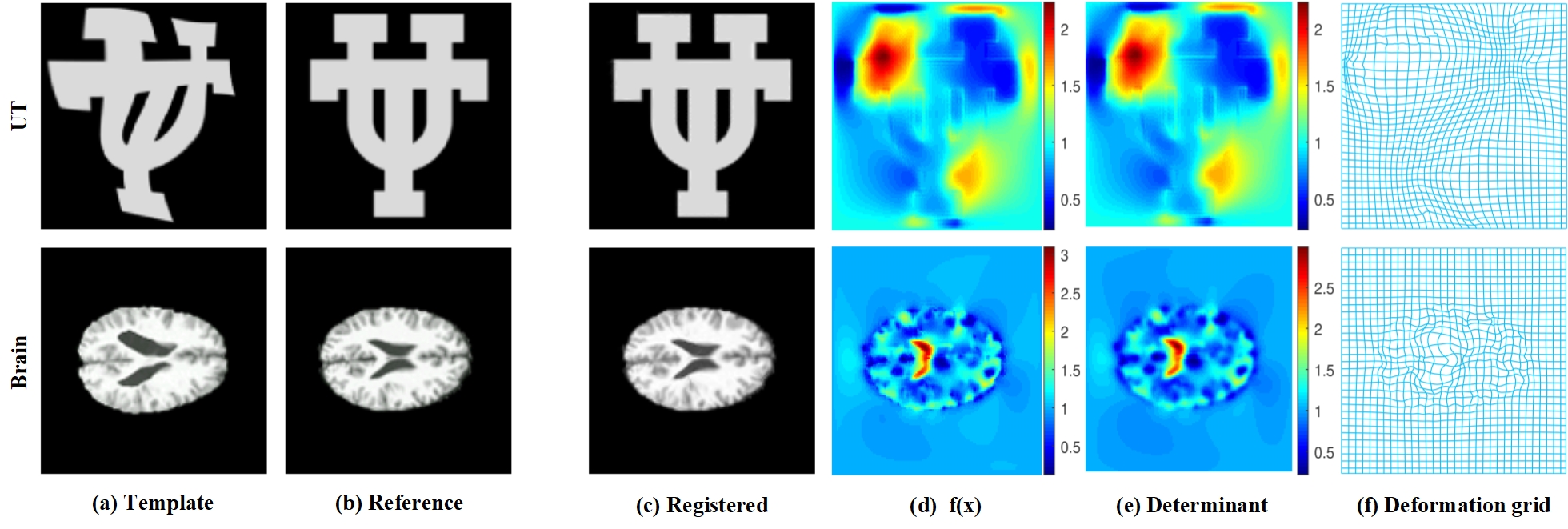}
  \end{center}
  \caption{Registration of 2D images with penalty function $\phi_1 $: (a) template images; (b) reference images; (c) registered results;  (d) the hotmap of $f(\bm{x})$; (e) the hotmap of the Jacobian determinant; (f) the deformation grids. Parameters of the two sets of images. UT ($\phi_1$): $\tau_1=1.20$, $\tau_2= 1e-3$, $\tau_3=5e-2$, $\lambda=1.2$, $\rho=1.2$, $\gamma=120$; UT ($\phi_2$): $\tau_1=1.40 $, $\tau_2= 1e-3$, $ \tau_3=5e-2 $, $\lambda=1.2$, $\rho=1.2$, $ \gamma=100$; UT ($ \phi_3 $): $\tau_1=1.40$, $\tau_2=1e-3$, $ \tau_3=5e-2$, $\lambda=1.2$, $\rho=1.2$, $\gamma=100 $;  Brain ($\phi_1$): $\tau_1=0.32$, $\tau_2= 1e-3$, $\tau_3=1e-2$, $\lambda=0.6$, $\rho=1.1$, $\gamma=100 $; Brain ($\phi_2$): $\tau_1=0.32$, $\tau_2=1e-3$, $\tau_3=1e-2$, $\lambda=0.6$, $\rho=1.1$, $\gamma=100 $; Brain ($\phi_3$): $\tau_1=0.32$, $\tau_2=1e-3$, $\tau_3=1e-2$, $\lambda=0.6$, $\rho=1.1$, $\gamma=100$.
  }\label{fig:phi1-phi2}
\end{figure}

\begin{table}[pos=H]
  \centering
  \newsavebox{\mybox}
  \begin{lrbox}{\mybox} 
    \begin{tabular}{ccccccccc}
      \toprule[1.5pt]
      \textbf{Examples} & $ \phi(\cdot) $  &$ R_{min} $& $\overline{\det}(J(\bar{\bm{\varphi}}))$& $\det_{\min}(J(\bar{\bm{\varphi}}))$ & $\det_{\max}(J(\bar{\bm{\varphi}}))$ & \emph{ssim}  & $\rm{Re_{-}SSD}$ &\emph{psnr} \\
      \hline
      \multirow{3}{*}{\textbf{UT}}  & $ \phi_1 $ & +& 1.000 &  0.23 & 2.23 & \textbf{0.9961} & \textbf{0.080\%}  & \textbf{35.49}\\

            &  $ \phi_2 $ &   + & 1.000&   0.09  &  2.10 &  0.9953 &  0.083\%  & 35.34\\

            &  {$ \phi_3 $} &  {+} & {1.000}&   {0.31}  &  {2.05} &  {0.9961} &  {0.084\%}  & {35.29} \\
            
      \hline                                       
      \multirow{3}{*}{\textbf{Brain}}   & $ \phi_1 $ &  + &1.000  &  0.22  &3.00 & \textbf{0.9868}     &  \textbf{2.67}\%  & \textbf{26.89}\\
    
           & $ \phi_2 $   & + & 1.000 &  0.30  &   2.78  & 0.9866 &  2.72\%  & 26.81\\

           & {$ \phi_3 $}   & {+} & {1.000} &  {0.26}  &   {2.80}  & {0.9866} &  {2.76\%}  & {26.76} \\
            \hline
      \toprule[1.5pt]
    \end{tabular}
  \end{lrbox}
  \caption{Quantitative comparisons between registration results of both different penalty functions $\phi(\cdot)$. The best value is highlighted by the \textbf{bold}.}
  \scalebox{0.92}{\usebox{\mybox}}\label{phi1-phi2} 
\end{table}

\subsection{Comparisons of different penalty functions}
In this section, we examine how the proposed registration model \eqref{eq:model1} is affected by different control functions $\phi_1$, $\phi_2$ and $\phi_3$. The control function $\phi_3$ is defined in \eqref{Hyper} and possesses the property stated in \eqref{eq:eqpenaltyf2}, which implies that $\phi(z)=\phi(1/z)$. This property allows flexible shrinkage and enlargement of deformations. To illustrate the differences between these control functions, we conducted experiments using $\phi_3$. In image registration problems involving large local deformations, enforcing a uniform Jacobian determinant constraint may not lead to an optimal transformation between the template and the reference. Fixing the value of $\det(\nabla\bar{\bm{\varphi}})=1$ at each pixel results in a deformation field $\bar{\bm{\varphi}}(\bm{x})$ that is point-wise volume-preserving but may not accurately match the local features of the reference and template images. Alternatively, by relaxing the constraint to $\det(\nabla\bar{\bm{\varphi}})=f(\bm{x})$, where $f(\bm{x})$ depends on the values of the function over the entire image range $\Omega$, the transformation $\bar{\bm{\varphi}}$ becomes non-locally volume-preserving. Therefore, the selection of an appropriate control function $\phi$ is crucial. It serves as a penalty measure to ensure that $f(\bm{x})$ is positive and significantly influences the quality of transformation grids and the performance of image registration.

In this experiment, images with a resolution of $128\times128$ from UT and Brain were utilized. To ensure a fair comparison, the parameter configurations were adjusted to optimize the registration results for different penalty functions $\phi(\cdot)$. The results of the three penalty functions can be seen in \Cref{phi1-phi2}. All three penalty functions produced diffeomorphic solutions with $R_{min}>0$ and exhibited minimal volume changes. However, there were noticeable differences in various quantitative indicators among models employing different control functions. The penalty function $\phi_1$ yielded superior values for $\rm{Re_{-}SSD}$, \emph{ssim}, and \emph{psnr}. As a result, the model based on the penalty function $ \phi_1 $ was selected as the default version for all subsequent experiments.

The visualizations of the registered results obtained by the proposed model with penalty function $\phi_1$, including the template and reference images, the transformed template images, the hotmaps of the relaxation function $f(\bm{x})$ and the Jacobian determinant $ \det(\nabla\bar{\bm{\varphi}})$, and the transformed grids, are evaluated in \cref{fig:phi1-phi2}. It can be observed that the proposed scheme with the penalty term $\phi_1$ produces a transformed template that is visually indistinguishable from the reference image. The hotmap distributions for the Jacobian determinant $\det(\nabla\bar{\bm{\varphi}})$ and relaxation function $f(\bm{x})$ are also similar. Additionally, the deformation grids are smooth. This demonstrates the robustness of the proposed method to challenging conditions such as heavy occlusion of illumination and large background clutters.

\subsection{Evaluation for the penalty term}
The purpose of introducing the penalty term $\int_{\Omega}\phi(f(\bm{x}))d\bm{x}$ in the relaxation function $f(\bm{x})$ is to prevent folding and unreasonable changes in volume. In this subsection, we analyze the effect of the penalty term by varying the parameter $\tau_2$ between zero and non-zero values, while keeping all other optimal parameters constant. To evaluate the influence of the term $\int_{\Omega}\phi(f(\bm{x}))d\bm{x}$ on registration, we use Circle-Square and Watermelon images with a resolution of $128\times128$, as well as Brain MRI images with a resolution of $256\times256$.

\begin{table}[pos=b]
  \centering
  \begin{lrbox}{\mybox} 
    \begin{tabular}{lcccccccc}
      \toprule[1.5pt]
      \textbf{Examples} & $ \tau_2 $ & \textbf{Correction} & \textbf{GFR} & $\det_{\min}(J(\bar{\bm{\varphi}}))$ & $\det_{\max}(J(\bar{\bm{\varphi}}))$ & \textbf{\emph{ssim}}  & $\rm{Re_{-}SSD}$ &\textbf{\emph{psnr}} \\
      \hline
      \multirow{4}{*}{\textbf{Circle-Square}}
            & 0     & no &   18.66\%   &  -4.98  & 21.56 &     0.9981      & 0.052\%           &  37.51\\
            & 0.01  & no &   3.17\%    &  -1.00  & 14.09 &     0.9981      & 0.042\%           & 38.40\\
      & 0     & yes &      0      &  0.03   & 5.21  &     0.9956      & 0.135\%           & 33.33\\
      & 0.01  & yes &      0      &  0.75   & 1.95  & \textbf{0.9984} &\textbf{ 0.036\%}  & \textbf{39.01}\\
      \hline                                       
      \multirow{4}{*}{\textbf{Watermelon}} 
            & 0     & no &   0.97\%    &  -0.10  &  3.19 &     0.9530      &     0.892\%       & 24.50\\ 
            & 0.001 & no &      0      &  0.34   & 2.17  & \textbf{0.9620} & \textbf{0.695\%}  & \textbf{25.58}\\ 
      &   0   & yes &      0      &  0.08   & 3.88  &     0.9571      &     0.780\%       & 25.07\\
      & 0.001 & yes &      0      &  0.34   & 2.17  & \textbf{0.9620} & \textbf{0.695\%}  & \textbf{25.58}\\                
      \hline
      \multirow{4}{*}{\textbf{BrainMR}}  
      &  0   & no &   2.44\%    &   -11.30  & 21.64 &    0.9550       &    7.904\%         & 16.99\\
      &0.001 & no &     0       &  0.07     &  8.67 &    \textbf{0.9651}     &      \textbf{ 3.699}\%       &  \textbf{20.25}\\ 
      &  0   & yes &     0       &   0.02    & 15.96 &    0.9601      &       3.998\%        & 19.90\\
      &0.001 & yes &     0       &  0.30     &  8.12 &     0.9648   &      3.785\%      &20.16 \\
      \toprule[1.5pt]
    \end{tabular}
  \end{lrbox}
  \caption{The relevant metrics are obtained from the three sets of experiments. ``yes" and ``no" represent the results of performing the deformaton correction and not performing deformaton correction, respectively. The best value is highlighted by \textbf{bold}.}
  \scalebox{0.84}{\usebox{\mybox}}\label{phi-revised3} 
\end{table}

\begin{figure}[ht]
  \begin{center}
    \includegraphics[width=15.6cm,height=7.8cm]{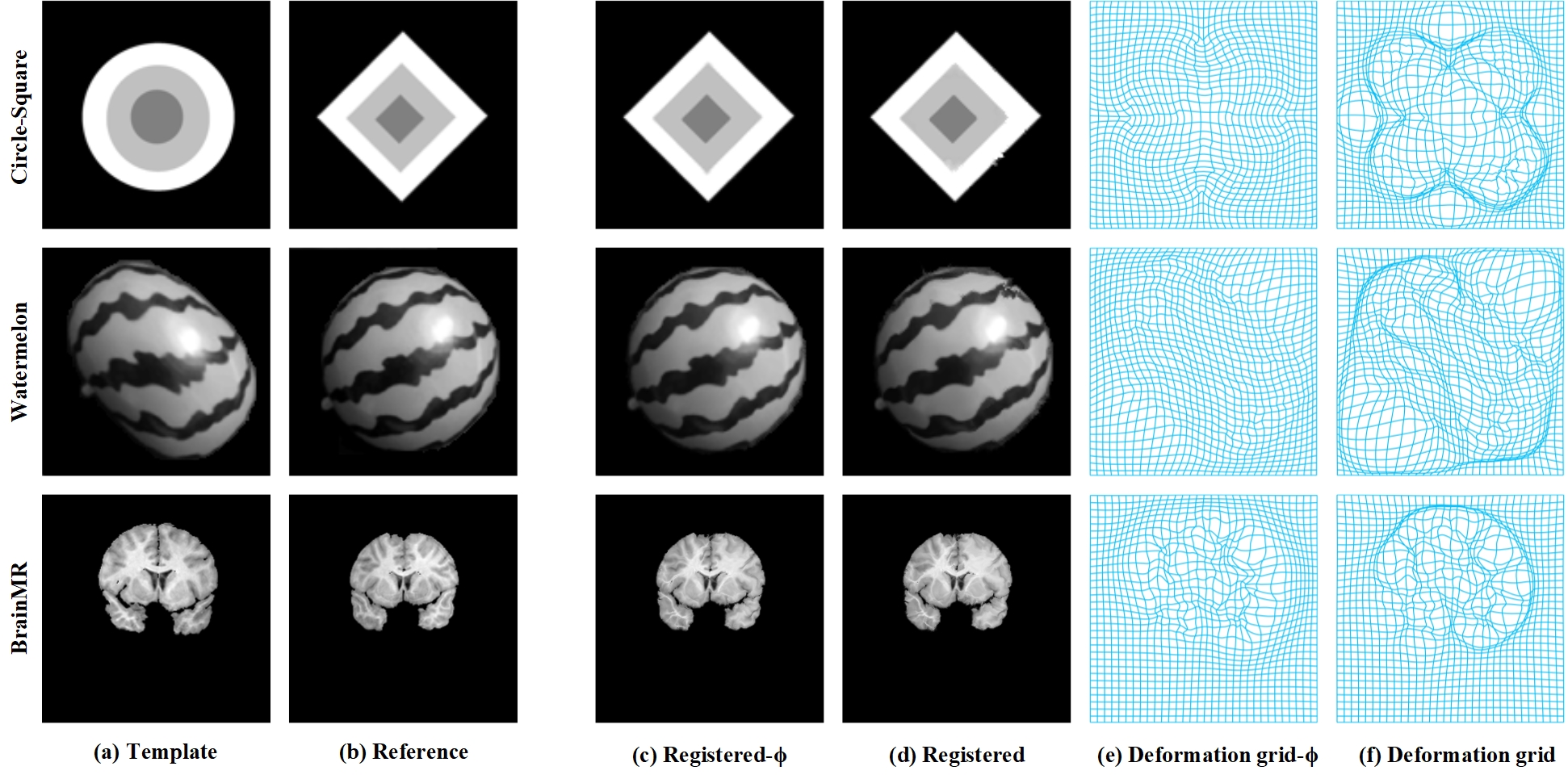}
  \end{center}
  \caption{Registration results with and without penalty term using the deformaton correction: (a) template images; (b) reference images; (c) registered images with penalty term;  (d) registered images without penalty term; (e)-(f) a visualization of the deformed grids with and without penalty term.
  Parameters of the three sets of images. Circle-Square: $\tau_1=0.3 $, $ \tau_2= 1e-2 $, $ \tau_3=1e-3 $, $ \lambda=0.8 $, $ \rho=1.08 $, $ \gamma=18 $; Watermelon: $ \tau_1=0.2 $, $ \tau_2= 1e-3 $, $ \tau_3=1e-3 $, $ \lambda=1.06 $,  $ \rho=1.06 $, $ \gamma=16 $; BrainMR: $ \tau_1=0.4 $, $ \tau_2= 5e-3 $, $ \tau_3=1e-3 $, $ \lambda=0.4 $, $ \rho=1.16 $, $ \gamma=20$.
  }\label{fig:phi-tau22}
\end{figure}

We investigated the impact of the penalty term $\int_{\Omega}\phi(f(\bm{x}))d\bm{x}$ on the registration results by conducting numerical experiments with $\tau_2=0$ and $\tau_2\neq0$, both with and without deformation correction. The results of the comparison of the grid folding ratio are shown in \Cref{phi-revised3}. It is evident that in all three examples without deformation correction, local grid folding cases ($\det_{\min}(J(\bar{\bm{\varphi}}))<0$ or $\text{GFR}>0$) occur when the hyper-parameter $\tau_2=0$ of the function $\phi(\cdot)$ is fixed. However, the grid folding ratio can be reduced or eliminated when $\tau_2\neq0 $, thus improving the accuracy of the registration. Nevertheless, grid folding still occurs when using the penalty term in the Circle-Square example. As mentioned in Remark \ref{remk1}, inappropriate choices of hyper-parameters and inadequate numerical solution techniques for the nonlinear equation in practice may result in $f(\bm{x})<0$, leading to grid folding. Additionally, the grid correction in \Cref{alg:DC} can be employed to generate a desirable diffeomorphic deformation.

The registration results for three examples are demonstrated, both with and without the penalty term $\int_{\Omega}\phi(f(\bm{x}))d\bm{x}$, which incorporates the correction for deformation. As shown in \Cref{fig:phi-tau22}, the proposed model, when equipped with the penalty term, not only produces better registered images, but also generates smoother and more realistic transformations. This can be attributed to two factors. Firstly, the penalty term constrains the range of $f(\bm{x})$ to be in proximity to 1, consequently determining the range of $\det\big(\nabla\bar{\bm{\varphi}}(\bm{x})\big)$. Secondly, the regularization term $\int_{\Omega} \|\nabla f(\bm{x})\|^2 d\bm{x}$ in this study ensures the smoothness of $f(\bm{x})$, which is also associated with the smoothness of the transformation $\bar{\bm{\varphi}}(\bm{x})$. These findings validate the significance of the penalty term that incorporates the relaxation function $f(\bm{x})$ in the proposed model, as it leads to smoother deformations that tend to preserve volume.

\subsection{Evaluations for algorithm convergence}
To demonstrate the convergence and volume-preserving properties of our proposed algorithm, we conducted a comparison with diffusion-based and curvature-based registrations \cite{fischer2002fast,fischer2003curvature,Fair}. The comparison was performed using an IC image of size $200\times 200$. To ensure fairness, all three methods were iterated for 100 steps at each level.

The registration results of our method, as well as the diffusion-based and curvature-based methods, are shown in \Cref{fig:testIC}. It is evident that our method produces a registered image that closely resembles the reference image, while the diffusion and curvature models fail to achieve satisfactory registration for the IC example (see the zoom-in regions in \Cref{fig:testIC}(b)). The transformation grids of the diffusion and curvature models are severely distorted. The findings in \Cref{tableIC} support the validity of the global volume-preserving assumption ($\int_\Omega\det(J(\bar{\bm{\varphi}}))d\bm{x}/|\Omega|=1$), as even objects in the template that are smaller than those in the reference have been successfully registered, enabling their analysis and inference. Moreover, the numerical values of $\det(\nabla\bar{\bm{\varphi}}) \in[-0.3, 1.7] $ and $\det(\nabla\bar{\bm{\varphi}}) \in[-0.14, 1.07] $ demonstrate that the mappings obtained by the diffusion and curvature models are non-diffeomorphic.

\begin{figure}
  \begin{center}
    \includegraphics[width=15cm,height=8cm]{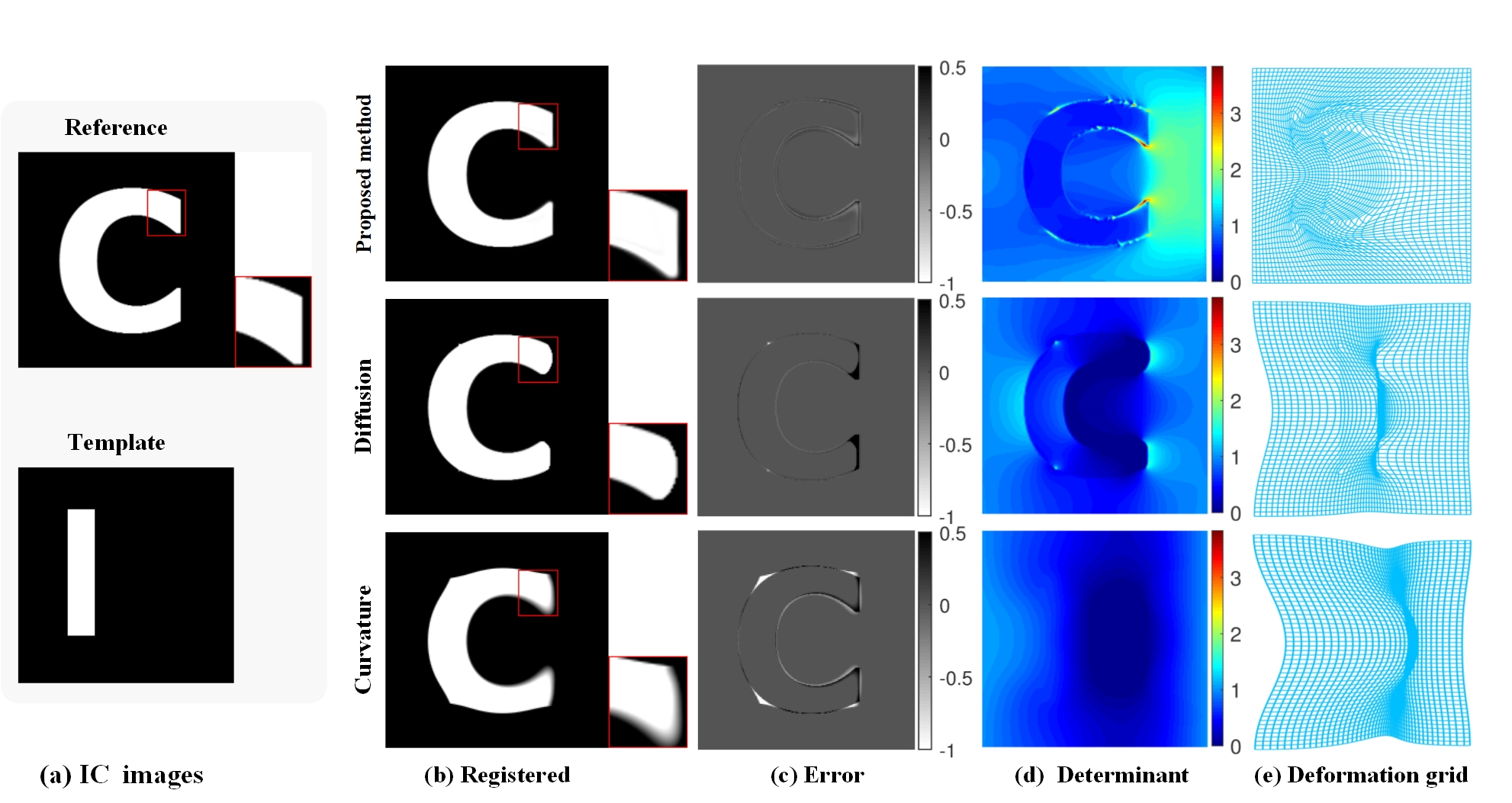}
  \end{center}
  \caption{Comparisons of the proposed, diffusion, and curvature models. (a) the reference and template images; (b) the deformed template images of the three models with the optimal parameters (IC: $\tau_1=3 $, $\tau_2=1e-2 $, $\tau_3 =1e-3$, $\lambda=1$, $ \gamma=100 $, $ \rho=1.06 $ for the proposed model; $\alpha=8800$ for diffusion model; $\alpha=200$ for curvature model); (c) the registration errors of ~$T(\bar{\bm{\varphi}})-R(\bm{x})$; (d) the Jacobian determinant hotmaps of the deformation fields; (e) the deformation grids.}\label{fig:testIC}
\end{figure}

\begin{table}[pos=H]
  \centering
  \begin{lrbox}{\mybox} 
    \begin{tabular}{ccccccccc}
      \toprule[1.5pt]
      \textbf{Examples} & \textbf{Methods}  & $\overline{\det}(J(\bar{\bm{\varphi}}))$&$ R_{min} $& $\det_{\min}(J(\bar{\bm{\varphi}}))$ & $\det_{\max}(J(\bar{\bm{\varphi}}))$ & \emph{ssim}  & $\rm{Re_{-}SSD}$ &\emph{psnr}\\
      \hline
      \multirow{3}{*}{\textbf{IC}}  & Proposed    &   0.999 & + &  0.29  & 3.84  & \textbf{0.9828}  & \textbf{0.23}\%  & \textbf{27.85}\\
      &   Diffusion  & 0.677    & $-$ &  \underline{-0.30}  &  1.70  &  0.9775    & 3.51\%   & 15.80\\
      &   Curvature   & 0.504    & $-$ &  \underline{-0.14}  &  1.07  &  0.9553    & 3.67\%   & 15.73\\
      \toprule[1.5pt]
    \end{tabular}
  \end{lrbox}
  \caption{The quantitative evaluation comparisons of the proposed, diffusion, and curvature models. The negative Jacobian determinant and best metrics values are highlighted by \underline{\rm{underline}} and \textbf{bold}.}
  \scalebox{0.86}{\usebox{\mybox}}\label{tableIC} 
\end{table}

In \Cref{fig:convergence}, we present the change curves of three different metrics: the similarity measure $\mathcal{D}(\bm{u})$, the relative objective function $\hat{\mathcal{L}}^{\ell,k}$, and the average Jacobian determinant $\overline{\det}(J(\bar{\bm{\varphi}}))$. The relative objective function is defined as the ratio of the objective function value $\mathcal{L}^{\ell,k}$ at the $k$-th iteration and the initial objective function value $\mathcal{L}^{L,1}$ at the coarsest $L$-th level. From \Cref{fig:convergence}(a)-(b), it can be observed that the similarity measure and the relative objective function value of our proposed algorithm consistently decrease at each level and eventually stabilize, indicating convergence. Although the diffusion and curvature methods converge faster, their similarity measures and relative objective function values are inferior to ours, suggesting that our method is more satisfactory. Furthermore, \Cref{fig:convergence}(c) demonstrates that our proposed algorithm ensures that $\overline{\det}(J(\bar{\bm{\varphi}}))$ remains close to 1, which aligns with the conclusion of volume preservation in the average sense.

\begin{figure}[pos=H]
    \centering
    \subfigure[Similarity measure]{
  \includegraphics[width=0.32\linewidth]{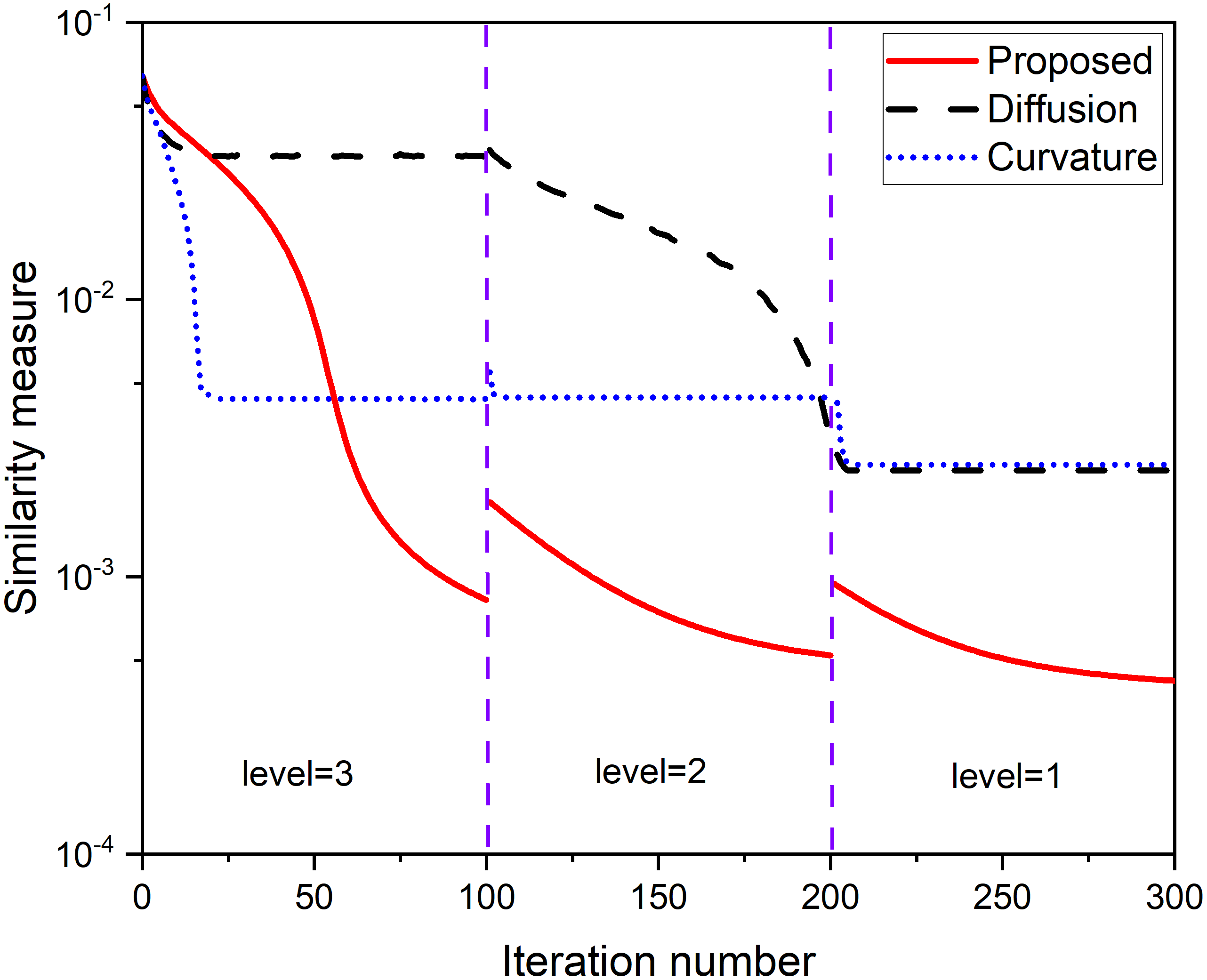}}
     \subfigure[Relative objective function]{
  \includegraphics[width=0.32\linewidth]{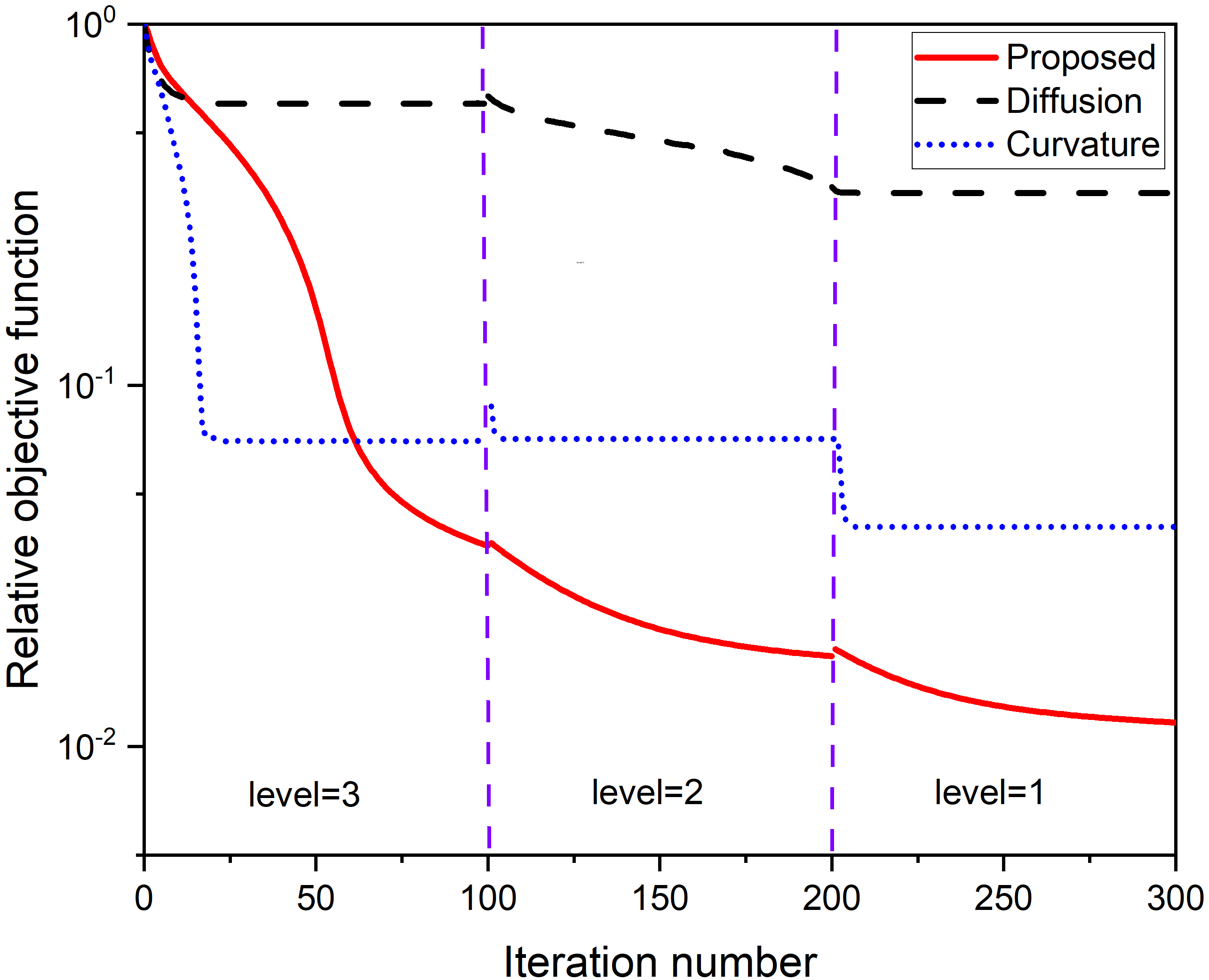}}
    \subfigure[Average Jacobian determinant]{
  \includegraphics[width=0.32\linewidth]{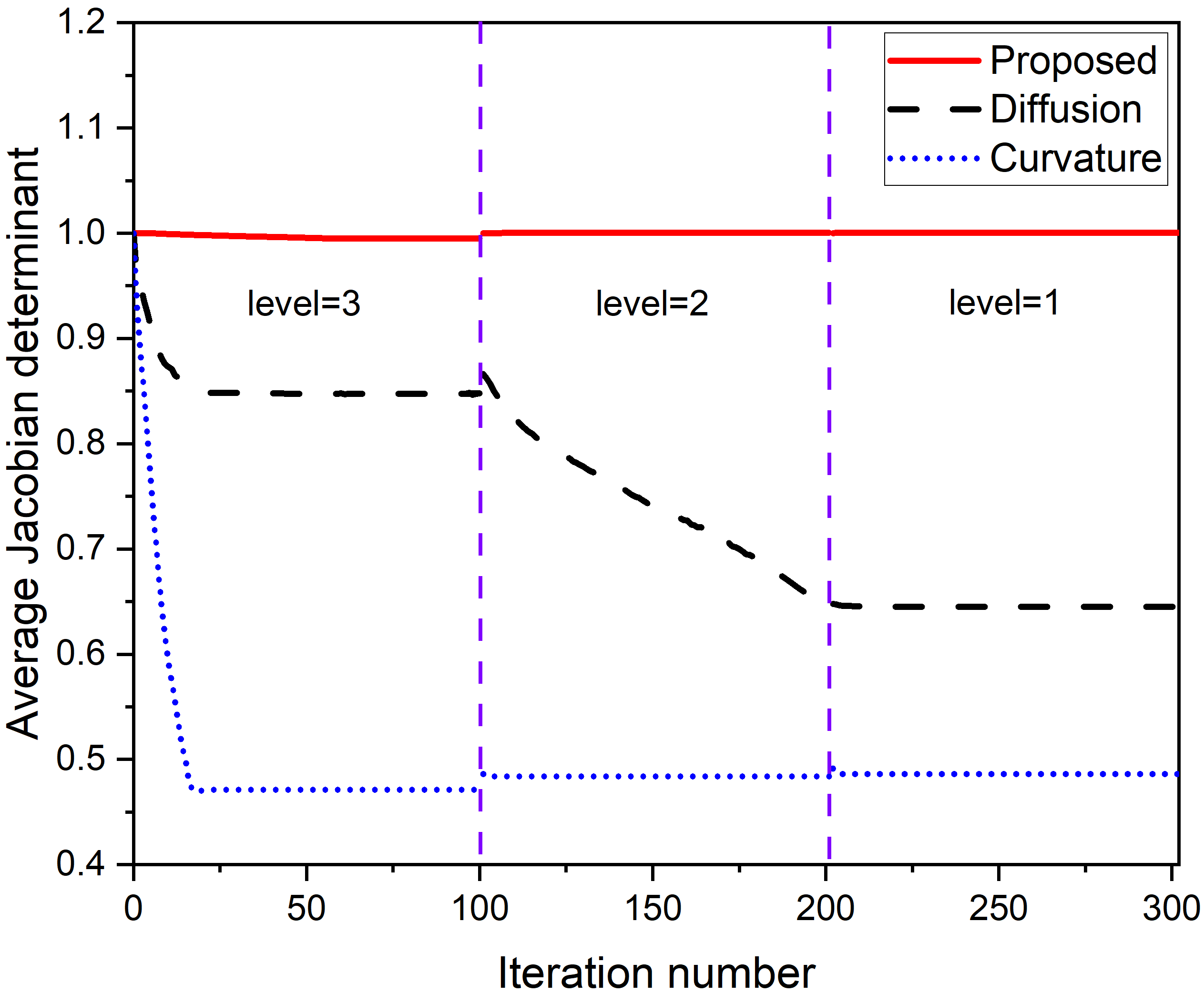}}
  \caption{Convergence and volume-preserving comparisons of the proposed, diffusion, and curvature models for the IC example. (a) similarity measure $ \mathcal{D}(\bm{u}) $ (the y-coordinate is logarithmic); (b) relative objective function $ \hat{\mathcal{L}}^{\ell,k} $  (the y-coordinate is logarithmic); (c) average Jacobian determinant $\overline{\det}(J(\bar{\bm{\varphi}}))$. The red solid, black dash, and blue dot lines indicate the results of the proposed, diffusion, and curvature models, respectively.}
  \label{fig:convergence}
\end{figure}

\subsection{Evaluations of parameter selections.}
The purpose of this section is to demonstrate the sensitivity of the parameters $\tau_1, \tau_2, \tau_3, \lambda$, and $\gamma$ in our model and how they affect the convergence of the algorithm. To obtain a more reliable and general model, it is necessary to test and adjust the parameters accordingly. It is important to note that the optimal parameters may vary for different images. In this example, we used a synthetic Circle-Square image with a resolution of $128 \times 128$ (refer to Figure \ref{fig:phi-tau22}) and employed the grid search method to explain how to select the appropriate parameters. We tested the impact of different parameters ($\tau_1, \tau_2, \tau_3, \lambda$, and $\gamma$) on the registration results, with a growth factor $\rho$ set to 1.08. For comparison purposes, we used a multilevel strategy with $L=3$ and performed 100 iterations at each level. The parameter tuning process is as follows.

The impact of different parameter changes on the registration results is shown in \Cref{tab:tau1}. To find the optimal regularization parameter, $\tau_1$, values ranging from 0.1 to 10 are tested, while keeping $\tau_2=1e-2$, $\tau_3=1e-3$, $\lambda=0.8$, and $\gamma=18$ fixed. From the results in \Cref{tab:tau1}, it can be seen that satisfactory solutions of the proposed model are achieved when $\tau_1$ is between 0.3 and 2. The best values for $\rm{Re_{-}SSD}$ and \emph{psnr} are obtained at $\tau_1=0.3$, so we recommend selecting this value as the regularization parameter for this example. Similarly, the same method is used to search for suitable parameters $\tau_2$, $\tau_3$, $\lambda$, and $\gamma$. For the Circle-Square images, $\tau_2$ is varied from $1e-4$ to $1e-2$, $\tau_3$ is varied from $1e-4$ to $1e-1$, $\lambda$ is varied from 0.05 to 1, and $\gamma$ is varied from 10 to 50. Consequently, a set of optimal parameters $\tau_1=0.3$, $\tau_2=1e-3$, $\tau_3=5e-3$, $\lambda=0.5$, and $\gamma=18$ can be obtained for the Circles-Square images. This approach can be applied similarly to other images.

\begin{table}[H]
  \centering
  \setlength{\tabcolsep}{15.0pt} 
  \renewcommand\arraystretch{1.2} 
  \begin{lrbox}{\mybox} 
    \begin{tabular}{cccccc}
      \toprule[1.5pt]
      Parameters& Variation & $ R_{min} $ & $\rm{Re_{-}SSD}$ &\emph{psnr} & \emph{ssim} \\
      \hline  
      \multirow{8}{*}{ \tabincell{c}{$\tau_2=\text{1e-2}$\\$\tau_3=\text{1e-3}$\\$\lambda=0.8$\\$\gamma=18 $\\$ \tau_1 $} } 
        &0.1    &+  &6.0159\%   &16.9327    &0.9660 \\
        &\textbf{0.3}   &+  &\textbf{0.0364\%}  &\textbf{39.0132}   &0.9981 \\
        &0.5    &+  &0.0382\%   &38.8078    &\textbf{0.9982} \\
        &0.7    &+  &0.0434\%   &38.2567    &0.9980 \\
        &1  &+  &0.0518\%   &37.4848    &0.9976 \\
        &2  &+  &0.0854\%   &35.3156    &0.9956 \\
        &4  &+  &0.2067\%   &31.4842    &0.9890 \\
        &10 &+  &0.8112\%   &25.5227    &0.9610 \\
        \hline 
        \multirow{7}{*}{ \tabincell{c}{$\tau_1=0.3$\\$\tau_3=\text{1e-3}$\\$\lambda=0.8$\\$\gamma=18 $\\$ \tau_2 $} }
        &1e-4   &+  &0.0966\%   &34.7929    &0.9965  \\
        &5e-4   &+  &0.0294\%   &39.9573    &0.9988  \\
        &\textbf{1e-3}  &+  &\textbf{0.0266\%}  &\textbf{40.3973}   &\textbf{0.9989}  \\
        &5e-3   &+  &0.0266\%   &40.3816    &0.9988  \\
        &1e-2   &+  &0.0364\%   &39.0132    &0.9981  \\
        &5e-2   &+  &0.4501\%   &28.0562    &0.9833  \\
        &1e-1   &+  &3.9650\%   &18.5983    &0.9366  \\
        \hline 
        \multirow{8}{*}{ \tabincell{c}{$\tau_1=0.3$\\$\tau_2=\text{1e-3}$\\$\lambda=0.8$\\$\gamma=18 $\\$ \tau_3 $} }
        &1e-4   &+  &0.0414\%   &38.4673    &0.9986  \\
        &5e-4   &+  &0.0266\%   &40.3918    &0.9989  \\
        &1e-3   &+  &0.0266\%   &40.3973    &0.9989  \\
        &\textbf{5e-3}  &+  &\textbf{0.0239\%}  &\textbf{40.8621}   &\textbf{0.9991}  \\
        &1e-2   &+  &0.0266\%   &40.3898    &0.9990  \\
        &5e-2   &+  &0.0320\%   &39.5926    &0.9988  \\
        &1e-1   &+  &0.0349\%   &39.2089    &0.9987  \\
        &5e-1   &+  &0.6129\%   &26.7790    &0.9880  \\
        \hline 
        \multirow{8}{*}{ \tabincell{c}{$\tau_1=0.3$\\$\tau_2=\text{1e-3}$\\$\tau_3=\text{5e-3}$\\$\gamma=18 $\\$ \lambda $} }
        &0.01   &+  &0.1720\%   &32.2823    &0.9978  \\
        &0.05   &+  &0.0261\%   &40.4741    &0.9990  \\
        &0.1        &+  &0.0394\%   &38.6851    &0.9989  \\
        &\textbf{0.5}       &+  &\textbf{0.0225\%}  &\textbf{41.1141}   &\textbf{0.9991}  \\
        &0.8        &+  &0.0239\%   &40.8621    &0.9991  \\
        &1      &+  &0.0785\%   &35.6879    &0.9987  \\
        &5      &+  &0.7041\%   &26.1720    &0.9916  \\
        &10     &+  &0.4738\%   &27.8838    &0.9946  \\
        \hline 
        \multirow{8}{*}{ \tabincell{c}{$\tau_1=0.3$\\$\tau_2=\text{1e-3}$\\$\tau_3=\text{5e-3}$\\$\lambda=0.5 $\\$ \gamma $} }
        &1  &+  &2.8639\%   &20.1179    &0.9762 \\
        &10 &+  &0.1778\%   &32.1388    &0.9977 \\
        &15 &+  &0.2436\%   &30.7727    &0.9973 \\
        &\textbf{18}    &+  &\textbf{0.0225\%}  &\textbf{41.1141}   &\textbf{0.9991} \\
        &20 &+  &0.0232\%   &40.9837    &0.9991 \\
        &50 &+  &0.0595\%   &36.8879    &0.9987 \\
        &80 &+  &0.9717\%   &24.7479    &0.9892 \\
        &100    &+  &1.5259\%   &22.7695    &0.9841 \\
      \bottomrule[1.5pt]
    \end{tabular}
  \end{lrbox}
  \caption{The effect of the different parameters on the registration results. The best metrics values are highlighted by the \textbf{bold}.}\label{tab:tau1} 
  \scalebox{0.92}{\usebox{\mybox}}
\end{table}

To demonstrate the impact of parameters on the convergence of the model, we present the variation curve of $\rm{Re_{-}SSD}$ with iteration steps for different combinations of parameters at the finest level in \Cref{fig:parameters-ressd}(a)-(e). It is clear that changes in these parameters have a significant effect on both the convergence and the registration accuracy of the model. Specifically, \Cref{fig:parameters-ressd}(a) illustrates that altering the value of the parameter $ \tau_1 $, while keeping other parameters constant, has an impact on both $\rm{Re_{-}SSD}$ and the convergence of the model. This observation is further supported by \Cref{fig:parameters-ressd}(b)-(e), which indicates that selecting appropriate parameter values is crucial to ensuring the convergence and registration accuracy of the model. Therefore, by carefully selecting parameters, it is possible to optimize the performance of the model, and achieve better convergence and accuracy.

\begin{figure}
\centering
    \subfigure[$ \tau_1 $]{\label{fig:tau1-ressd}
        \begin{minipage}[t]{0.33\linewidth}
            \centering
            \includegraphics[width=5cm]{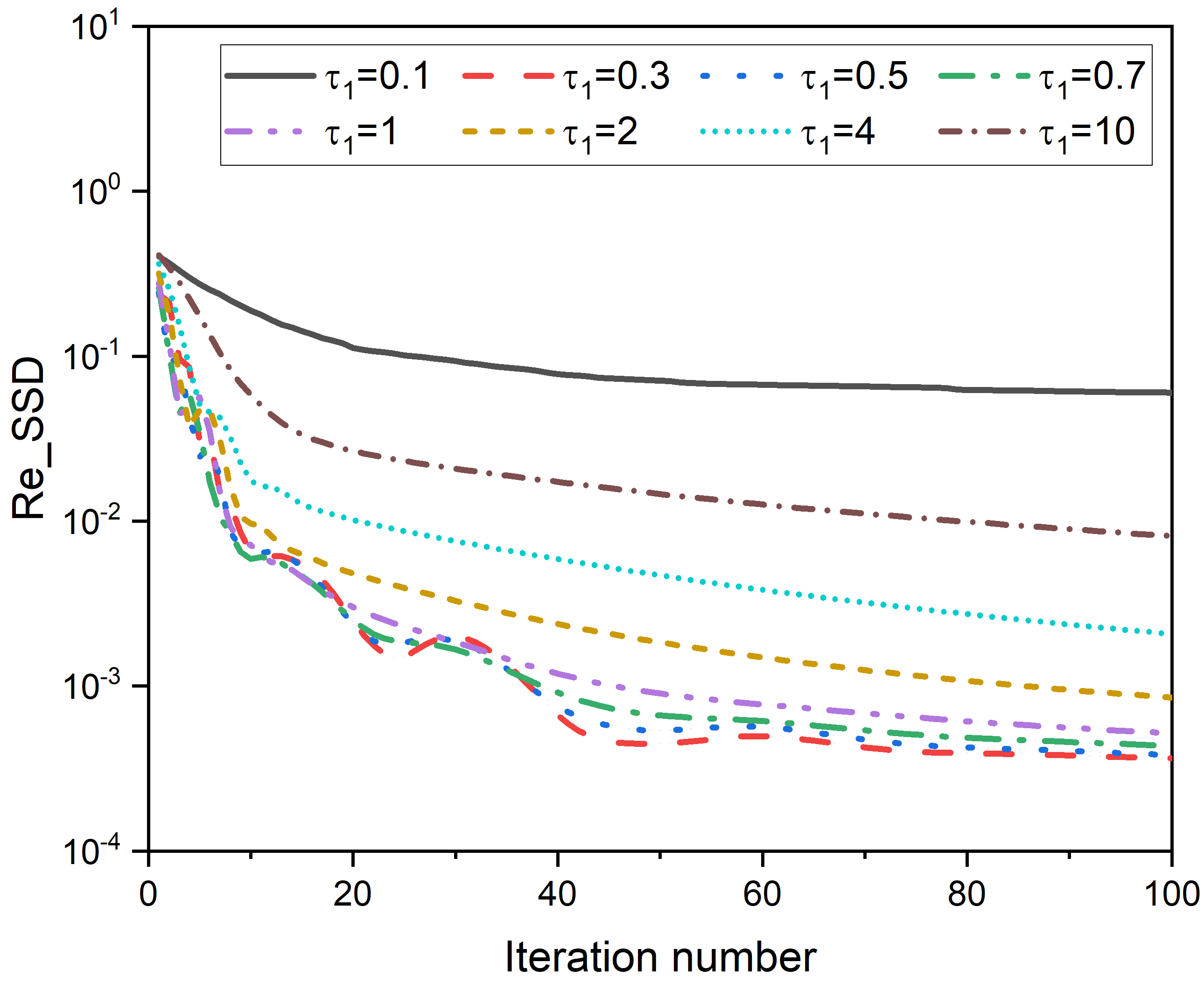}
        \end{minipage}
    }%
    \subfigure[$ \tau_2 $]{\label{fig:tau2-ressd}
        \begin{minipage}[t]{0.33\linewidth}
            \centering
            \includegraphics[width=5cm]{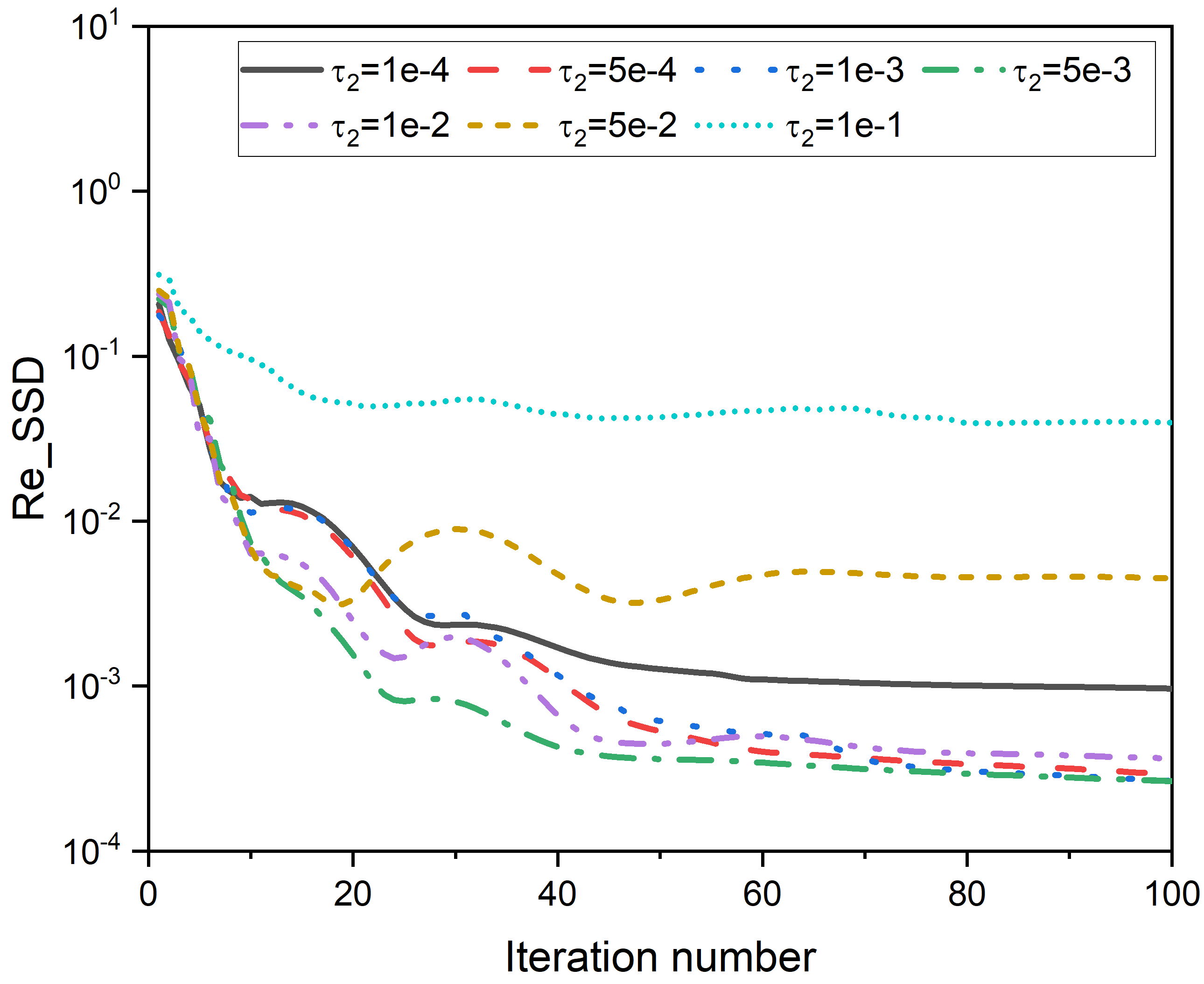}
        \end{minipage}
    }%
    \subfigure[$ \tau_3 $]{\label{fig:tau3-ressd}
        \begin{minipage}[t]{0.33\linewidth}
            \centering
            \includegraphics[width=5cm]{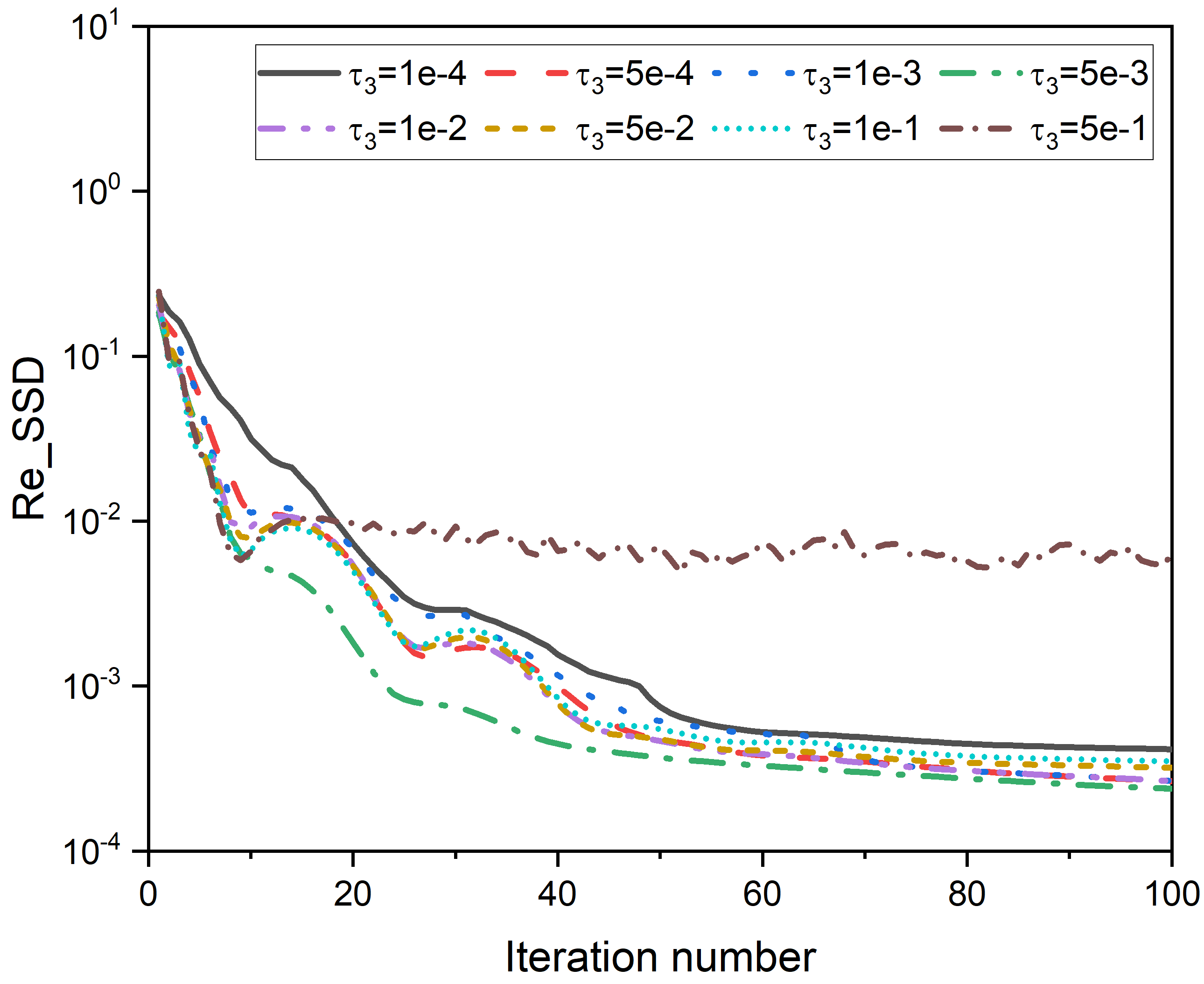}
        \end{minipage}
    }%

    \subfigure[$ \lambda $]{\label{fig:lambda-ressd}
        \begin{minipage}[t]{0.33\linewidth}
            \centering
            \includegraphics[width=5cm]{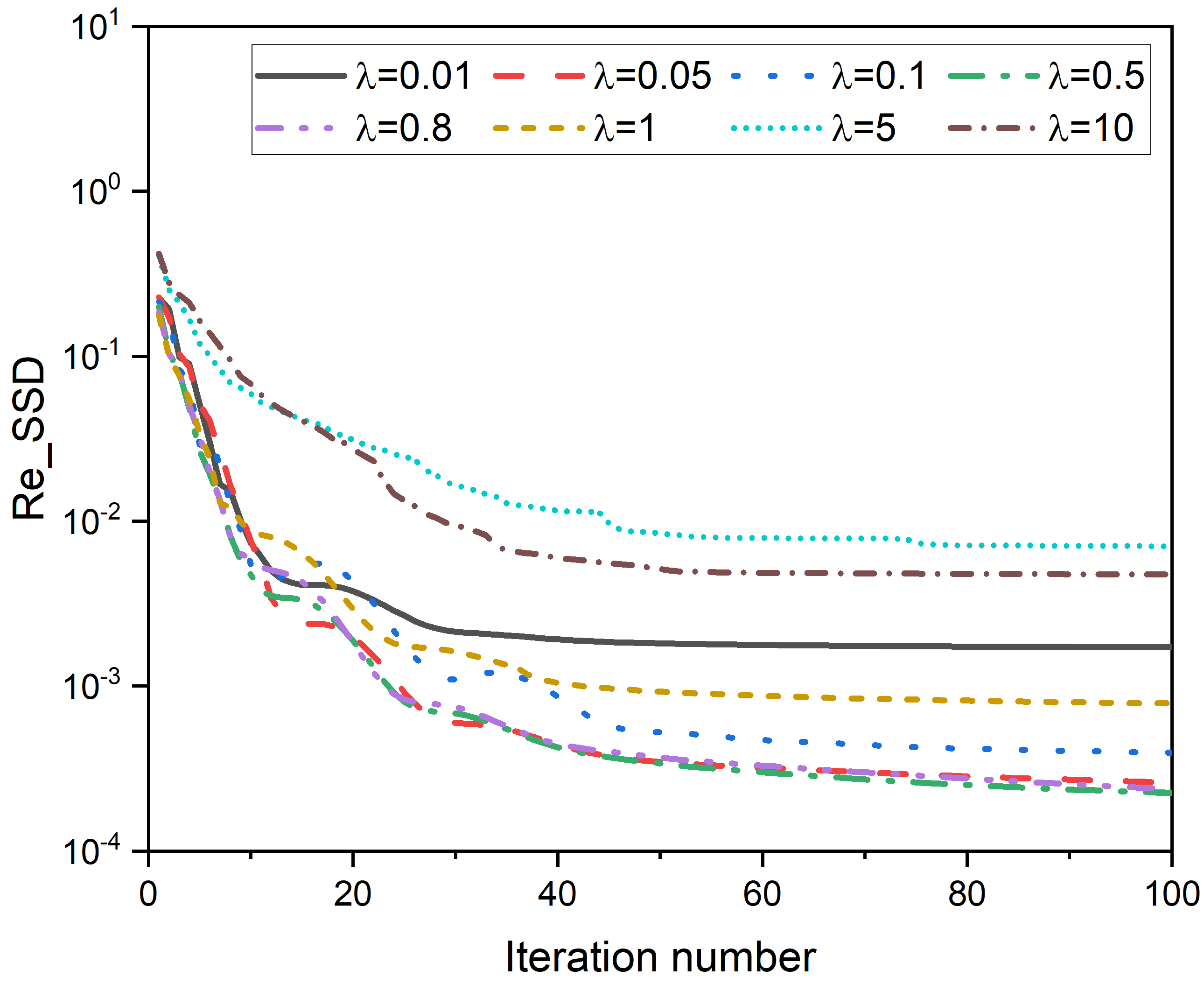}
        \end{minipage}
    }%
    \subfigure[$ \gamma $]{\label{fig:gamma-ressd}
        \begin{minipage}[t]{0.33\linewidth}
            \centering
            \includegraphics[width=5cm]{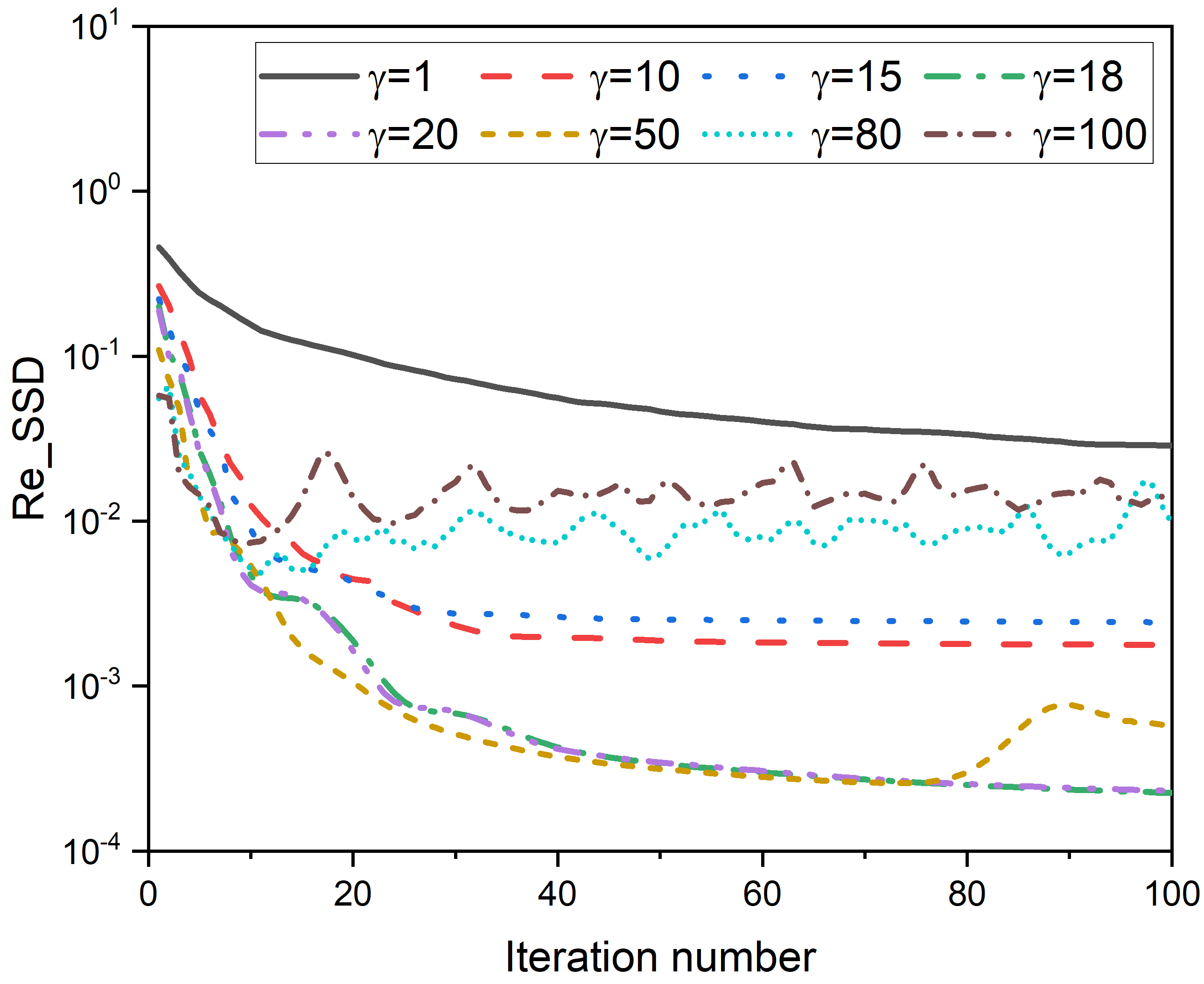}
        \end{minipage}
    }%
    \caption{The impact of univariate parameter variations on registration accuracy and convergence, where the y-coordinate is logarithmic.}
    \label{fig:parameters-ressd}
\end{figure}

\subsection{Comparisons with diffusion- and curvature-based methods}
In this part, we assess the accuracy of the proposed diffeomorphic registration technique using Lena and Hand images of size $ 128\times 128 $ as two sets of examples. We compare our results quantitatively and qualitatively with the diffusion and curvature registration methods described in \cite{fischer2002fast,fischer2003curvature, Fair}. Here, we employ a multilevel strategy (${L}=3$) and set $\text{MaxIter}= 200$ for all models.

\begin{figure}
  \begin{center}
    \includegraphics[width=15cm,height=8cm]{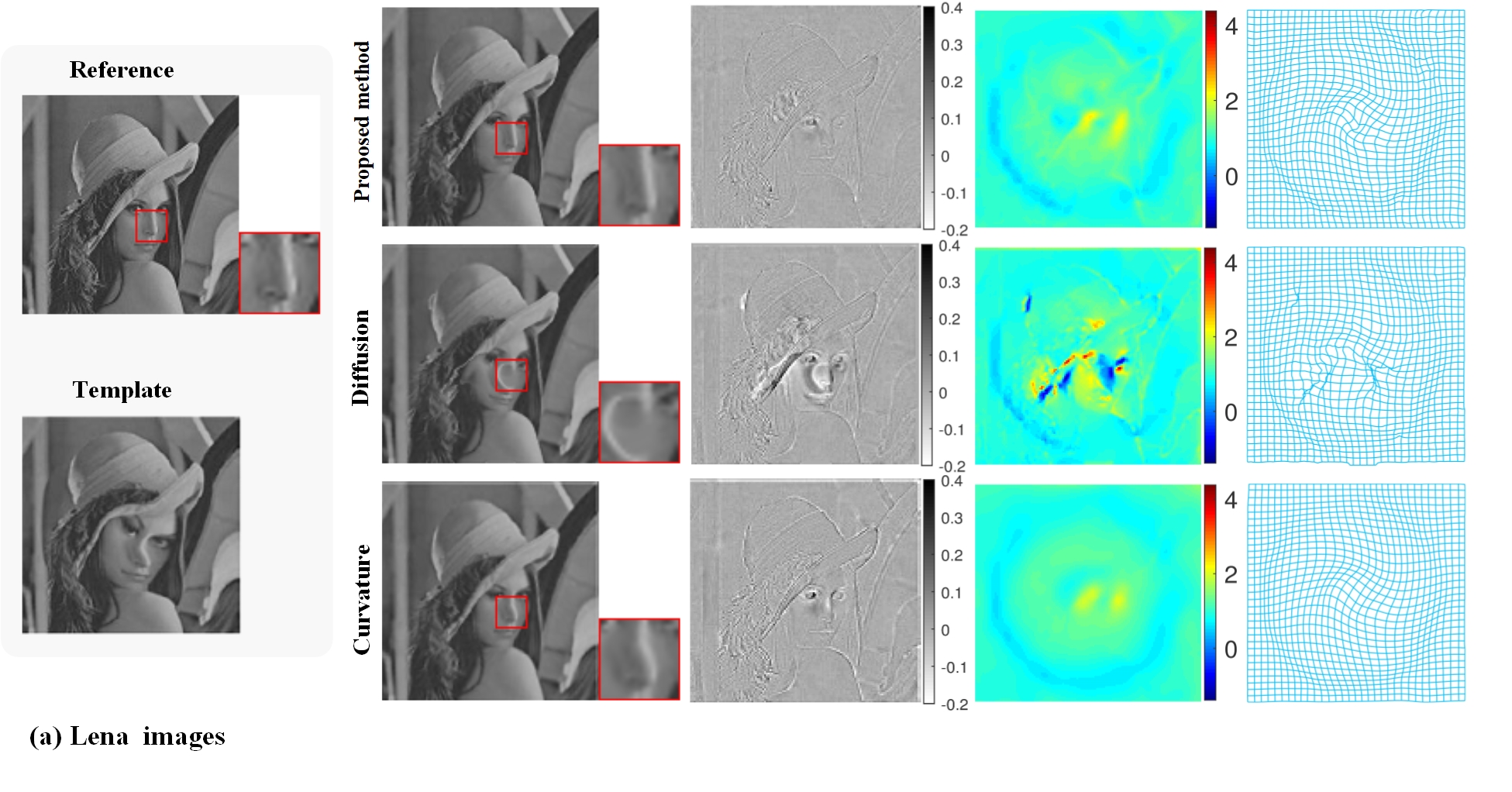}    
    \includegraphics[width=15cm,height=8cm]{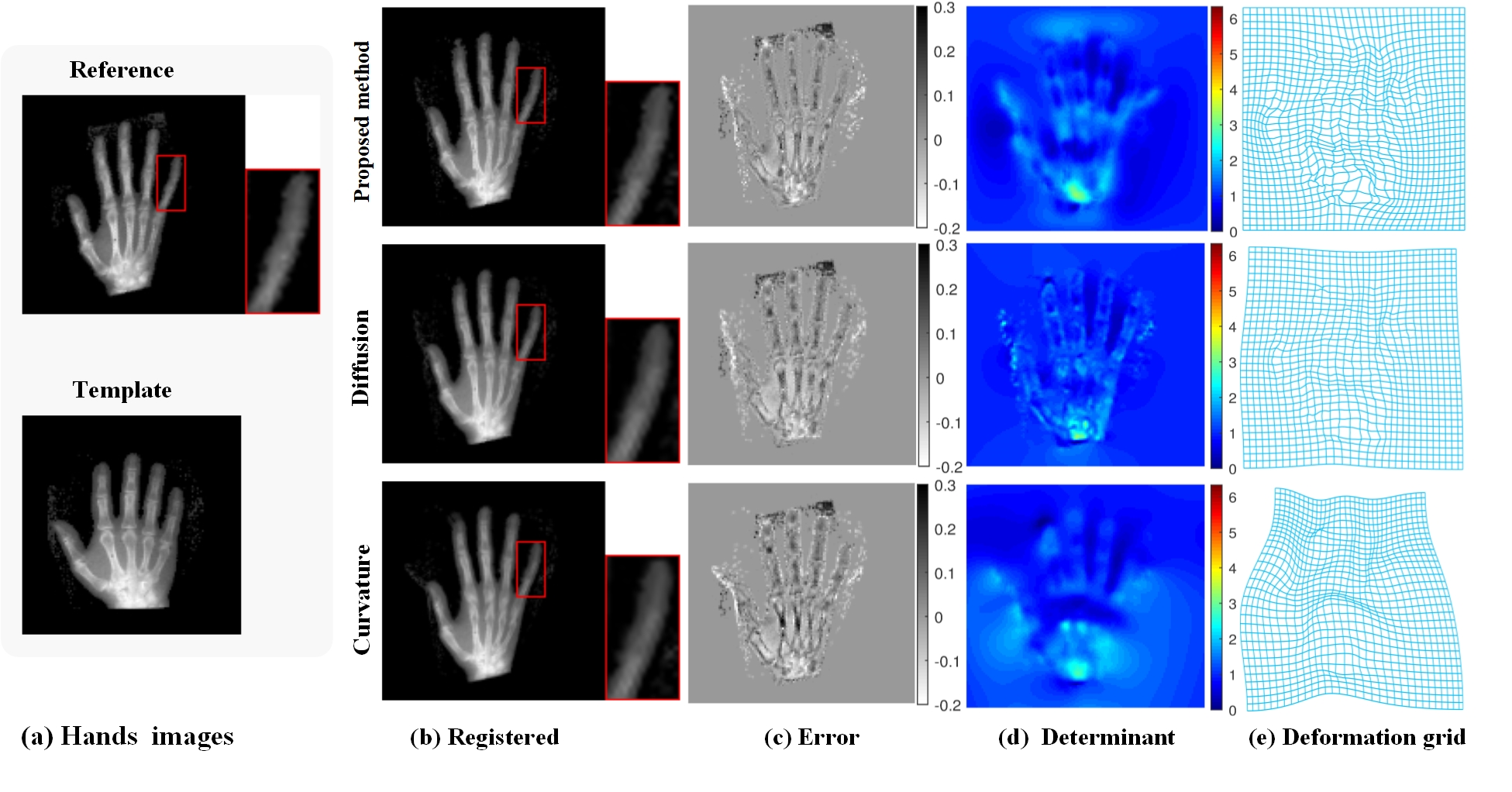}     
  \end{center}
  \caption{Comparisons of the proposed, diffusion, and curvature models. (a) the reference and template images (Lena and Hands); (b) the deformed template images of the three models with the optimal parameters (Hands: $\tau_1=0.4 $, $\tau_2=1e-2 $, $\tau_3 =1e-3$, $ \lambda=0.2 $, $ \gamma=400 $, $ \rho=1.02$ for the proposed model; $\alpha=500$ for diffusion model; $\alpha=0.2$ for the curvature model. Lena: $\tau_1=0.6 $, $\tau_2=1e-2 $, $\tau_3 =1e-3$, $ \lambda=0.2 $, $ \gamma=200 $, $ \rho=1.02 $ for the proposed model; $\alpha=400$ for the diffusion model; $\alpha=1$ for curvature model); (c) the registration errors of ~$T(\bar{\bm{\varphi}})-R(\bm{x})$; (d) the Jacobian determinant hotmaps of the deformation fields; (e) the deformation grids.}\label{fig:testhandsLena}
\end{figure}

In \Cref{fig:testhandsLena}, we present a comparison of the visualizations of the registered results obtained using our proposed model and the diffusion and curvature registrations. The visualizations include template and reference images with zoom-in regions, transformed template images, registration errors of ~$T(\bar{\bm{\varphi}})-R(\bm{x})$, the hotmaps of the Jacobian determinant $\det(\nabla\bar{\bm{\varphi}})$, and transformations $\bar{\bm{\varphi}}$. Our model successfully produces desirable registration results while preserving the diffeomorphism of the transformations. However, it is evident from the zoom-in regions in \Cref{fig:testhandsLena}(b) that the diffusion and curvature models fail to achieve good registration for the Lena example. Furthermore, by observing the deformation of the Lena and Hands examples, it can be noted that the deformation generated by the curvature model is smoother compared to ours. This discrepancy arises because the curvature model utilizes second-order derivatives for the regularization of the displacement field, whereas our regularizer considers only the first-order derivatives.

\begin{table}
  \centering
  \begin{lrbox}{\mybox} 
    \begin{tabular}{ccccccccc}
      \toprule[1.5pt]
      \textbf{Examples} & \textbf{Methods}  & $\overline{\det}(J(\bar{\bm{\varphi}}))$&$ R_{min} $& $\det_{\min}(J(\bar{\bm{\varphi}}))$ & $\det_{\max}(J(\bar{\bm{\varphi}}))$ & \emph{ssim}  & $\rm{Re_{-}SSD}$ &\emph{psnr}\\
      \hline
      \multirow{3}{*}{\textbf{Lena}}        & Proposed    &   0.999  & +&  0.46  &  3.09  & \textbf{0.9193}   &  \textbf{3.31}\%  & \textbf{25.09}\\
      
      & Diffusion    &   1.013   & $-$ &  \underline{-1.39} &  4.38   & 0.8614  & 8.05\%  & 21.23\\
      & Curvature     &   1.012   & + &  0.56  &   2.02   &  0.8520   & 7.86\%  & 21.35\\
      \hline
      \multirow{3}{*}{\textbf{Hands}}  & Proposed    &   1.000  & +&  0.32  &  6.33   & \textbf{0.8943}   &  \textbf{3.04}\%  & \textbf{18.46}\\
      & Diffusion    &   1.008   & $-$ &  \underline{-0.09}  &  3.34   & 0.8845   & 4.20\%  & 17.14\\
      & Curvature     &   1.011   & $-$ &  \underline{-0.05}  &   2.81   &  0.8829  & 5.00\%  & 16.24\\
      \toprule[1.5pt]
    \end{tabular}
  \end{lrbox}
  \caption{The quantitative evaluation comparisons of the proposed, diffusion, and curvature models. The negative Jacobian determinant and best metrics values are highlighted by \underline{\rm{underline}} and \textbf{bold}.}
  \scalebox{0.86}{\usebox{\mybox}}\label{tablenondiffeomophic} 
\end{table}

The violation of the diffeomorphism principle $\det(\nabla\bar{\bm{\varphi}})>0$ is evident when diffusion- and curvature-based methods are applied to manual and natural images, as shown in \Cref{tablenondiffeomophic}. For the Lena example, the range of $\det(\nabla\bar{\bm{\varphi}})$ is $[-1.39, 4.38]$, and for the Hands example, it is $[-0.09, 3.34]$ when using the diffusion-based model. These values indicate that $\bar{\bm{\varphi}}$ is not diffeomorphic. Similarly, the curvature registration also produces a non-diffeomorphic mapping $\bar{\bm{\varphi}}$ for the Hands example. However, the proposed model not only yields diffeomorphic transformations with $\det(\nabla\bar{\bm{\varphi}}) \in[0.46, 3.09]$ and $\det(\nabla\bar{\bm{\varphi}}) \in[0.32, 6.33]$ for the two examples but also achieves the best $\rm{Re_{-}SSD}$, \emph{ssim}, and \emph{psnr} scores. These results demonstrate that the proposed method effectively prevents grid folding and achieves excellent registration performance.

\subsection{Comparisons with other diffeomorphic models}
Comparisons were also carried out on images of Bigcircle, Pineapple, and Chest with resolutions of $200\times 200$, $128\times 128$, and $256\times 256$, respectively. A comparison was made between our approach and the state-of-the-art diffeomorphic models, including Hyper-elastic \cite{Hyperelastic2013}, LDDMM \cite{MangA}, Diffeomorphic Log Demons \cite{DiffLogDemons}, and Hsiao model \cite{hsiao2014new}. For the Bigcircle and Pineapple examples, a three-level multilevel strategy was employed, while a four-level multilevel strategy was used for the Chest example. The maximum number of iterations, $\text{MaxIter}$, was set to 100 for all models.

\begin{figure}
  \begin{center}
    \includegraphics[width=14cm,height=11cm]{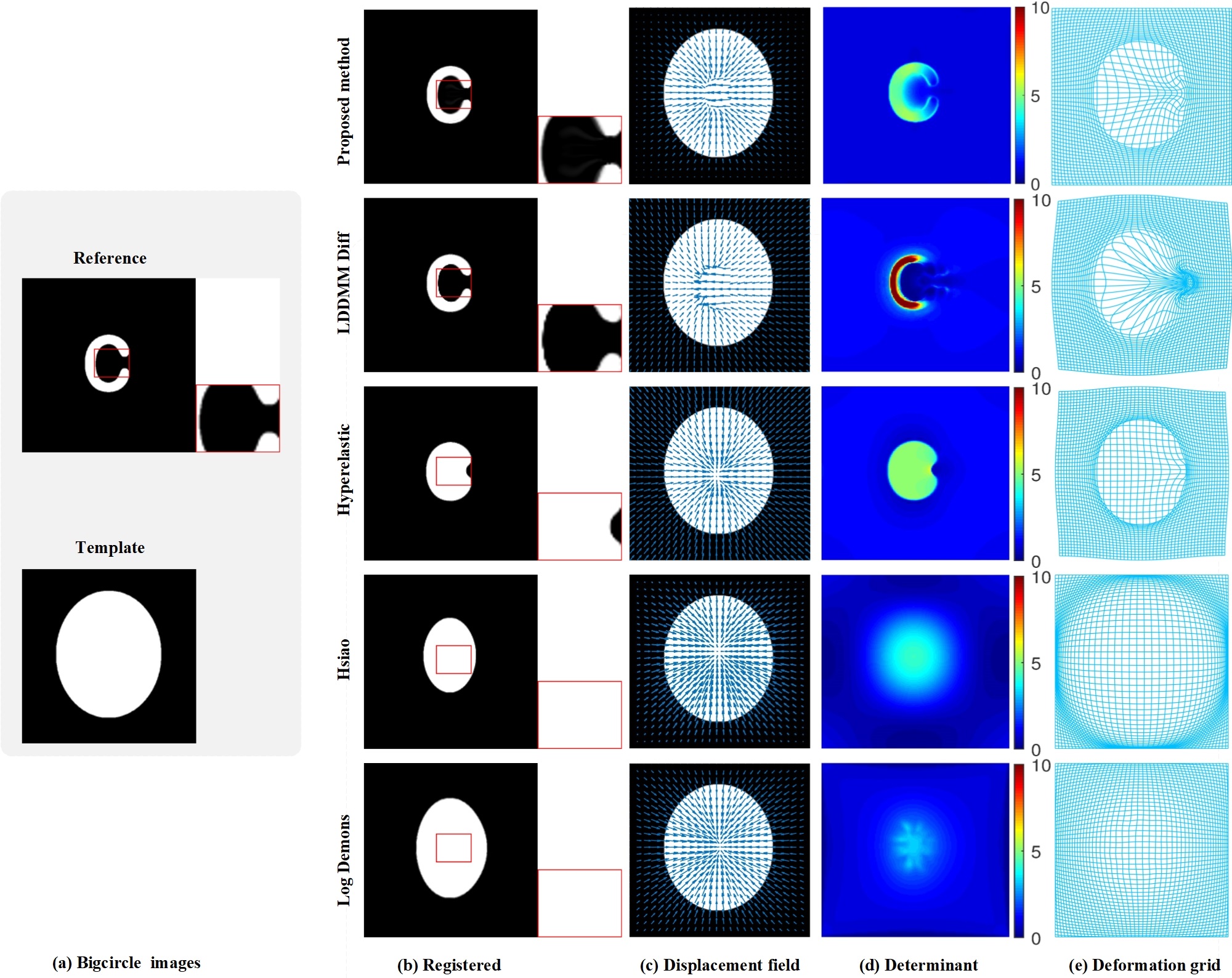}     
  \end{center}
  \caption{Comparisons of diffeomorphic models for the Bigcircle example. (a) the reference and template images; (b) the registered results of these models with the optimal parameters (Bigcircle: $\tau_1=2.2 $, $\tau_2=2e-2 $, $\tau_3 =1e-3$, $ \lambda=1 $, $ \gamma=160 $, $ \rho=1.02 $ for the proposed model, $\alpha=650$ for the LDDMM model, $\alpha_1=18$, $\alpha_2=90$, $\alpha_3=0$, $\alpha_4=6$ for the Hyper-elastic model, $\sigma_{fluid}=2.6$, $\sigma_{diffusion}=1.3$, $\sigma_{i}=1$, $\sigma_{x}=2$ for the Log Demons model); (c) the displacement fields; (d) the Jacobian determinant hotmaps; (e) the deformation grids.}\label{fig:testBigcircle}
\end{figure}

\begin{table}[H]
  \centering
 \setlength{\tabcolsep}{5.6pt} 
  \begin{lrbox}{\mybox} 
    \begin{tabular}{llccccccccc}
      \toprule[1.8pt]
       \textbf{Examples} & \textbf{Methods}  & $\overline{\det}(J(\bar{\bm{\varphi}}))$&$ R_{min} $& $\det_{\min}(J(\bar{\bm{\varphi}}))$ & $\det_{\max}(J(\bar{\bm{\varphi}}))$ & \emph{ssim}  & $\rm{Re_{-}SSD}$ &\emph{psnr} & Time(s) & Iters\\
      \hline
      \multirow{5}{*}{\textbf{Bigcircle}}  & Proposed    &    1.004  & +&   0.70  & 5.56  & 0.9652  & \textbf{ 0.08}\%  & \textbf{21.93} & \textbf{9.13} & 176\\
      & LDDMM        &   2.665  & +&  0.05  &  \emph{39.88}  & \textbf{0.9905} & 0.33\%  & 15.88 & 14.87  & 19\\
      
      & Hyper-elastic &   1.935  & +&  0.14   &   6.02         & 0.9436 & 9.48\%  & 3.86 & 13.22  & 24 \\
      
      & Log Demons   &   1.293  & +&  0.003  &  3.25          & 0.7908 & 48.68\%  & 1.14 & 9.29  & 208 \\
      
      &Hsiao     &   2.011  & +&  0.001  &  4.22          & 0.8819 & 19.42\%  & 2.32 & 41.86  & 118 \\
      \hline
      \multirow{5}{*}{\textbf{Pineapple}}  & Proposed    &   1.001  & + &  0.24  & 3.65  & \textbf{0.9770}  & \textbf{0.26}\%  & \textbf{31.16} & 3.71   & 96 \\
      &    LDDMM     &  1.072    & + &   0.11 &  3.73  &  0.9538  &  1.06 \%   & 25.03 & 3.90  &  10\\
      
      & Hyper-elastic & 1.040     & + &  0.06  &  2.53  &  0.9244   &  1.75\%   & 22.84 & \textbf{1.76}  & 12\\
      
      & Log Demons   &  1.112    & + &  0.002 &  3.40  &  0.8955   & 1.52 \%   & 19.49 & 4.87  & 204\\
      
      &Hsiao     &  1.077    & + &  0.38  &  2.11      & 0.8178    & 9.06\%   & 15.66 & 29.58 &  239\\
      \hline

      \multirow{5}{*}{{\textbf{Chest}} }     & {Proposed}    &   {1.000}   & {+} &   {0.42}  &    {2.61}  & {\textbf{0.9631}}  &  {\textbf{0.53}\%}  & {\textbf{29.74}} & {20.93} &  {139} \\     
      
      & {LDDMM}        &    {0.998}  & {+} &  {0.05}  &  {13.01}     & {0.9417}   &  {1.50\%}  & {25.23} & {60.02}  & {120} \\
      
      & {Hyper-elastic} &    {0.998}  & {+} &   {0.23}  &  {3.17}    & {0.9173}   &  {2.13\%}  & {23.67} & {16.03}  & {114} \\
      
      & {Log Demons}   &    {1.156}   & {+} &  {0.16}   &  {4.90}    & {0.9004}   &  {2.44\%}  & {20.11} & {\textbf{15.24}} & {400} \\
      
      & {Hsiao}        &    {1.024}   & {+} &  {0.63}   &  {1.43}    & {0.7655}   & {14.38\%}  & {15.42} &  {67.06}   & {373} \\ 

      \toprule[1.5pt]
    \end{tabular}
  \end{lrbox}
   \caption{The quantitative evaluation, time, and iteration number comparisons of the proposed and other diffeomorphic models. The best metrics values are highlighted by the \textbf{bold}.}
  \scalebox{0.86}{\usebox{\mybox}}\label{tablediffeomorphic} 
\end{table}

\begin{figure}
  \begin{center}
    \includegraphics[width=14cm,height=11cm]{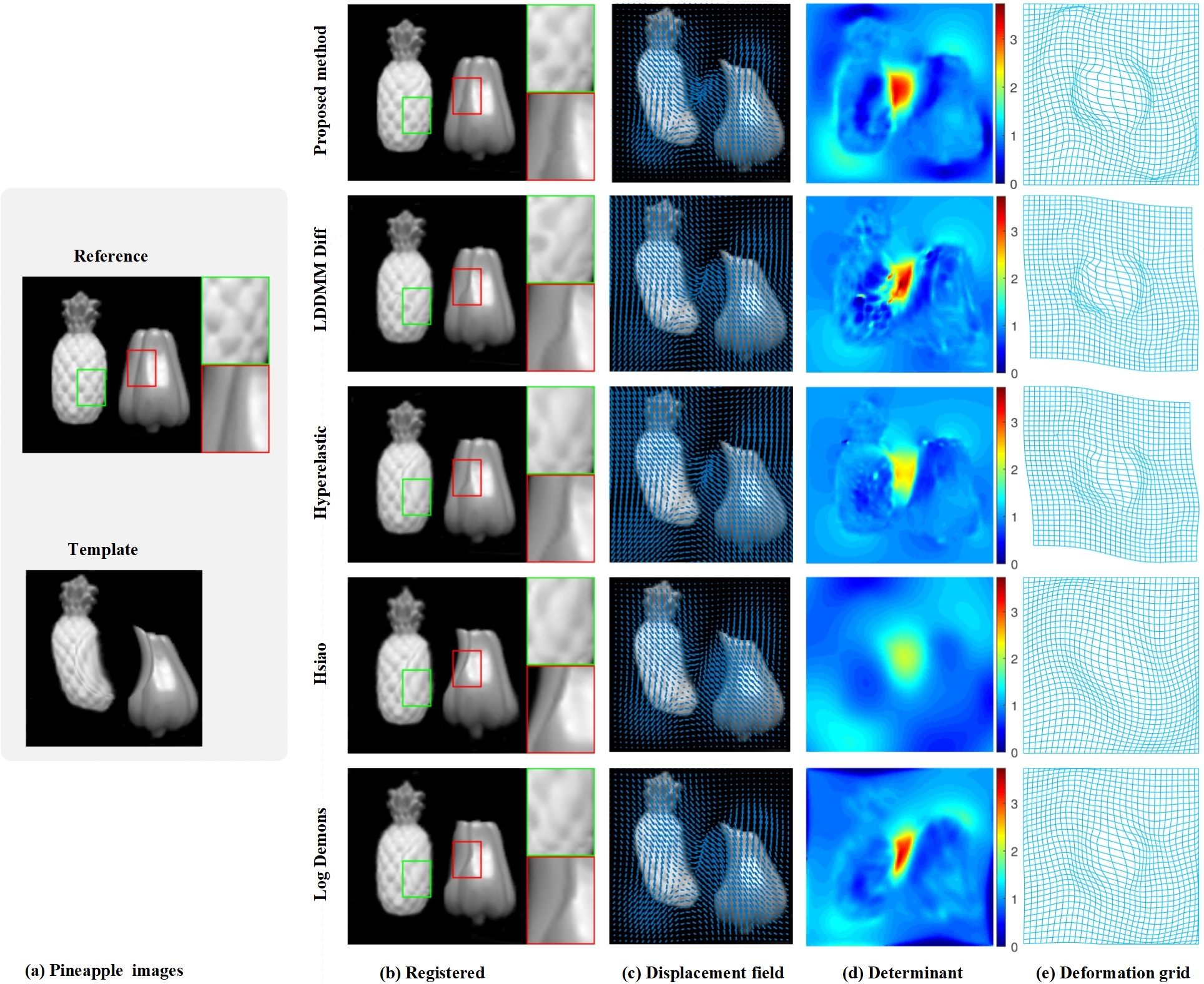}     
  \end{center}
  \caption{Comparisons of diffeomorphic models for the Pinealpple example. (a) the reference and template images; (b) the registered results of these models with the optimal parameters (Pinealpple: $\tau_1=0.2 $, $\tau_2=1e-3 $, $\tau_3 =1e-4$, $ \lambda=0.08 $, $ \gamma=130 $, $ \rho=1.01 $ for the proposed model, $\alpha=100$ for the LDDMM model, $\alpha_1=28$, $\alpha_2=22$, $\alpha_3=0$, $\alpha_4=1$ for the Hyper-elastic model, $\sigma_{fluid}=2.2$, $\sigma_{diffusion}=1.2$, $\sigma_{i}=1$, $\sigma_{x}=2$ for the Log Demons model); (c)~the displacement fields; (d) the Jacobian determinant hotmaps; (e) the deformation grids.}\label{fig:testPineapple}
\end{figure}

\begin{figure}[h]
  \begin{center}
    \includegraphics[width=14cm,height=11cm]{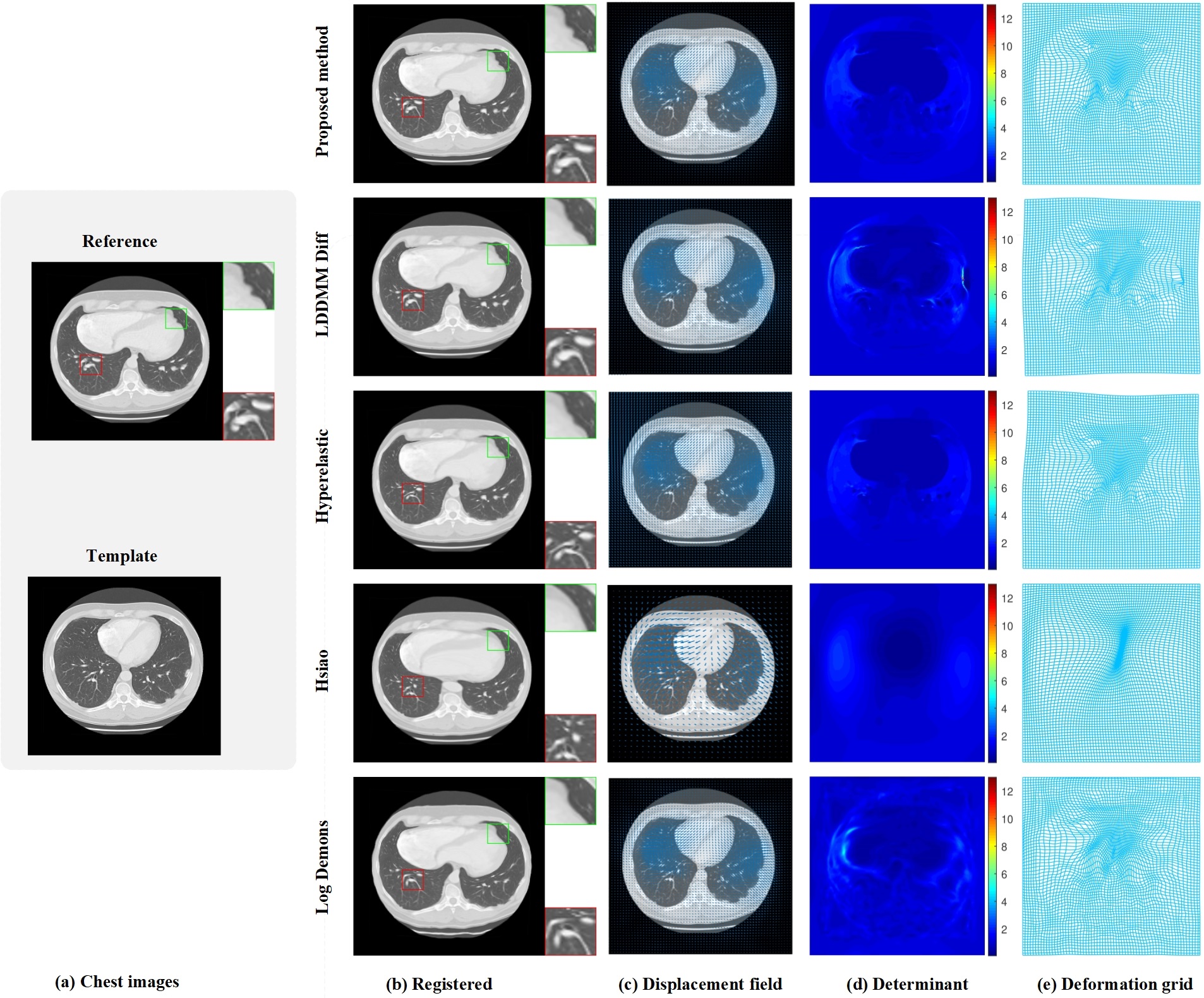}       
  \end{center}
  \caption{Comparisons of diffeomorphic models for the Chest example. (a) the reference and template images; (b) the registered results of these models with the optimal parameters (Chest: $\tau_1=1 $, $\tau_2=1e-2 $, $\tau_3 =1e-3$, $ \lambda=2 $, $ \gamma=200 $, $ \rho=1.02 $ for the proposed model, $\alpha=600$ for the LDDMM model, $\alpha_1=6$, $\alpha_2=100$, $\alpha_3=0$, $\alpha_4=20$ for the Hyper-elastic model, $\sigma_{fluid}=1.8$, $\sigma_{diffusion}=1.2 $, $\sigma_{i}=1$, $\sigma_{x}=3$ for the Log Demons model); (c) the displacement fields; (d) the Jacobian determinant hotmaps; (e) the deformation grids.}\label{fig:testchest}
\end{figure}

The results presented in \Cref{fig:testBigcircle}-\Cref{fig:testchest} indicate that the proposed method outperforms the other four methods. It is evident from \Cref{fig:testBigcircle} that the Hyper-elastic, Log Demons, and Hsiao models fail to accurately register the alphabet C. Although the LDDMM method produces an acceptable result, it significantly alters the volume. In contrast, the proposed method successfully registers the sharp edges of the large deformation image and generates a smooth deformation, as depicted in \Cref{fig:testBigcircle} (e) with the visualization of the deformation grid. Furthermore, \Cref{fig:testPineapple} (b) demonstrates that the proposed method is more effective in registering "low-contrast" regions, while \Cref{fig:testchest} (b) shows that it performs better in matching blood vessels and sharp edges. The zoom-in regions reveal that the internal structure of the registered images obtained by our method closely resembles the ground-truth reference images. However, the other four methods fail to produce satisfactory registration results. 

To further validate the effectiveness of our approach, we assess the average value of the Jacobian determinant $\overline{\det}(J(\bar{\bm{\varphi}}))$, the range of the Jacobian determinant, and registration accuracy. As depicted in~\Cref{tablediffeomorphic}, all methods produce diffeomorphic transformations. Specifically, our approach yields an average Jacobian determinant closest to 1. Additionally, compared to LDDMM and Hyper-elastic methods, our approach exhibits a smaller range for the Jacobian determinant. For example, for the Bigcircle example, the range of $\det(\nabla \bar{\bm{\varphi}})$ is [0.05, 39.88] for LDDMM, [0.14, 6.02] for Hyper-elastic, and [0.70, 5.56] for our proposed model. This is because our proposed model explicitly controls and penalizes volume change, thereby promoting volume preservation and smoother deformation. For the Chest example using the LDDMM method, the velocity field is modeled in a slightly larger padded spatial domain to ensure diffeomorphic deformation~\cite{MangA}. Furthermore, our proposed method achieves significantly better $\rm{Re_{-}SSD}$, \emph{ssim}, and \emph{psnr} scores compared to state-of-the-art diffeomorphic registration models. Notably, our approach achieves the best $\rm{Re_{-}SSD}$ of 0.08\% and \emph{psnr} of 21.93 for the Bigcircle example. Moreover, our proposed method demonstrates competitive advantages in terms of registration time, with both the proposed and LDDMM methods exhibiting lower time overheads for the Bigcircle example. In conclusion, these comparisons demonstrate the superiority of our approach in accurately handling large deformations and volume preservation on average. 

Moreover, the relative error ($\rm {Re_{-}SSD }$) versus the number of iterations is shown in \Cref{fig:three-ressd}, while the average CPU time per iteration at each level is displayed in \Cref{fig:three-average-time}. Furthermore, \Cref{tablediffeomorphic} presents the CPU time and the total number of iterations (Iters). Based on the results shown in \Cref{fig:three-ressd}, it is evident that the Log Demons and Hsiao methods exhibit a slower rate of convergence. These methods require more iterations to meet the stopping criteria and sometimes fail to converge altogether. Conversely, the LDDMM and Hyper-elastic methods converge quickly, making it easier to reach the stopping criteria and reducing the number of iterations. It is important to note that our method achieves a comparable convergence speed to LDDMM and Hyper-elastic. However, due to the adoption of a stricter stopping criterion, our method requires more iterations to satisfy this criterion, resulting in a better $ \rm{Re_{-}SSD} $. By examining \Cref{fig:three-average-time} and \Cref{tablediffeomorphic}, it can be observed that the proposed method yields the best $ \rm{Re_{-}SSD} $, indicating that our method produces good registration results. Although our method may not always be the fastest in terms of CPU time, each iteration step requires relatively short time, making our method somewhat competitive.

\begin{figure}
\centering
    \subfigure[Bigcircle]{
        \begin{minipage}[t]{0.31\linewidth}
            \centering
            \includegraphics[width=5cm]{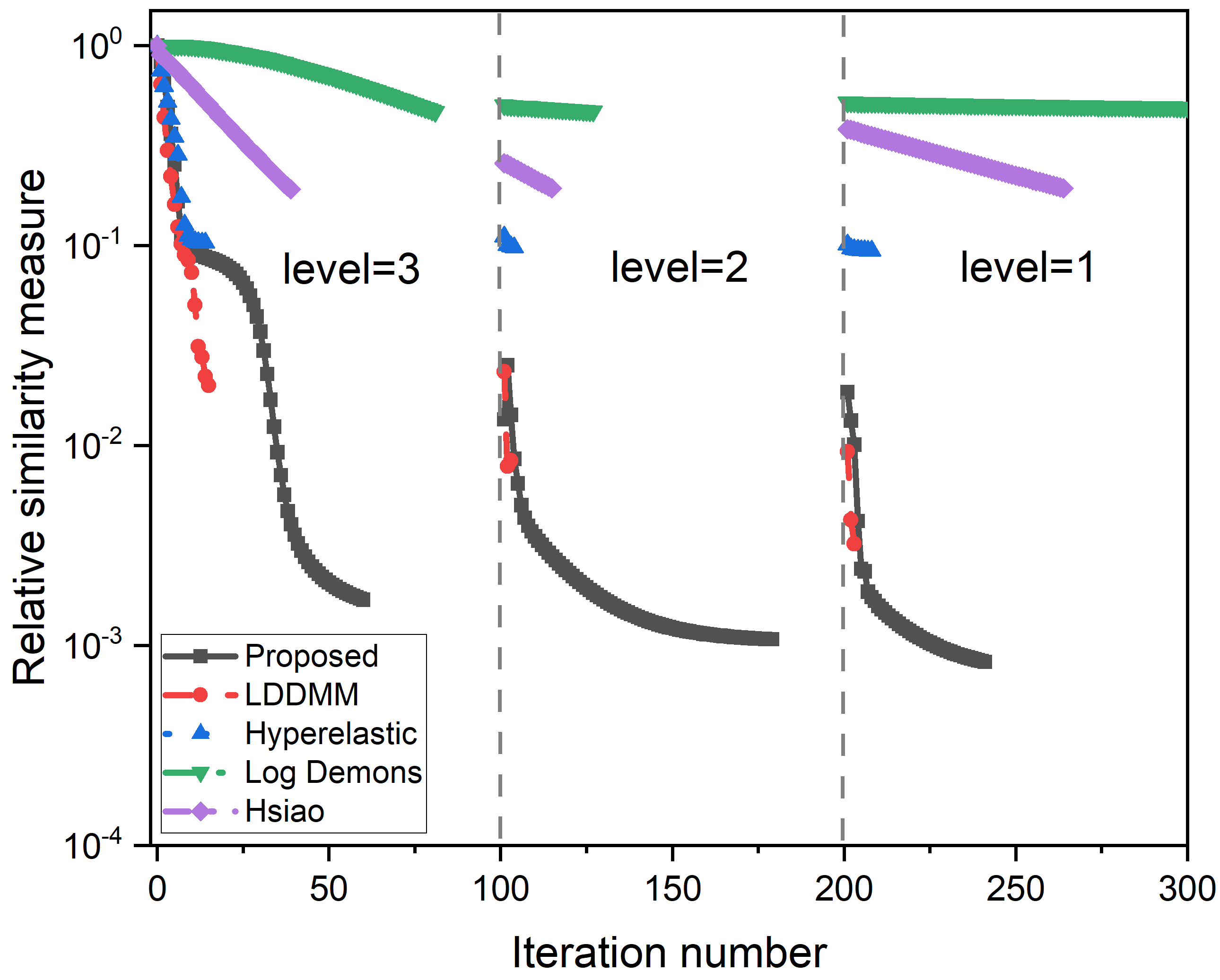}
        \end{minipage}
    }%
    \subfigure[Pineapple]{
        \begin{minipage}[t]{0.31\linewidth}
            \centering
            \includegraphics[width=5cm]{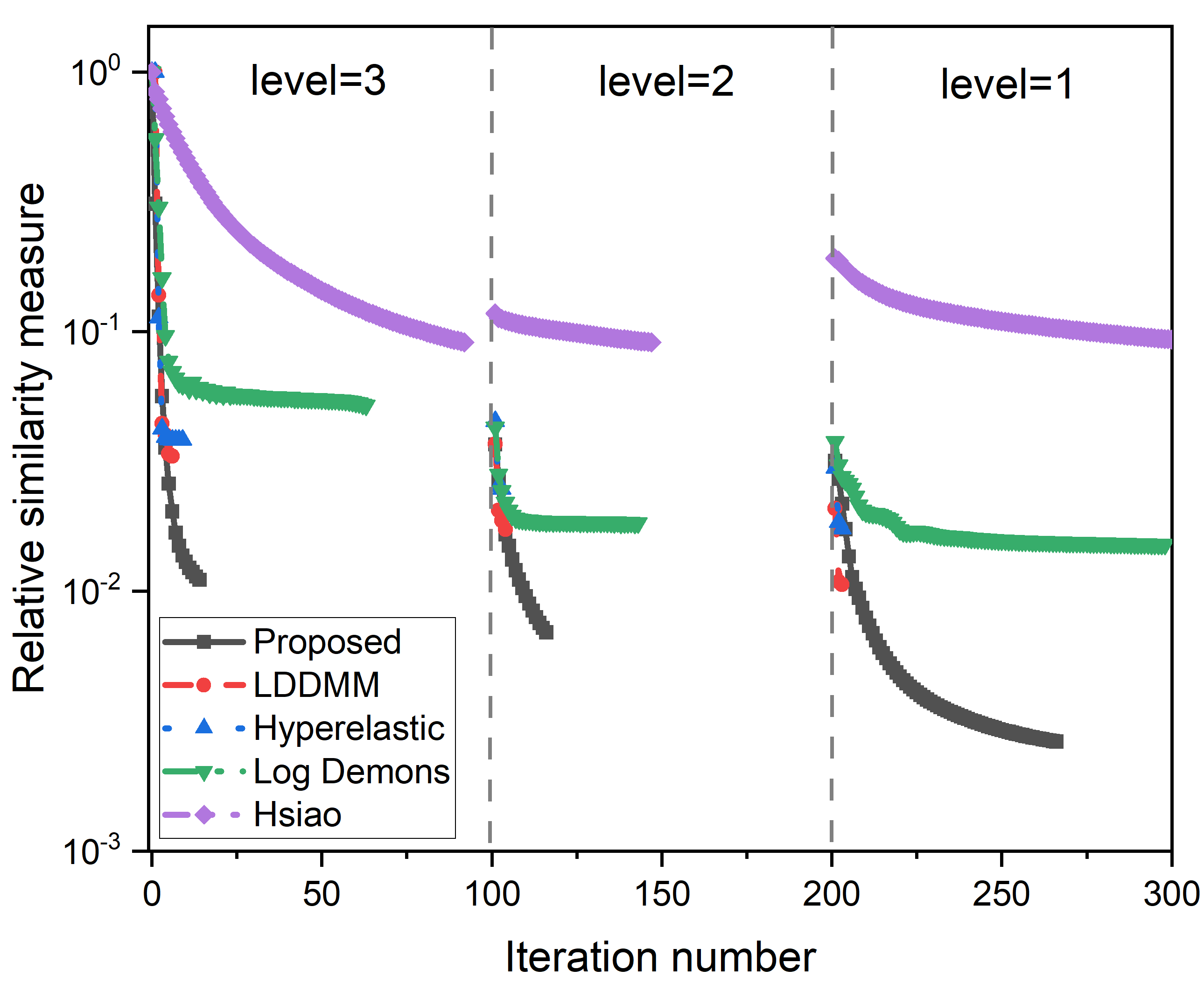}
        \end{minipage}
    }%
    \subfigure[Chest]{
        \begin{minipage}[t]{0.31\linewidth}
            \centering
            \includegraphics[width=5cm]{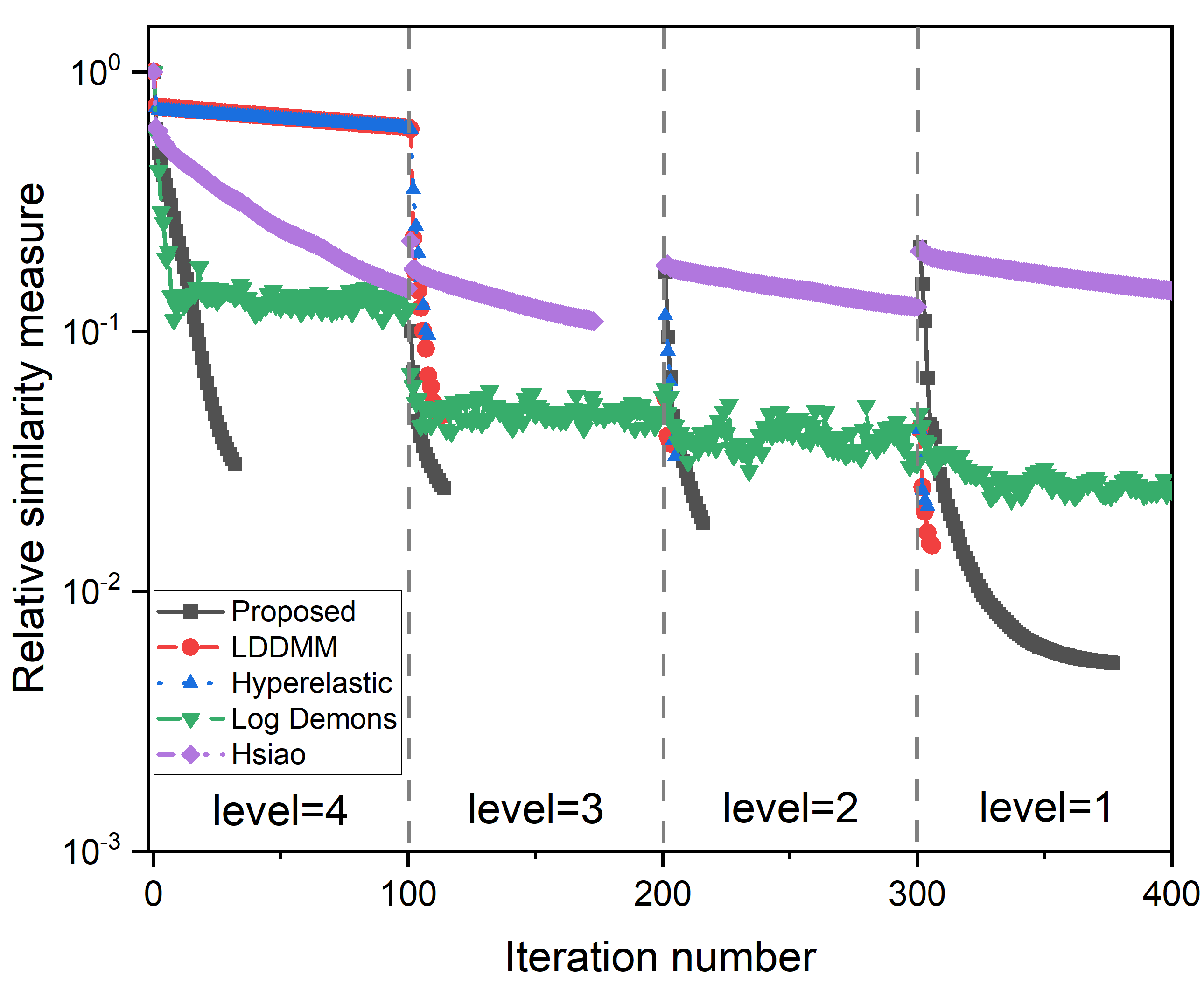}
        \end{minipage}
    }%
    \caption{The relative similarity measure ($\rm{Re_{-}SSD}$) versus iteration number, where the y-coordinate is logarithmic.}
    \label{fig:three-ressd}
\end{figure}

\begin{figure}
\centering
    \subfigure[Bigcircle]{
        \begin{minipage}[t]{0.30\linewidth}
            \centering
            \includegraphics[width=4.8cm]{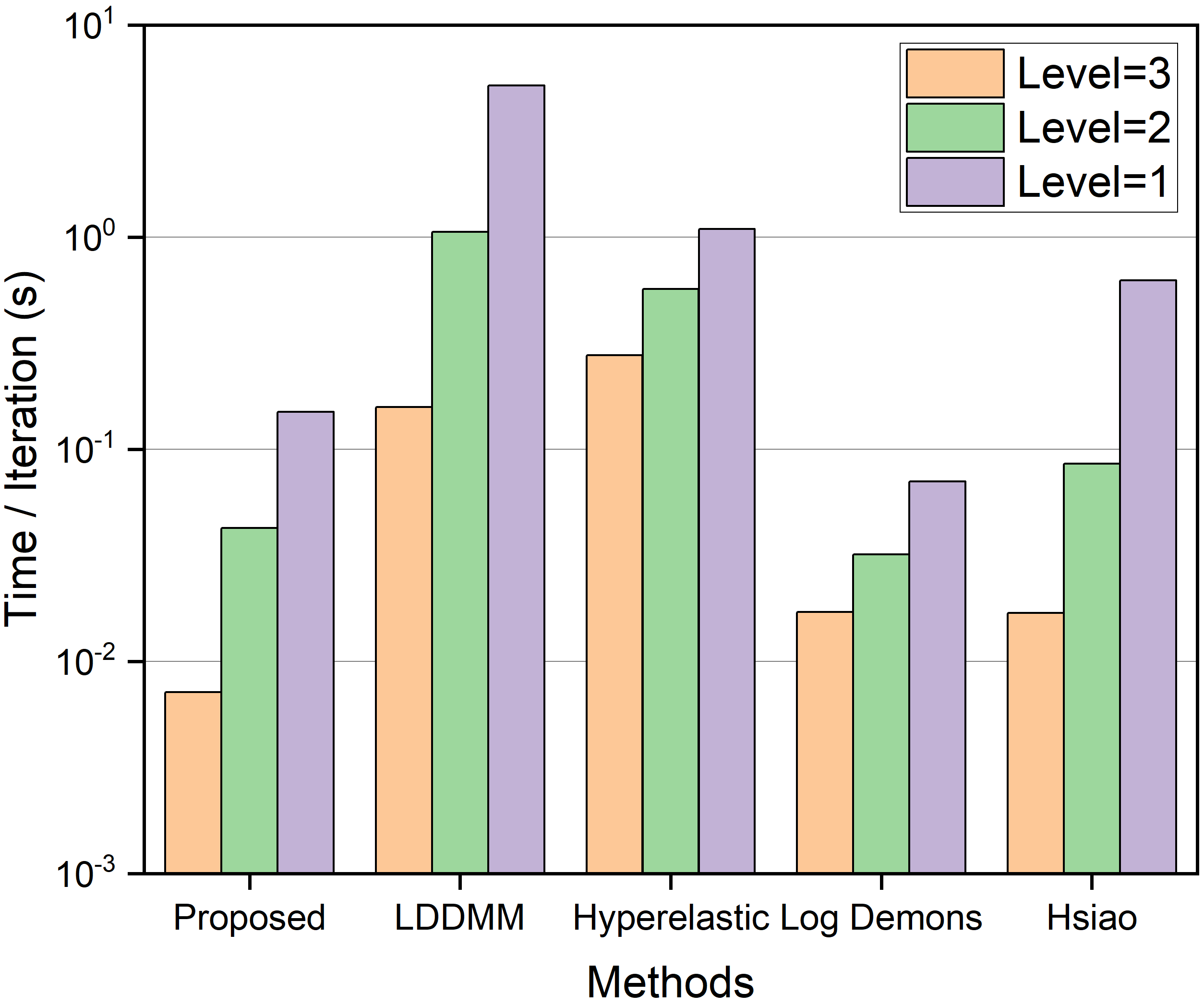}
        \end{minipage}
    }%
    \subfigure[Pineapple]{
        \begin{minipage}[t]{0.30\linewidth}
            \centering
            \includegraphics[width=4.8cm]{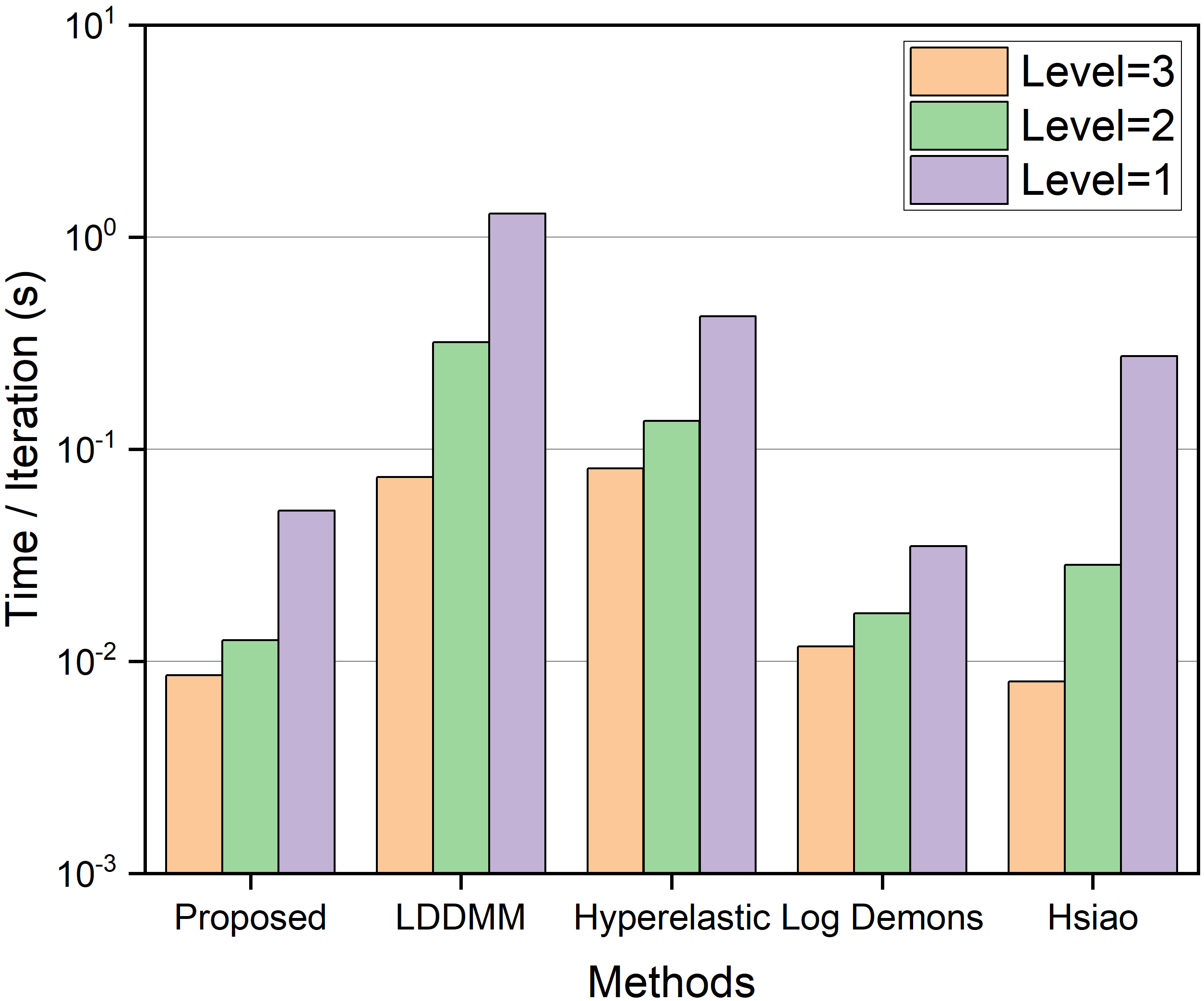}
        \end{minipage}
    }%
    \subfigure[Chest]{
        \begin{minipage}[t]{0.30\linewidth}
            \centering
            \includegraphics[width=4.8cm]{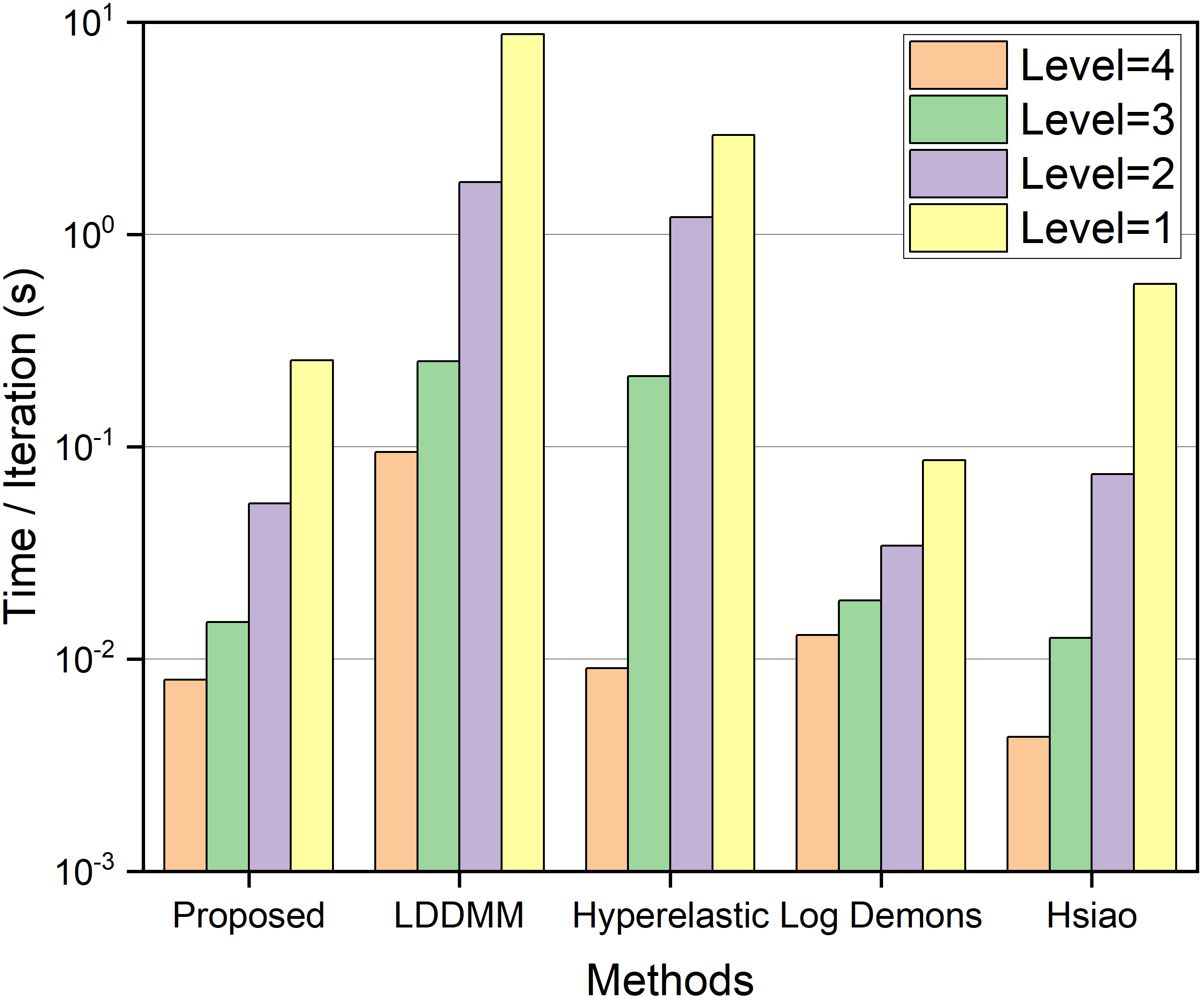}
        \end{minipage}
    }%
    \caption{Average time per iteration (CPU time / number of iterations in each level), where the y-coordinate is logarithmic.}
    \label{fig:three-average-time}
\end{figure}

\subsection{The generalization ability of the proposed model}
However, because traditional methods have limitations in obtaining model parameters and algorithm parameters, the ability to make generalizations refers to how accurately a model can predict results on new data, even if the data was obtained from a specific dataset. To evaluate the generalizability of the proposed method, we calculated the average of each model or algorithm parameter used in the aforementioned examples (11 sets of experiments) to obtain a set of universal parameters. The formula for calculating each universal parameter is as follows: 
\[{p}^\star = \frac{1}{n} \sum_{i=1}^{n} p_i,\] 
where $p_i \in \{\tau^i_1, \tau^i_2, \tau^i_3, \lambda^i, \gamma^i\}$ and $n= 11 $. We then applied these parameter configurations to make predictions on new images. It is widely recognized in the field of deep learning that model parameters can be learned and optimized by training on a large amount of data. We used these parameters to predict three sets of new images with different sources, resolutions, and deformation sizes (synthetic image: AR with a resolution of $128\times128$, natural image: Cameraman with a resolution of $256\times256$, medical image: Head with a resolution of $512\times512$). For comparison purposes, LDDMM, Hyper-elastic, and Log Demons methods underwent the same treatment. The relevant numerical results are as follows.


\begin{figure}[pos=h]
\centering
\includegraphics[width=1.0\linewidth]{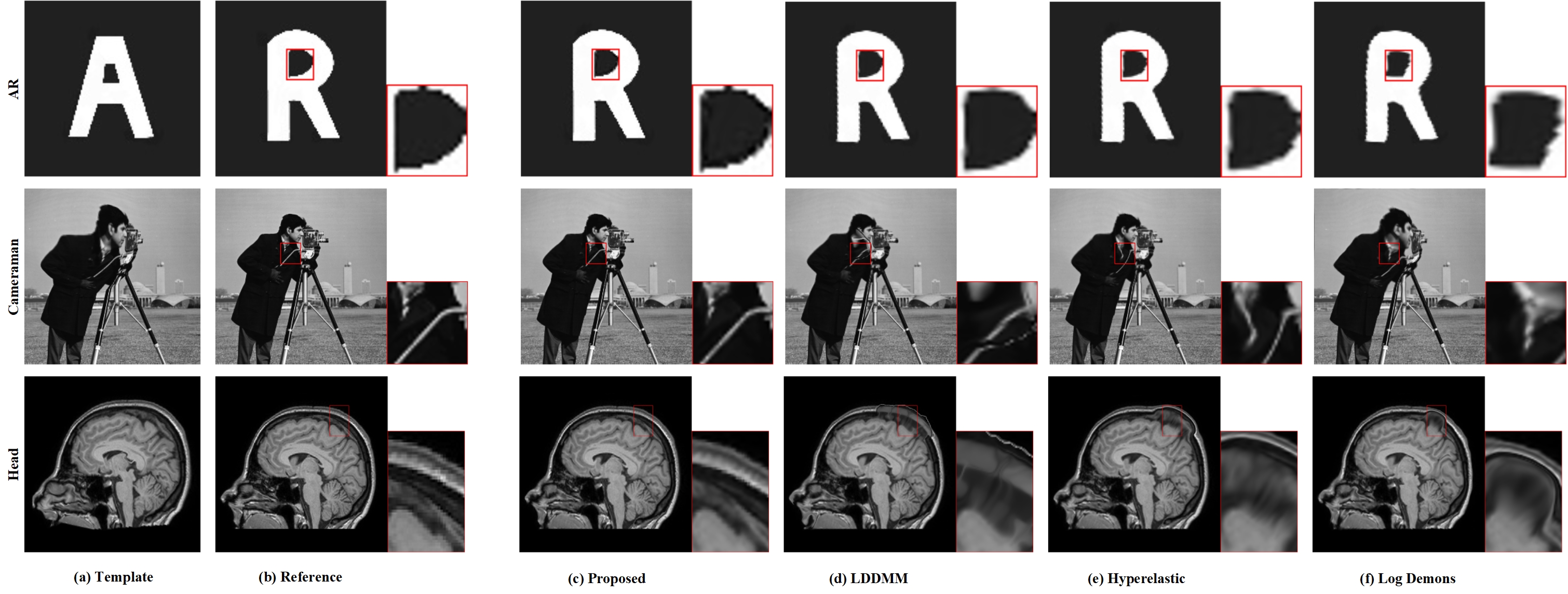}
\caption{Comparison of diffeomorphic models for new images using average parameters. (a) the template images; (b) the reference images; (c)
the registered results of the proposed model with the average parameters $ {\tau}_1^\star=0.9, {\tau}_2^\star=\text{7.2e-3}, {\tau}_3^\star=\text{6.2e-3}, {\lambda}^\star=0.8, {\gamma}^\star=133.1 $; (d) the registered results of the LDDMM with the average parameter ${\alpha}^\star=450$; (e) the registered results of the Hyper-elastic with the average parameters  ${\alpha}_1^\star=17$, ${\alpha}_2^\star=71$, ${\alpha}_3^\star=0$, ${\alpha}_4^\star=9$; (f) the registered results of the Log Demons with the average parameters ${\sigma}_{fluid}^\star=2.2$, ${\sigma}_{diffusion}^\star=1.23 $, ${\sigma}_{i}^\star=1$, ${\sigma}_{x}^\star =2.3$.}\label{fig:prediction-res}
\end{figure}

After closely examining the zoomed-in regions in \Cref{fig:prediction-res}, it is clear that our method consistently achieves satisfactory results when applied to three different sets of images using the average parameters. While the LDDMM and Hyper-elastic methods produce visually appealing results for synthetic images, their registration outcomes for natural and medical images do not align with the ground-truth reference images. Additionally, the results obtained using the Log Demons method are noticeably less satisfactory. Furthermore, \Cref{tab:new-experiments-predict} presents some quantitative metrics. It is important to note that our method achieves the best values for $\rm{Re_{-}SSD}$, \emph{ssim}, and \emph{psnr}, all of which have $ R_{min} > 0 $. This indicates that the deformation achieved by our model is diffeomorphic and leads to a superior registration performance. In contrast, the LDDMM method exhibits a significant variation in the range of the Jacobian determinant, with negative determinants observed in both natural and medical experiments, suggesting non-diffeomorphic deformation. It is worth mentioning that the LDDMM method requires the establishment of a velocity field within the padded domain to mitigate boundary effects \cite{MangA}. However, in the case of the Head and Cameraman images, the boundary significantly impacts registration accuracy and the Jacobian determinant of the LDDMM method. Despite extending the boundary of the domain by eight cells, it is still unable to prevent the occurrence of a negative Jacobian determinant. Similarly, the Log Demons method also yields negative Jacobian determinants in the Head example. Therefore, it can be concluded that when using the same average parameters on different new images, the LDDMM, Hyper-elastic, and Log Demons methods cannot guarantee satisfactory registration results for each experiment. In contrast, our method achieves favorable registration outcomes for three image pairs using the same average parameters, indicating its strong generalization capability.

As mentioned earlier, our approach can generate satisfactory results for new images using average (standard) parameters, despite being influenced by specific parameters. On the other hand, the LDDMM, Hyper-elastic, and Log Demons methods are highly sensitive to parameter variations under the same conditions. Therefore, our proposed model exhibits better generalization abilities compared to other advanced models.

\begin{table}[pos=H]
\centering
  \setlength{\tabcolsep}{6.6pt}
  \begin{lrbox}{\mybox} 
    \begin{tabular}{cccccccccc}
      \toprule[1.5pt]
       \textbf{Examples} & \textbf{Methods}  & $\overline{\det}(J(\bar{\bm{\varphi}}))$&$ R_{min} $& $\det_{\min}(J(\bar{\bm{\varphi}}))$ & $\det_{\max}(J(\bar{\bm{\varphi}}))$ & \emph{ssim}  & $\rm{Re_{-}SSD}$ &\emph{psnr} \\
      \hline
      \multirow{4}{*}{\textbf{AR}}      & Proposed    &  1.000    & + &   0.30 &    2.00  & \textbf{0.9930}  &  \textbf{0.18}\%  & \textbf{31.50} \\     
      
      & LDDMM        &    1.008   & + &  0.18  &  2.05    & 0.9819  &  1.61\%  & 21.96\\
      
      & Hyper-elastic &    0.947   & + &  0.41  &  1.77    & 0.9747  &  2.15\%  & 20.73 \\
      
      & Log Demons   &    1.051   & + &  0.36   &  2.11    & 0.9440   &  4.85\%  & 15.27 \\
      
      \hline
      \multirow{4}{*}{\textbf{Cameraman}}      & Proposed    &  1.000    & + &   0.44 &    2.48  & \textbf{0.9659}  &  \textbf{1.12}\%  & \textbf{31.09} \\     
      
      & LDDMM        &    0.994   & - &  -1.77  &  19.29    & 0.9450  &  5.80\%  & 23.93\\
      
      & Hyper-elastic &    0.996   & + &  0.40  &  2.40    & 0.9336  &  6.16\%  & 23.68 \\
      
      & Log Demons   &    1.054   & + &  0.23   &  2.65    & 0.8798   &  19.85\%  & 14.43 \\ 
  
      \hline
      \multirow{4}{*}{\textbf{Head}}      & Proposed    &  1.000    & + &   0.20  &    3.39  & \textbf{0.9402}  &  \textbf{2.06}\%  & \textbf{22.28} \\     
      
      & LDDMM        &    1.001   & - &  -3.10  &  30.35    & 0.9105  &  6.29\%  & 17.40\\
      
      & Hyper-elastic &   1.009   & + &  0.16  &  3.95    & 0.8782  &  11.28\%  & 14.87 \\
      
      & Log Demons   &    1.107   & - &  -0.07   &  6.01    & 0.8955   &  4.86\%  & 16.23 \\
      \toprule[1.5pt]
    \end{tabular}
  \end{lrbox}
  \caption{The quantitative evaluation comparisons of the proposed, LDDMM, Hyper-elastic, and Log Demons models. The best metrics values are highlighted by \textbf{bold}.}
  \scalebox{0.86}{\usebox{\mybox}}\label{tab:new-experiments-predict}
\end{table}

\subsection{3D registration experiment}
In \Cref{fig:3D} and \Cref{table3D}, we evaluate our method against state-of-the-art diffeomorphic methods such as LDDMM, Log Demons, and Hsiao models on Brain images of $ 128 \times 128 \times 128 $. For this experiment, we employ a multilevel strategy with ${L}=5$ and set $\text{MaxIter}= 200$ for all models. 

\begin{figure}[h]
  \begin{center}
    \includegraphics[width=14cm]{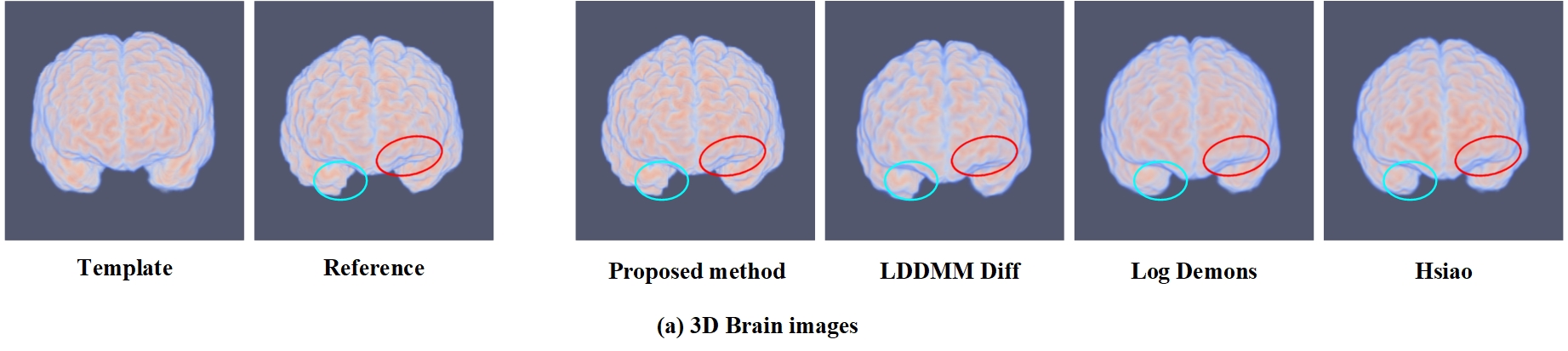}
    \includegraphics[width=14cm]{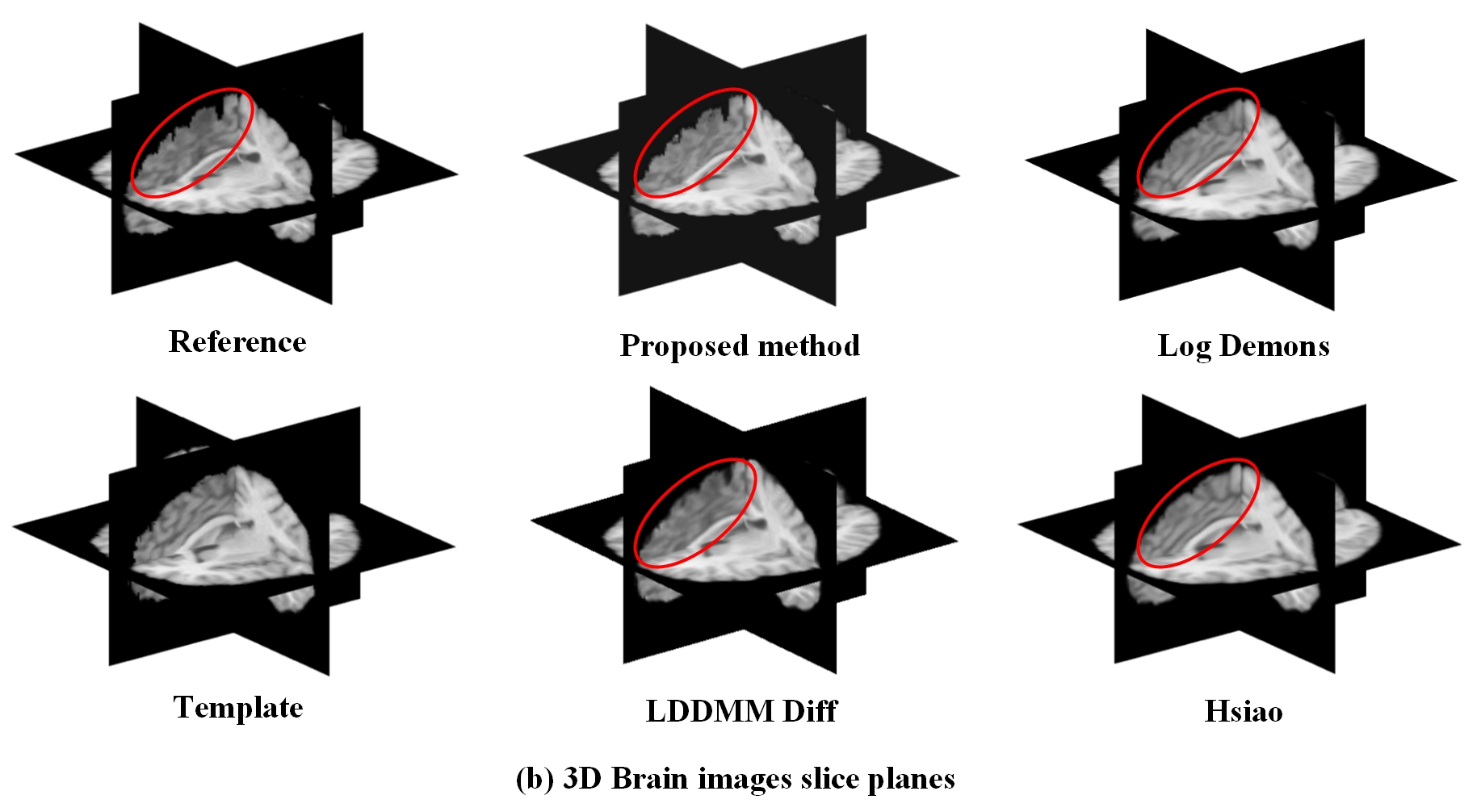}      
  \end{center}
  \caption{3D visualization of the registration problem for the Brain images. (a) shows the reference image, template image and deformed template images generated by these four models. (b) visualize the registered images from slice planes.}\label{fig:3D}
\end{figure}

Observing the surface of the brain in \Cref{fig:3D}, particularly the highlighted elliptical region, it is clear that our registration scheme produces results that are closer to the reference image compared to other methods. This indicates that our method is more effective in dealing with complicated details. The results presented in \Cref{table3D} demonstrate that all methods achieve diffeomorphic transformations, and our approach demonstrates a relatively narrow range of variation in the Jacobian determinant ($\det(\nabla \bar{\bm{\varphi}}) \in [0.22, 2.26]$). As shown in \Cref{table3D}, our proposed model consistently outperforms the other methods in terms of scores $\rm{Re_{-}SSD}$, \emph{ssim}, and \emph{psnr}. Furthermore, the Log Demons method has the lowest computational cost since it does not require solving the system of linear equations, which is typically the bottleneck in large-scale 3D registration problems. Among the remaining methods, our proposed approach also has a lower time cost, completing in only 1091.96 seconds. Therefore, this example demonstrates the applicability of our approach to 3D image registration problems, yielding top-ranking results with diffeomorphic transformations and minimal volume change.

\begin{table}[pos=H]
  \centering
   \setlength{\tabcolsep}{5pt}
  \begin{lrbox}{\mybox} 
    \begin{tabular}{cccccccccc}
      \toprule[1.5pt]
      \textbf{Example} & \textbf{Methods}  & $\overline{\det}(J(\bar{\bm{\varphi}}))$&$ R_{min} $& $\det_{\min}(J(\bar{\bm{\varphi}}))$ & $\det_{\max}(J(\bar{\bm{\varphi}}))$ & \emph{ssim}  & $\rm{Re_{-}SSD}$ &\emph{psnr}& Time(s)\\
      \hline
      \multirow{4}{*}{\textbf{Brain}}  
      & Proposed    &   1.000  & + &  0.2201  &2.26  & \textbf{0.9865} & \textbf{2.25}\%  & \textbf{24.64} & 1091.96\\
      & LDDMM       &  0.959      &+ & 0.0008 & 3.14 & 0.9822 & 3.19\% &22.99  & 1961.95 \\
      & Log Demons       &    1.001    &+ & 0.0315 & 2.80 & 0.8368 & 22.13\% & 14.90 & \textbf{398.43} \\
      & Hsiao        &    1.001    &+ & 0.6881 & 2.08 & 0.8560 & 25.46\% & 14.18 & 4698.91 \\
      \toprule[1.5pt]
    \end{tabular}
  \end{lrbox}
  \caption{The quantitative evaluation and time comparisons of the proposed and diffeomorphic models for 3D experiment. The best metrics values are highlighted
by the \textbf{bold}.}
  \scalebox{0.86}{\usebox{\mybox}}\label{table3D} 
\end{table}

\section{Conclusions}\label{sec:Section6}
In this paper, we propose a new registration model that incorporates a relaxation constraint on the Jacobian equation $\det(\nabla\bm{\varphi}(\bm{x})) = f(\bm{x})>0$. This constraint allows us to obtain smooth and diffeomorphic mappings for image registration problems involving large transformations. We observe that the relaxation function $f(\bm{x})$ can restrict the range of the Jacobian determinant $\det(\nabla\bm{\varphi}(\bm{x}))$. Therefore, instead of directly controlling the Jacobian determinant, we propose controlling $f(\bm{x})$ as an alternative approach that achieves the same goal indirectly. To ensure diffeomorphic and volume-preserving deformations, we introduce a novel penalty term $\int_{\Omega}\phi(f(\bm{x}))d\bm{x}$ that automatically controls the relaxation function such that $f(\bm{x})$ is positive and as close to one as possible. Additionally, we incorporate a regularization term $\int_{\Omega} \|\nabla f(\bm{x})\| ^2 d\bm{x}$ to enhance the smoothness of $f(\bm{x})$, thus indirectly improving the smoothness of the deformation. Furthermore, we analyze the existence of an optimal solution for the proposed variational model and present a numerical algorithm that combines penalty-splitting and multilevel schemes. This algorithm is capable of automatically detecting and correcting grid foldings. Finally, we conduct numerical experiments to demonstrate that the proposed model with the penalty term generates on average volume-preserving and smoother transformations. We also provide convergence comparisons between our algorithm and other existing algorithms. The results show that the proposed model, compared to classical registration models, produces diffeomorphic transformations and achieves better performance for 2D and 3D images with large deformations. In future work, we plan to extend our scheme to a multimodal registration model.

\section*{Acknowledgments}
\addcontentsline{toc}{section}{Acknowledgments}
This work was supported by the National Natural Science Foundation of China (grant numbers 11771369, 12322117, 12226008); the Education Bureau of Hunan Province, China (grant number 22A0119); the Natural Science Foundation of Hunan Province, China (grant numbers 2018JJ2375, 2018XK2304), and the Postgraduate Scientific Research Innovation Project of Xiangtan University (grant number XDCX2021B101), and National Key Research \& Development Program of China (Nos. 2023YFA1009300 and 2022YFC2504300).


\printcredits

\bibliographystyle{cas-model2-names}

\bibliography{ImgReg-Refs}



\end{document}